\documentclass[11pt,oneside,english,reqno]{amsart}
\usepackage[T1]{fontenc}
\usepackage[utf8]{inputenc}
\usepackage[a4paper]{geometry}
\usepackage{mathrsfs}
\usepackage{amstext}
\usepackage{amsthm}
\usepackage{aliascnt}
\usepackage{amssymb}
\usepackage{color}
\usepackage[dvipsnames]{xcolor}
\usepackage{tikz-cd}
\usetikzlibrary{cd}

\usepackage[colorlinks,linkcolor=blue, urlcolor  = blue,citecolor=blue,pagebackref,hypertexnames=true, breaklinks]{hyperref}
\usepackage{dsfont}
\usepackage{enumerate}
\usepackage{verbatim} 
\usepackage{stmaryrd}
\usepackage{enumitem}
\usepackage{graphicx}
\usepackage{todonotes}
\usepackage{xcolor}
\usepackage{cleveref} %

\usepackage{float}
\usepackage{booktabs}

\usepackage{algorithm}
\usepackage{algorithmic}

\usepackage{wrapfig}
\usepackage{shuffle}

\usepackage{caption}
\usepackage{subcaption}
\usepackage{bbm}

\newcommand{\argmin}{\operatornamewithlimits{argmin}}

\usepackage{xstring}

\usepackage{tikz-cd}

\makeatletter

\numberwithin{equation}{section}
\numberwithin{figure}{section}

\theoremstyle{plain}
\newtheorem{thm}{Theorem}[section]

\newaliascnt{lem}{thm}
\newtheorem{lem}[lem]{Lemma}
\aliascntresetthe{lem}

\newaliascnt{prop}{thm}
\newtheorem{prop}[prop]{Proposition}
\aliascntresetthe{prop}

\newaliascnt{cor}{thm}
\newtheorem{cor}[cor]{Corollary}
\aliascntresetthe{cor}

\newaliascnt{example}{thm}
\newtheorem{example}[example]{Example}
\aliascntresetthe{example}

\newaliascnt{hyp}{thm}
\newtheorem{hyp}[hyp]{Hypothesis}
\aliascntresetthe{hyp}

\theoremstyle{definition}

\newaliascnt{defn}{thm}
\newtheorem{defn}[defn]{Definition}
\aliascntresetthe{defn}

\newaliascnt{nota}{thm}
\newtheorem{nota}[nota]{Notation}
\aliascntresetthe{nota}

\theoremstyle{remark}

\newaliascnt{rem}{thm}
\newtheorem{rem}[rem]{Remark}
\aliascntresetthe{rem}

\crefname{thm}{theorem}{theorems}
\Crefname{thm}{Theorem}{Theorems}

\crefname{lem}{lemma}{lemmas}
\Crefname{lem}{Lemma}{Lemmas}

\crefname{prop}{proposition}{propositions}
\Crefname{prop}{Proposition}{Propositions}

\crefname{cor}{corollary}{corollaries}
\Crefname{cor}{Corollary}{Corollaries}

\crefname{example}{example}{examples}
\Crefname{example}{Example}{Examples}

\crefname{hyp}{hypothesis}{hypotheses}
\Crefname{hyp}{Hypothesis}{Hypotheses}

\crefname{defn}{definition}{definitions}
\Crefname{defn}{Definition}{Definitions}

\crefname{nota}{notation}{notations}
\Crefname{nota}{Notation}{Notations}

\crefname{rem}{remark}{remarks}
\Crefname{rem}{Remark}{Remarks}

\makeatother

\usepackage{babel}

\newcommand{\Vone}{\mathcal{V}^1([0,T];\RR^m)}

\newcommand{\cA}{\mathcal{A}}

\newcommand{\cC}{\mathcal{C}}
\newcommand{\cD}{\mathcal{D}}

\newcommand{\cH}{\mathcal{H}}

\newcommand{\cK}{\mathcal{K}}
\newcommand{\cL}{\mathcal{L}}

\newcommand{\cT}{\mathcal{T}}

\newcommand{\cV}{\mathcal{V}}
\newcommand{\cW}{\mathcal{W}}

\definecolor{darkred}{rgb}{0.7,0.1,0.1}
\definecolor{darkblue}{rgb}{0.4,0.1,0.8}
\definecolor{darkgreen}{rgb}{0.1,0.7,0.1}
\definecolor{franck}{rgb}{0,0,1}
\definecolor{pagebackground}{rgb}{1,1,1}
\definecolor{pageforeground}{rgb}{0,0,0}

\colorlet{symbols}{blue!90!black}
\colorlet{connection}{red!30!black}
\colorlet{boxcolor}{blue!50!black}

\newcommand{\NN}{\mathbb{N}}

\newcommand{\RR}{\mathbb{R}}

\newcommand{\TT}{\mathbb{T}}

\newcommand{\bK}{\mathbf{K}}

\newcommand{\bw}{\mathbf{w}}

\newcommand{\bx}{\mathbf{x}}
\newcommand{\bY}{\mathbf{Y}}
\newcommand{\by}{\mathbf{y}}
\newcommand{\bZ}{\mathbf{Z}}
\newcommand{\bz}{\mathbf{z}}

 \newcommand{\VSig}[2]{\mathrm{VSig}(#1;#2)}

\definecolor{dg}{rgb}{0, 0.5, 0}
\definecolor{dp}{rgb}{0.50, 0, 0.40}

\newcommand{\la}{\langle}
\newcommand{\ra}{\rangle}
\newcommand{\Id}{\mathrm{Id}}

\newcommand{\vertiii}[1]{{\left\vert\kern-0.25ex\left\vert\kern-0.25ex\left\vert #1 
		\right\vert\kern-0.25ex\right\vert\kern-0.25ex\right\vert}}

\newcommand{\dd}{\mathop{}\!\mathrm{d}}

\newcommand{\conv}[6]{%
  \left[#1 \ast #2\right]%
  ^{%
    #3%
    \if\relax\detokenize{#3}\relax %
    \else
      \if\relax\detokenize{#4}\relax
      \else ,\fi
      #4%
    \fi
  }%
  _{%
    #5%
    \if\relax\detokenize{#5}\relax %
    \else
      \if\relax\detokenize{#6}\relax
      \else ,\fi
      #6%
    \fi
  }%
}

\newcommand{\oconv}[6]{%
  \left[#1 \oast #2^{#3}\right]^{#4}%
  _{%
    #5%
    \if\relax\detokenize{#5}\relax %
    \else
      \if\relax\detokenize{#6}\relax
      \else ,\fi
      #6%
    \fi
  }%
}

\makeatletter
\renewcommand{\l@section}{\@tocline{1}{0pt}{0em}{1.5em}{}}
\renewcommand{\l@subsection}{\@tocline{2}{0pt}{2em}{3em}{}}
\renewcommand{\l@subsubsection}{\@tocline{3}{0pt}{4em}{5em}{}}
\makeatother

\begin{document}

\title[
The Volterra signature %
]{The Volterra signature
}

\author{Paul P. Hager}
\address{Paul P. Hager, Department of Statistics and Operations Research, University of Vienna,
Kolingasse 14--16, 1090 Vienna, Austria}
\email{paul.peter.hager@univie.ac.at}

\author{Fabian N. Harang}
\address{Fabian N. Harang, Department of Economics, BI Norwegian Business School, Nydalsveien 37, 0484 Oslo, Norway}
\email{fabian.a.harang@bi.no}

\author{Luca Pelizzari}
\address{Luca Pelizzari, Department of Statistics and Operations Research, University of Vienna,
Kolingasse 14--16, 1090 Vienna, Austria}
\email{luca.pelizzari@univie.ac.at}

\author{Samy Tindel}
\address{Samy Tindel, Department of Mathematics, Purdue University,
150 N. University Street, West Lafayette, IN 47907--2067, USA}
\email{stindel@purdue.edu}

\date{\today}

\begin{abstract}

Modern approaches for learning from non-Markovian time series, such as recurrent neural networks, neural controlled differential equations or transformers, typically rely on implicit memory mechanisms that can be difficult to interpret or to train over long horizons.
We propose the \emph{Volterra signature} \(\VSig{x}{K}\) as a principled, explicit feature representation for history-dependent systems.
By developing the input path \(x\) weighted by a temporal kernel \(K\) into the tensor algebra, we leverage the associated Volterra--Chen identity to derive rigorous learning-theoretic guarantees.
Specifically, we prove an \emph{injectivity} statement (identifiability under augmentation) that leads to a 
\emph{universal approximation} theorem on the infinite dimensional path space, which in certain cases is achieved by \emph{linear functionals} of $\VSig{x}{K}$. 
Moreover, we demonstrate applicability of the \emph{kernel trick} by showing that the inner product associated with Volterra signatures admits a closed characterization via a two-parameter integral equation, enabling numerical methods from PDEs for computation. 
For a large class of exponential-type kernels, \(\VSig{x}{K}\) solves a linear state-space ODE in the tensor algebra.
Combined with inherent invariance to time reparameterization, these results position the Volterra signature as a robust, computationally tractable 
feature map for data science.
We demonstrate its efficacy in dynamic learning tasks on real and synthetic data, where it consistently  improves  classical path signature baselines.
\end{abstract}

\keywords{Signatures, machine learning, memory, Volterra equations, Volterra kernel,  rough paths theory}

\thanks{\emph{AMS 2020 Mathematics Subject Classification:} 60L10, 60L70, 45D05. \\
\emph{Acknowledgments.} 
FH: This work was supported by the SURE-AI Centre grant 357482, Research Council of Norway. All authors would like to thank Eduardo Abi Jaber for helpful discussions.}

\maketitle

\section{Introduction}

\emph{Memory-effects} in real-world data streams are ubiquitous: their future evolution depends not only on the current state but also on the \emph{history} of the signal, often in intricate ways, with different importance assigned to distant versus recent changes and to periods corresponding to different regimes.
To name only a few examples of history-dependent phenomena, we mention biological and neuroscientific systems exhibiting synaptic plasticity and delay effects~\cite{Gerstner2002,Kou08}, memory in disease spread models~\cite{BCF19}, the long- and short-range dependence observed in realized volatility~\cite{comte1996long,Gatheral2018,guyon2023volatility} and a variety of engineering and signal-processing applications featuring long or short memory~\cite{Boyd1985,Leland1994,BB08,CC14}.

To learn such phenomena from data, increasingly complex architectures such as recurrent neural networks (RNNs) \cite{Elman1990, Jordan1997SerialOA},
long short-term memory networks (LSTMs) \cite{HochreiterSchmidhuber1997}, Transformers \cite{Vaswani2017,wang2020linformer}, and more recently structured State Space Models (SSMs) like S4 and Mamba \cite{gu2022efficiently,gu2023mamba,dao2024transformers}
have been introduced and widely employed. These approaches emulate memory \textit{implicitly} by storing information about the past in hidden states or attention matrices.
While powerful, they often function as highly parameterized ``black boxes'' \cite{Lipton2016,Rudin2019}:
the learned dependence structure is difficult to interpret, and training can be notoriously unstable over long horizons due to vanishing or exploding gradients
\cite{BengioSimardFrasconi1994,PascanuMikolovBengio2013}. Moreover, implicit memory models may require large amounts of data to recover decay structures that can be
encoded a priori in scientific settings \cite{RaissiPerdikarisKarniadakis2019,KarniadakisEtAl2021}.

In contrast, a classical mathematical approach to modeling systems with memory is via \emph{Volterra} dynamics, where a kernel \(K(t,s)\) explicitly encodes how past inputs at time \(s<t\) influence the present at time \(t\). To make this concrete in discrete time, consider a sequence of innovations of a signal \((\Delta x_{t_i})_{i=1,\dots,n}\) driving the system; in our context, these are the increments of an incoming \emph{data stream}. Information from the past is then propagated forward through accumulation,
\begin{equation}\label{eq:first_order_memory}
    \sum_{i\le j} K(t_j,t_i)\,\Delta x_{t_i}.
\end{equation}
This quantity can be interpreted as a (first-order) memory state that influences the system at time \(t_j\).
Such a kernel-based formulation accounts for long-range effects through the off-diagonal decay of $K$ (e.g., exponential or power-law decay), short-range effects through its near-diagonal behavior (e.g., fractional-type behavior), and oscillatory regimes through periodic terms.
Passing now to continuous time, a Volterra model driven by a signal \(x\) can be formulated through the integral equation
\begin{equation}\label{eq:vcde_intro}
    z_t = z_0 + \int_0^t f(z_s)\, K(t,s)\,\dd x_s, \qquad t \ge 0,
\end{equation}
where, for simplicity, we interpret \(\dd x_s\) as \(\frac{\dd}{\dd s}x_s\,\dd s\) (i.e., \(x\) is assumed differentiable).
Such models are standard in control theory \cite{corduneanu2008integral,gripenberg1990volterra} and in stochastic analysis \cite{pardoux1990stochastic,oksendal1993stochastic}. 
In domain sciences applications the vector field \(f\) is typically specified by a small number of interpretable parameters.
To turn \eqref{eq:vcde_intro} into a versatile machine-learning model, one can---similarly to successful approaches for neural ordinary and neural controlled differential equations \cite{chen2018neural,KidgerMorrillLyons2020}---replace \(f\) by a larger parametric model, leading to an instance of a \emph{neural integral equation} \cite{zappala2024learning, proemel2025neural}.

Here we propose a different, representation-based approach by constructing a universal feature map above \eqref{eq:vcde_intro}.
In this context, the \emph{path signature}---the sequence of iterated integrals as put forward by Chen \cite{Chen1957} and later by Lyons \cite{Lyons1998}---arises canonically in controlled and stochastic differential equations as a feature map 
that separates paths up to
reparameterization invariances \cite{HamLy10} and yields a universal linearizing feature map for continuous functionals of paths.
These properties have powered a vibrant line of work employing signatures in machine learning tasks on sequential data 
(see \cite{BayerDosReisHorvathOberhauser2025SignatureMethodsInFinance,McLeodLyons2025SignatureMethodsML, CrisanEtAl2026SAA2025} and references therein).
The applicability of the kernel trick~\cite{Kiraly2019,Salvi2021}---namely, the fact that the signature kernel can be computed directly and efficiently---has opened up the kernel universe and provided an additional boost to the field.
However, the classical signature is inherently tailored to {differential} (local) systems driven directly 
by the signal---it does not {natively} encode the additional, kernel-mediated (global) interactions that define a {Volterra} (history-dependent) system.

This article develops an object called \emph{Volterra signature} (denoted in the sequel by \(\mathrm{VSig}\)) for smooth paths, a kernel-weighted counterpart of the classical signature designed to 
add control over the memory effects in signature based modeling and learning.  
Informally, \(\mathrm{VSig}(x;K)\) contains higher-order extensions of the \(K\)-weighted memory state defined in~\eqref{eq:first_order_memory}.
Formally, it is the collection of iterated integrals arising from the Picard expansion of linear Volterra equations, i.e., of~\eqref{eq:vcde_intro} for \emph{all} linear \(f\).

Our main contributions can be summarized as follows:

\begin{itemize}
    \item \emph{Algebraic and analytic structure.}
    We construct \(\mathrm{VSig}\) in the smooth setting and prove a Chen-type identity: \(\mathrm{VSig}\) is closed under a convolution product that encodes temporal concatenation for Volterra paths.
    This structure incorporates the propagation of information from the past when composing features across time segments.
    We further demonstrate that solutions to linear Volterra equations admit a full expansion in terms of the Volterra signature, which thus plays the role of a multidimensional \emph{resolvent} for controlled Volterra equations.
    For a large class of exponential-type kernels (including sums of exponential and periodic kernels), we show that \(\mathrm{VSig}\) can be obtained by solving a state-space ODE.
    Moreover, for \(K(t,s)=\alpha e^{-\lambda (t-s)}\), we prove an invertible conversion formula to the classical signature.

    \item \emph{Invariance, injectivity, universality and the kernel trick:}
    We provide the theoretical foundation for using \(\mathrm{VSig}\) as a feature map by proving time-reparameterization invariance and injectivity (ensuring model identifiability under natural augmentations).
    This entails a Stone--Weierstrass-type universality theorem for approximating continuous functionals on path space.
    We also prove universality for linear functionals in the case of exponential kernels, leaving the general kernel case as an interesting open problem.
    To complete the picture, we show that the feature kernel induced by \(\mathrm{VSig}\) admits a closed-form characterization via a two-parameter integral equation, thereby enabling standard kernel methods.

    \item \emph{Numerical experiments:}
    We present a collection of experiments demonstrating the numerical efficacy of
the Volterra signature for learning problems with underlying dynamical
structure. We begin with a synthetic problem, where the goal is to learn the
solution map of a stochastic Volterra equation as a function of the driving
noise. We then move to real-world data and consider S\&P 500 time series, where
we forecast realized volatility. In the third experiment, we classify several
UEA time-series datasets \cite{bagnall2018uea}. In all settings, incorporating
a temporal kernel yields substantial gains in predictive accuracy over classical
signature baselines. The code required to reproduce all experiments is available at
\url{https://github.com/lucapelizzari/Volterra_signature_learning}.

\end{itemize}

For the practical applicability of a feature map, it is of course crucial to have efficient algorithms for its computation.
For the Volterra signature, these algorithmic aspects are intricate, as they require resolving both the algebraic and analytic components of the Chen identity.
We therefore address these issues in a companion paper \cite{ii_part}, where we derive efficient algorithms applicable to a wide range of kernels, including fractional and gamma kernels. The algorithms presented in \cite{ii_part} are implemented in the publicly
available package \texttt{tensordev}; see
\url{https://github.com/hagerpa/tensordev}.
The latter supports general higher-order schemes for computing the Volterra signature, which due to the underlying Volterra structure exhibit quadratic scaling in the number of time steps. For convolutional kernels, however, this cost can be reduced to almost linear complexity, of order $O(J\log(J))$, by exploiting fast Fourier transforms (FFT). For exponential-type kernels, this can be pushed further: as foreshadowed by the state-space ODEs presented in this paper, the Volterra signature can be computed with a cost that scales \emph{linearly in the number of time steps}.

While the above covers the most essential questions for the employment of \(\mathrm{VSig}\) in data science, it also opens a wide-ranging programme of mathematical questions and possibilities for tackling challenging learning tasks.
To name a few directions on the theoretical side, we have not yet developed a precise algebraic structure satisfied by the Volterra signature.
Some work in this direction has been done in \cite{HTW23,BrunedKatsetsiadis2023VolterraRough}; however, the representations of Volterra signatures for exponential kernels given below suggest that simpler structures may arise in this case.
Another question of interest is to characterize the class of paths with trivial Volterra signature, which again deviates from the tree-like equivalence in the classical signature setting.
Finally, when transitioning from smooth drivers \(x\) to stochastic processes such as Brownian motion, the Volterra signature can still be defined via the canonical lift of \cite{HT21}.
From the computational perspective adopted in this paper, where the signal \(x\) is obtained by smooth interpolation, a further pressing question concerns Wong--Zakai-type limit results, with renormalization required in the singular-kernel case, as in \cite{hairer2015wong,BFGMS20}.

Beyond the numerical examples studied in this paper, there are several further application areas in which the Volterra signature may offer advantages over the classical signature. For instance, we expect \(\mathrm{VSig}\) to perform particularly well in dynamical learning tasks, such as \emph{reinforcement learning}, and in settings where the underlying structure exhibits stationarity, long-range dependence, or fractional scaling relations. Furthermore, damped periodic kernels (Prony) within the Volterra signature framework appear especially promising, and they could lead to a powerful class of path-dependent models in mathematical finance. Finally, and perhaps most intriguingly, further exploration of kernel learning within the Volterra signature framework provides a natural next step, for instance in path-dependent regression. Conditional time-series generation in the spirit of \cite{liao2024sig} provides a natural application, where Volterra signature kernels can be used to define MMD-type losses with explicit control of the memory structure through the choice of Volterra kernel.

\noindent\textbf{Related literature.}
The definition of the Volterra signature and its Chen relation are rooted in the analytic construction of Volterra rough paths in \cite{HT21,HTW23}.
In extension to classical rough path theory, these works establish continuity properties of the solution map for singular Volterra equations.
As such, they are related to other extensions, most notably those based on regularity structures \cite{BFGMS20, BrunedKatsetsiadis2023VolterraRough}.
However, these analytic results will not play a direct role here, as we focus primarily on the nonsingular case and on the expressive properties of the \emph{full lift}.
While there are general approaches to defining features from regularity structures \cite{chevyrev2024feature}, this has not yet been developed for the Volterra setting.
By contrast, other extensions of rough path theory, such as branched rough paths, have already been suggested for use in data science \cite{diehl2023generalized,ali2025branched}.

More closely related to the data-science context, the work \cite{jaber2025exponentially} is the closest to the present paper.
It introduces a modification of the signature with \emph{exponentially fading memory} (EFM), providing an alternative way to incorporate an exponential kernel.
This approach exploits the fact that the Volterra equation \eqref{eq:vcde_intro} with \(K(t,s)=\alpha e^{-\lambda (t-s)}\) can be rewritten in a Markovian (mean-reverting) dynamical form, which naturally induces dynamics on the signature group.
While this has the strong advantage that the EFM signature retains the classical algebraic properties of the signature, it is tightly tied to the exponential-kernel setting.
In contrast, our approach lifts the Volterra equation directly, which alters the algebraic structure, but applies to any suitably integrable memory-kernel.
Finally, on the architectural side, our formulation of finite state space kernels provides a rigorous tensor-algebraic counterpart to the linear continuous-time memory mechanisms driving recent deep learning state space models \cite{gu2022efficiently,gu2023mamba}, a connection we formalize in Section \ref{sec:conection to deep learning}.

\noindent\textbf{Organization of the paper.}
We begin by formally defining the Volterra signature and establishing its convolutional Chen property in \Cref{sec: the volterra signature}, including a brief preliminary on tensor-algebra notation.
In \Cref{sec:fundamental} we characterize \(\mathrm{VSig}\) as the solution to a fundamental linear Volterra equation in the tensor algebra.
In \Cref{sec:dyn_exp} we prove a dynamical representation of \(\mathrm{VSig}\) for a large class of exponential-type kernels.
Section \ref{sec: Sig as feature map} then develops key properties of reparameterization invariance, injectivity, universality and the applicability of the kernel trick.
Finally, in Section \ref{sec:learning Volterra sde} we demonstrate the effectiveness of the Volterra signature in numerical experiments on synthetic and real data.

\section{The Volterra signature}\label{sec: the volterra signature}

This section is devoted to introduce the definition and fundamental properties of the Volterra signature studied in this article. 
While some notions here are borrowed from~\cite{HT21}, we focus exclusively on smooth Volterra paths and signatures. 
We will present a bottom-up construction of the Volterra signature, starting from the coordinate-wise definition as a sequence of iterated Volterra integrals. In analogy with the classical signature, we show that the intrinsic definition can be characterized via a fundamental Volterra equation in the tensor algebra. This formulation immediately yields the associated \emph{Chen's relation} with respect to a suitable convolution-type tensor product $\oast$. We justify the terminology ``fundamental'' by showing that the Volterra signature acts as a resolvent for linear controlled Volterra equations. Finally, we study a class of \emph{finite-state-space kernels} which allow one to realize Volterra signatures as solutions to a mean-reverting system of equations, which are later exploited in the numerical experiments of Section~\ref{sec:learning Volterra sde}.

\subsection{Preliminaries}\label{preliminaries}

Before defining our main objects of study, let us introduce some general notation which will be used throughout the article. We start by labeling our notation for simplexes.

\begin{defn}
The $n$-simplex in a hypercube $[s,t]^n$ is defined by 
\begin{equation}\label{a1}
    \Delta^n_{s,t} := \{(r_1,\ldots,r_n)\in [s,t]^n \,|\, r_1\leq \dots \leq r_n\,\}.
\end{equation}
Whenever the underlying interval $[s,t]$ is not relevant (or whenever it is clear from context), we simply write $\Delta^n$. 
 \end{defn}
 \noindent
 Next we will need the notion of words created from an alphabet in order to properly define our notion of signature.
\begin{defn}\label{def:words}
 For any $d\in\NN_{\ge1}$ we set $\cA_d:=\{1,\dots,d\}$ and call it an \emph{alphabet} with $d$ letters.
 A \emph{word} in the alphabet $\mathcal{A}_d$ of length $n\in\NN$ is given by  $w=i_1\cdots i_n$ where $ i_k\in \cA_d$ for all $k=1,\dots, n$.
 We denote by $\cW^n_d$ the set of all words of length $n\in\NN$ from the alphabet $\cA_d$. 
 For $n=0$ we write $\cW^0_d=\{\varnothing\}$, where $\varnothing$ is the empty word. We further let $\cW_d$ denote the set of all words over the alphabet $\cA_d$. 
 For a word $w = i_1\cdots i_n\in\cW$ we denote the length of the word by $|w| = n$, so that $\cW^n_d = \{ w\in\cW_d:\; |w| = n\}$. 
 Whenever the dimension $d$ is not important or otherwise well understood, we simply write $\cW$ or $\cA$. 
 Eventually we introduce the algebra $\RR\langle\cA_d\rangle= T(\text{vect}_\RR(\cA_d))$, the vector space generated by $\cW_d$.
\end{defn}
 \noindent
We now define concatenations of two words in $\cW$.
\begin{defn}\label{def:free_algebra}
On the set of words $\cW_d$ we introduce the \emph{concatenation product} by setting, for two words $w=i_1\cdots i_n\in\cW^n_d$ and $v=j_1\cdots j_m\in\cW^m_d$,
$$wv=i_1\cdots i_n j_1\cdots j_m\in \cW^{n+m}_d.$$
In particular, for concatenation with the empty word we set $w\varnothing=\varnothing w=w$.
In other words, $\cW_d$ forms the free non-commutative monoid with neutral element $\varnothing$ over the indeterminates $\mathcal{A}_d$.
The free algebra over $\mathcal{A}_d$ is then represented by
$$\RR\la \cA_d\ra:=\left\{\ell =\sum_{w\in \cW_d}\alpha_w w \;\bigg\vert\; \alpha_w\in\RR,\; \alpha_w\neq 0\text{ for only finitely many }w\in \cW_d\right\},$$
where the concatenation product on $\cW_d$ extends by bilinearity to a product on $\RR\la \cA_d\ra$.
\end{defn}

The algebraical dual of $\RR\la \cA_d\ra$ consist of infinite series of words, which will play a crucial role to use.
We will represent these objects as tensor series over the euclidean vector space $V = \RR^d$.
The basic notions we will use are summarized below.
\begin{defn}\label{def:extended_TA}
    Let $\{e_1, \dots, e_d\} \subset V$ be a basis.
    Set \(V^{\otimes 0}=\RR\) and for $n\in \NN_{\ge1}$ denote by \(V^{\otimes n}\) the \(n\)-fold tensor power of the vector space \(V\).
    For any word $w = i_1 \cdots i_n\in \cW_d$ we define $e_{w} = e_{i_1} \otimes \cdots \otimes e_{i_n} \in V^{\otimes n}$ and note that $\{ e_w \,\vert\, w\in \cW^n_d\}$ forms a basis of $V^{\otimes n}$.
    The \emph{extended tensor algebra} over $V$ is given by the direct product
    $$T((V)) := \prod_{n\ge0} V^{\otimes n},$$
    which forms a vectors space by componentwise summation and scalar multiplication.
    For element in $\bx \in T((V))$ we ill use the following equivalent notations as a sequence and as a formal series for the decomposition into its tensor \emph{levels}  $$\bx = (\bx^{(0)}, \bx^{(1)}, \dots) = \sum_{n=0}^\infty \bx^{(n)} \in T((V)),$$  so that $\bx^{(n)} \in V^{\otimes n}$ for all $n\in \NN$.
    We define the \emph{tensor product} $\otimes$ on $T((V))$ in terms of the concatenation product of basis elements, i.e., 
    $e_{w} \otimes e_{v} = e_{wv}$ for all $w,v\in\cW,$
    which extends to all of $T((V))$ by bilinearity.
    In particular, for ${\bf a}, \bx,\by\in T((V))$ we may write
$$\bz={\bf a} + \bx\otimes\by,\qquad \bz^{(n)}={\bf a}^{(n)} + \sum_{k=0}^{n}\bx^{(k)}\otimes \by^{(n-k)}\in V^{\otimes n},\qquad n\in\NN.$$
    \end{defn}

    \begin{defn}\label{def:proj_and_trunc}
        We write \(\pi_n:T(V)\to V^{\otimes n}\) for the canonical projection, so that $\pi_n(\bx) = \bx^{(n)}$ for all $\bx\in T((V))$ and $n\in \NN$.
        Furthermore, for $N\in\NN$ we define the \emph{tensor truncation} $$\pi_{\le N} : T((V)) \to T((V)), \qquad \bx \mapsto (\bx^{(0)}, \bx^{(1)}, \dots, \bx^{(N)}, 0, \dots).$$
        The image of the truncation map $T^N(V) := \pi_{\le N} T((V))$ is called the \emph{truncated tensor algebra}, which indeed forms an algebra under the truncated tensor product
        $$\bx \otimes_N \by := \pi_{\le N}(\bx \otimes \by), \qquad \bx,\by\in T^N(V).$$
    \end{defn}
    
    \begin{defn}
        We define a dual paring $$\langle \cdot, \cdot\rangle: \RR\la \cA_d\ra \times T((V))\to \RR,$$ by setting $\langle w, e_v \rangle
        = 1_{w=v}$ for all $v,w\in \cW_d$ and extending by bilinearity.
        In this vein we define components of a tensor series by setting
        $$\bx^w = \langle w, \bx \rangle, \qquad \bx\in T((V)),\quad  w\in \cW_d.$$
        In particular, any element $\ell = \sum_{k=1}^n \ell_k w_k \in \RR\la \cA_d\ra$ defines a linear functional on $T((V))$ by
        $$\langle \ell, \cdot \rangle : T((V))\to\RR, \quad \bx \mapsto \sum_{k=1}^n \ell_k \bx^{w_k}$$
    \end{defn}

The above is only one instance of a dual pairing within $T((V))$.
Another, important paring is given through the 
the following natural Hilbert-space in $T((V))$.

\newcommand{\cThilb}{\cT^2}
\begin{defn}\label{def:inner_pro}
    We define the subspace $\cThilb \subseteq T((V))$ by  \[\cThilb:=\left  \{\mathbf{x}\in T((V))\,\Big\vert\, \Vert \bx \Vert_{\cThilb}:= \sqrt{\la \bx,\bx \ra_{\cThilb}}<\infty \right \},
    \] where $\la \mathbf{x},\mathbf{y}\rangle_{\cThilb}:= \sum_{k\geq 0}\langle \mathbf{x}^{(k)},\mathbf{y}^{(k)}\rangle_{V^{\otimes k}}$ for all $\bx,\by \in T((V))$, and $\la \cdot,\cdot \ra_{V^{\otimes k}}$ is defined by the Hilbert-Schmidt inner product on $V^{\otimes k}$, i.e.  \[
    \la v,w\ra_{V^{\otimes k}}:=\prod_{l=1}^k\la v_l,w_l\ra, \qquad v=v_1\otimes \cdots \otimes v_k,\quad w=w_1\otimes \cdots \otimes w_k,
    \]
    where $\la v_l,w_l\ra = \la v_l,w_l\ra_V$ denotes the standard product on $V = \mathrm{span}\{e_1, \dots, e_d\}$.
\end{defn}

At last, we will introduce the following concept of extending linear maps on $V$ to linear operators acting on tensors, as it will be needed later for the expansion of linear Volterra equations.
\begin{defn}
Let $E \cong \RR^m$ be another vector space.
We denote by $\mathcal L(V, E)$ the set of all linear maps from $V$ to $E$.
When $V = E$ we simply write $\mathcal L(V, V) = \mathcal{L}(V)$ for the set of endomorphisms.
Next note that $\mathcal L(V, E)$ equipped with the usual operator norm forms a vector space isomorphic to  $\RR^{n\times m}$. 
Then for any \(A\in\mathcal L(V,\mathcal L(E))\) we define a linear map  $\widetilde{A}: \bigoplus_{n=0}^\infty V \to \mathcal L(E)$
by setting $\widetilde A(1)=\Id_E$ and 
\begin{equation}\label{eq:linear_tensor_maps}
\widetilde A(v_1\otimes\cdots\otimes v_n)=A(v_n)\cdots A(v_1), \qquad  v_1, \dots, v_n \in V, \quad n\in \NN_{\ge 1},
\end{equation}
and then extend by linearity.
\end{defn}

\subsection{Iterated Volterra integrals}\label{sec:smooth-v-sig}

This section is devoted to define our main object of study, that is iterated Volterra integrals of a Lipschitz continuous path $x$. Related to this notion, we will also specify what we mean by the Volterra signature. We begin by introducing the following shorthand.
\begin{nota}\label{hyp:smoothness} For a Lipschitz continuous path $x$ starting in $0$, which we denote by $x \in \cC^{0,1}([0,T],\RR^d)$, satisfying $x = \int_0^\cdot \dot{x}_t \dd{t}$, we will write $\dd x_t = \dot{x}_t\dd t$.
In particular, for $A\in \mathcal{L}(\RR^d,\RR^m)$ and an index $i\in\mathcal{A}_{m} = \{1, \dots, m\}$ we set
\begin{equation}\label{a2}
     A^i\dd x_t = \sum_{j=1}^d  A^{ij}(\dot x^j_t) \dd{t}.
\end{equation}
\end{nota}

\subsubsection{Basic definition of Volterra signature}
Our generic Volterra path is obtained by weighting a differentiable path by a kernel $K$. 
In the same line as $x$ is assumed to be differentiable, we will assume that the kernel is sufficiently regular.
Specifically, since the noise $x$ is assumed to be differentiable, it will suffice to suppose integrability of $K$ in the second variable uniformly over the first one. 
\newcommand{\Lkernel}{L^{\infty,1}(\Delta^2; \mathcal{L}(\RR^d;\RR^m))}
\newcommand{\Ltwokernel}{L^{\infty,2}(\Delta^2; \mathcal{L}(\RR^d;\RR^m))}

\begin{hyp}\label{def:kernel_class}
Consider a matrix-valued kernel $K: \Delta^2 \to \mathcal{L}(\RR^d, \RR^m)$. In the sequel we assume $K\in \Lkernel$, where the latter space stands for the set of measurable functions on $\Delta^2$ which satisfy $$\sup_{t\in[0,T]}\int_0^t |K(t,s)| \dd{s} < \infty,$$
    with $|\cdot|$  denoting here any matrix norm on $\mathcal{L}(\RR^d,\RR^m)$.
\end{hyp}

\begin{example}
    Clearly any continuous function $K:\Delta^2 \to \mathcal{L}(\RR^d;\RR^m)$ lies in the space $\Lkernel$ alluded to in Definition~\ref{def:kernel_class}.
    In particular, the constant kernel $K\equiv A\in \mathcal{L}(\RR^d;\RR^m)$, as well as sum of matrix exponentials kernel $$K(t,s) = \sum_{p=1}^q e^{-\lambda_p(t-s)} A_p,\qquad \lambda_p > 0, \quad A_p \in \RR^{m\times d}, \quad (s,t) \in \Delta^2,$$ which will be studied in detail in Section~\ref{sec:dyn_exp}, lie in this class.
\end{example}

\begin{example}
Another relevant class of kernels which fall under the banner of our Hypothesis~\ref{def:kernel_class} are those $K$'s admitting  at most a fractional singularity of order $\gamma \in(0,1)$. Otherwise stated, if we assume $|K(t,s)| \lesssim|t-s|^{-\gamma}$, then $K$ will be in $\Lkernel$.
    This type of kernel is also at the heart of the analysis for the rough setting in~\cite{HT21}.
    Note that this includes fractional kernels of the form $K(t,s) = (t-s)^{\beta-1} A$ for some $A\in \cL(\RR^d; \RR^m)$ and $\beta>0$.
\end{example}

\noindent
Matrix-valued kernels give us some extra freedom to model data with nontrivial dependence on the past. However, our setting obviously encompasses the case of scalar kernels considered in~\cite{HT21}.
We now set up a notation for this specific case.
\begin{nota}
     For fixed $d=m$ and a given one-dimensional kernel $k:\Delta^2 \to \RR$ satisfying Hypothesis~\ref{def:kernel_class}, we will identify $k(t,s)$ with the diagonal matrix
\begin{equation}\label{eq:diag}
         K(t,s) = \mathrm{diag}(k(t,s), \dots, k(t,s)) \in \RR^{m\times m}.
     \end{equation}
\end{nota}
With our notion of matrix-valued kernel in hand, we are now ready to introduce the most general form of Volterra signature considered in this paper. This object summarizes all the iterated integrals of the signal $x$ weighted by the kernel $K$.

\begin{defn}\label{def:vsig} Recall that the simplex $\Delta_{s,t}^n$ is defined by~\eqref{a1} and that the alphabet $\cA$ is introduced in Definition \ref{def:words}. Consider a signal $x\in\mathcal{C}^{0,1}([0,T]; \RR^d)$, as well as a kernel $K$ with values in $\cL(\RR^{d};\RR^{m})$ that satisfies Hypothesis~\ref{def:kernel_class}.
We define the \emph{Volterra signature} $\mathrm{VSig}$ component-wise for all $n\in\NN$, $i_1 \cdots i_n \in \mathcal{W}^n_m$ and $(s,t,\tau)\in\Delta^3$ by
\begin{equation}\label{eq:def_vsig_comp}\VSig{x}{K}^{i_1\cdots i_n,\tau}_{s,t} = \int_{\Delta^n_{s,t}} \prod_{l=1}^n K^{i_l}(r_{l+1},r_l)\dd{x_{r_l}},\end{equation}
with the convention $r_{n+1} = \tau$, where we also recall from Definition \ref{hyp:smoothness} that $ K^{i_l}\dd x$ denotes component $i_l$ in the  $\RR^m$ valued vector $K(\tau,\cdot)\dd{x_\cdot}$.
\end{defn}

\begin{rem}
    Note in particular that the word $w=i_1\cdots i_n$ in Definition \ref{def:vsig} is taken from the alphabet $\cA_m$, generated from the dimension of the product $K\dd x \in \RR^m $. This is in contrast to the case of classical signatures, where the words are generated over the alphabet $\cA_d$ coming from the $d$-dimensional nature  of the path $x$. Of course, in the case where $K$ takes the diagonal matrix form in \eqref{eq:diag}  with $m=d$, then the alphabet $\cA_m=\cA_d$. Therefore, in this situation, we keep the same intuitive understanding of the word related to the signal $x$. The flexibility introduced in allowing for matrix valued $K$ is often necessary in practical modeling, as illustrated for example in the theory of Ambit fields \cite{BarndorffNielsen2018AmbitStochastics}, or modeling of commodity derivatives (as illustrated for example in \cite{BenthKruehner2023EnergyCommodity}). 
\end{rem}
\newcommand{\Sig}[1]{\mathrm{Sig}(#1)}

\begin{rem}\label{rem:usual_sig}
In view of the above \Cref{prop:vsig_z} we readily observe that in case $K(\cdot, \cdot) \equiv A \in \mathcal{L}(\RR^d, \RR^m)$ 
the Volterra signature reduces to the usual signature of a smooth path $Ax$ (as introduced in \cite{chen1954iterated, Lyons1998}).
Indeed, for $n\in \mathbb{N}$ and $i_1\cdots i_n \in \cW_d^n$ it reads 
\begin{equation}\label{usual_sig_def}
    \VSig{x}{A}^{i_1\cdots i_n} = \mathrm{Sig}(Ax)_{s,t}^{i_1\cdots i_n} = \int_{\Delta_{s,t}^n} \prod_{l=1}^n A^{i_l}\dd x_{r_l},
    \end{equation}
The above will serve as a definition for the ``usual'' or ``classical'' signature.
\end{rem}
\begin{rem}
We stress that in general, unlike the usual signature of a path, $\VSig{x}{K}$ does not define a group-like element, i.e., does not admit a (simple) shuffle identity. See ~\cite[Remark 11]{HT21} for more details about this fact.
\end{rem}

\subsubsection{Intrinsic definition of Volterra signature}
In order to specify $\VSig{x}{K}$ in Definition~\ref{def:vsig}, we were given a signal $x$, a kernel $K$ and every word in $\cW=\cW_m$. However, in our smooth setting one can achieve a full description of the Volterra signature starting from its first level only. This first level is denoted by 
\begin{equation}\label{eq:first_level}
    z = \{ z^i= \VSig{x}{K}^i: i=1,\dots,d \}.
\end{equation}
Below we properly define paths of the form \eqref{eq:first_level} and see how they relate to the full Volterra signature.

\begin{defn}\label{def:volterra_path}
    We define the set of \emph{differentiable Volterra paths} $\Vone$ as the set of functions 
    $z:\Delta^2 \to \RR^m: (t,\tau)\mapsto z^\tau_t$ such that
    \begin{enumerate}[label=(\roman*)]
        \item $z$ is bounded and measurable,
        \item $z^\tau_0 = 0$ for all $\tau \geq 0$,
        \item $t \mapsto z^{\tau}_{t}$ is %
        absolutely continuous for almost every $\tau \in[0,T]$.%
    \end{enumerate}
We further identify $\Vone$ with its quotient modulo almost everywhere equivalence. In addition, for $z\in \cV^1([0,T];\RR^m)$ we set \begin{equation}\label{eq:diff_volterra_path}
    \dd{z^\tau_t} := \frac{\dd z^\tau_{t}}{\dd t}\dd{t}.
\end{equation}
A norm on $\Vone$ is then defined by \begin{equation}\label{eq:norm}
    \Vert z \Vert_{\cV^1}= \sup_{t\in [0,T]} \int_0^t \Big \vert \frac{\dd z_u^t}{\dd u} \Big \vert \dd u.
\end{equation}
\end{defn}

As the title of this section indicates, the notion of Volterra path in Definition \ref{def:volterra_path} is more intrinsic than Definition~\ref{def:vsig}. Otherwise stated, it does not refer to a specific path $x$ or a kernel $K$. One can go back to a more extrinsic version thanks to the lemma below.
\begin{lem}\label{lem:equivalence_def_VP}
Let $\cV^1([0,T];\RR^m)$ be the space introduced in Definition \ref{def:volterra_path}. For $z: \Delta^2\to\RR^m$ it holds that  $z \in \Vone$ if and only if it is of the form %
\begin{equation}\label{eq:Volterra_path_def}
    z^\tau_t = \int_0^t K(\tau, s)\dd{x_s},
\end{equation}
in for some $x\in\mathcal{C}^{0,1}([0,T];\RR^d)$
and $K\in\Lkernel$.
\end{lem}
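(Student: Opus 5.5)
We prove the two implications separately. The "if" direction is a direct verification of properties (i)--(iii) in Definition~\ref{def:volterra_path}. The "only if" direction uses an explicit construction: we take $d=1$, the path $x_s=s$, and let the kernel be the time derivative of $z$ itself.

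\emph{Sufficiency.} Let $z^\tau_t=\int_0^t K(\tau,s)\dd x_s=\int_0^t K(\tau,s)\dot x_s\dd s$ with $x\in\cC^{0,1}$ and $K\in\Lkernel$. Since $x$ is Lipschitz, $|\dot x_s|\le \|\dot x\|_\infty$ for a.e.\ $s$. We check the three properties in turn.
\begin{enumerate}[label=(\roman*)]
\item Boundedness: $|z^\tau_t|\le \|\dot x\|_\infty\sup_\tau\int_0^\tau|K(\tau,s)|\dd s<\infty$. Measurability of $(t,\tau)\mapsto z^\tau_t$ follows from Fubini--Tonelli, because $(s,t,\tau)\mapsto 1_{s\le t}K(\tau,s)\dot x_s$ is jointly measurable and integrable in $s$.
\item $z^\tau_0=0$ holds trivially.
\item For every $\tau$, the map $s\mapsto K(\tau,s)\dot x_s$ lies in $L^1([0,\tau])$. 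Hence $t\mapsto z^\tau_t$ is absolutely continuous, with derivative $K(\tau,t)\dot x_t$ for a.e.\ $t$.
\end{enumerate}
The same bound also shows $\|z\|_{\cV^1}<\infty$.

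\emph{Necessity.} Let $z\in\Vone$. Take $d=1$ and $x_s=s$, so that $x\in\cC^{0,1}([0,T];\RR)$ with $\dot x\equiv1$. Define the kernel $K:\Delta^2\to\mathcal L(\RR,\RR^m)\cong\RR^m$ by
\[
K(\tau,s):=\frac{\dd z^\tau_s}{\dd s}.
\]
By (iii), this derivative exists for a.e.\ $\tau$ and a.e.\ $s$. On the exceptional null set we set $K=0$; this is harmless because $\Vone$ is taken modulo a.e.\ equivalence. Absolute continuity together with (ii) then gives, for a.e.\ $\tau$,
\[
z^\tau_t=z^\tau_0+\int_0^t \frac{\dd z^\tau_s}{\dd s}\dd s=\int_0^t K(\tau,s)\dd x_s,
\]
which is the representation \eqref{eq:Volterra_path_def}. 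It remains to show that $K\in\Lkernel$, and this splits into two points.

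\emph{Measurability of $K$.} We write $K$ as a pointwise limit of difference quotients, e.g.\ $K(\tau,s)=\lim_{h\downarrow0}h^{-1}(z^\tau_{s+h}-z^\tau_s)$ along $h=1/n$, taking $\limsup$ where the limit fails to exist. Each quotient is jointly measurable because $z$ is measurable, and a pointwise $\limsup$ of measurable functions is measurable.

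\emph{Uniform integrability.} We need $\sup_\tau\int_0^\tau|K(\tau,s)|\dd s<\infty$. This is exactly the quantity $\|z\|_{\cV^1}$ in \eqref{eq:norm}. The main obstacle is here: absolute continuity of $t\mapsto z^\tau_t$ for each fixed $\tau$ only makes each integral finite, not their supremum over $\tau$. Since $\|\cdot\|_{\cV^1}$ is declared to be a norm on $\Vone$, we read the definition as including the requirement $\|z\|_{\cV^1}<\infty$. We would state this explicitly in the proof. If finiteness is not assumed, a weaker statement still holds: the integrability condition holds for each $\tau$ individually, so $K$ lies in the corresponding local space. Under the finiteness reading, the identity $\sup_\tau\int_0^\tau|K(\tau,s)|\dd s=\|z\|_{\cV^1}<\infty$ gives $K\in\Lkernel$, which completes the equivalence.
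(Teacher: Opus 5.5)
Your proof is correct under the reading you make explicit. It ends at the same construction as the paper: $d=1$, $x_t=t$, and $K(\tau,s)=\partial_s z^\tau_s$.

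The two routes to $K$ differ. The paper defines a signed measure on $[0,T]^2$ by $\nu([0,t]\times[0,\tau])=\int_0^\tau z^\theta_{t\wedge\theta}\,\mathrm{d}\theta$, asserts that it is finite and absolutely continuous, and takes $K$ to be its Radon--Nikodym density. This gives joint measurability of $K$ for free. However, it leaves unchecked that $\nu$ is a well-defined finite signed measure. You instead read $K$ off from the fundamental theorem of calculus for absolutely continuous functions, and get joint measurability from a $\limsup$ of difference quotients. That is more elementary and self-contained. It also shows that $\sup_\tau\int_0^\tau|K(\tau,s)|\,\mathrm{d}s$ is exactly $\|z\|_{\mathcal{V}^1}$.

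On that last point your proposal is more careful than the paper. The paper's closing step, ``since $\nu$ is finite, $K\in L^{\infty,1}$'', only yields $K\in L^1(\Delta^2)$. Your extra assumption is genuinely needed, as the following example shows.

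Take $m=1$ and $z^\tau_t=\sin(t\,\tau^{-3/2})$. This $z$ is bounded, measurable, vanishes at $t=0$, and is smooth in $t$ for every $\tau>0$. Consider $\int_0^\tau|\partial_t z^\tau_t|\,\mathrm{d}t=\int_0^{\tau^{-1/2}}|\cos u|\,\mathrm{d}u$, which is of order $\tau^{-1/2}$. It is integrable in $\tau$, so $\nu$ even has finite total variation. But it is unbounded as $\tau\downarrow0$. Any representation $z^\tau_t=\int_0^t K(\tau,s)\,\mathrm{d}x_s$ would bound the variation of $t\mapsto z^\tau_t$ on $[0,\tau]$ by $\|\dot x\|_\infty\sup_\tau\int_0^\tau|K(\tau,s)|\,\mathrm{d}s$ for a.e.\ $\tau$ (up to norm constants). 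Passing to another representative of the a.e.-class does not help, since by continuity in $t$ it agrees with $z$ for a.e.\ $\tau$.

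So the ``only if'' direction holds exactly when $\|z\|_{\mathcal{V}^1}<\infty$ is part of Definition~\ref{def:volterra_path}, as you propose.
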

\begin{proof}
Clearly, any path of the form \eqref{eq:Volterra_path_def} satisfies the properties (i)-(iii) of \Cref{def:volterra_path}.
Hence, we only have to prove that $z\in\Vone$ is of the form \eqref{eq:Volterra_path_def} for some $x\in\mathcal{C}^{0,1}([0,T];\RR^d)$ and $K\in\Lkernel$.
 To this end, we define a finite signed measure on $[0,T]^2$ by setting $\nu([0,t]\times[0,\tau]):= \int_{0}^{\tau} z^{\theta}_{t\wedge \theta} \dd{\theta}$.
 By differentiability of $z$ it follows that $\nu$ is absolutely continuous with respect to the Lebesgue measure on $[0,T]^2$. Thus by the Radon-Nikodym theorem there exists a measurable $K:[0,T]^2 \to \RR$ such that $\nu(A) = \int_{A} K(t,s) \dd{s}\dd{t}$.
 By definition of $\nu$ we readily verify that  \eqref{eq:Volterra_path_def} holds almost everywhere on $\Delta^2$ with $K(\tau,t) \in \RR^m \cong \RR^{m\times 1}$ and $x: [0,T] \to \RR^1$ defined by $x_{t}=t$.
 Since $\nu$ is finite, it is also clear that $K\in\Lkernel$. It is then readily checked that $z_t^\tau$ can be realized as \[
 z_t^\tau = \int_0^tK(\tau,s)\dd s,
 \] which finishes our proof. 
\end{proof}

\begin{rem}\label{rem:proof_rem}
    The proof of Lemma \ref{lem:equivalence_def_VP} highlights the non uniqueness of the pair $(K,x)$ whenever a Volterra path $z$ is given. Indeed, we have simply used $K(\tau,t)\in \RR^m$ and $x_t=t$, while other choices would certainly be in order. 
    To justify why we choose to define the Volterra signature associated to a given pair $(x,K)$, instead of giving a somewhat less ambiguous definition directly based on a suitable class of \emph{smooth Volterra paths} $z$ (defined below), we note that in the application we have in mind, $K$ is usually fixed but $x$ will vary (or otherwise $x$ might also be fixed and $K$ may vary). Moreover, in Section~\ref{sec:injectivity} we will study the injectivity of the Volterra signature map when keeping either of the components fixed.
\end{rem}

We are now in a position to prove that one can generate the full signature $\VSig{x}{K}$ from its first level.
\begin{prop}\label{prop:vsig_z}
    Let $x \in \mathcal{C}^{0,1}([0,T]; \RR^d)$ and $K\in L^{\infty,1}(\Delta^2; \mathcal{L}(\RR^d;\RR^m))$. We consider the Volterra path $z\in\Vone$ defined by \eqref{eq:Volterra_path_def}.
    Then for every word $i_1\cdots i_n \in \mathcal{W}$  it holds
    \begin{equation}\label{eq:iterated Volterra path_coordinate}
            \VSig{x}{K}^{i_1\cdots i_n, \tau}_{s,t} = \int_{\Delta^n_{s,t}} \dd z_{r_1}^{i_1,r_2} \cdots   \dd z^{i_{n-1},r_n}_{r_{n-1}}  \dd z_{r_n}^{i_n, \tau}, \qquad (s,t, \tau)\in \Delta^{3},
    \end{equation}
    where we recall that $\dd z_t^\tau = \frac{\dd}{\dd t}z_t^\tau $. 
    In particular, for any $\widetilde{x} \in \mathcal{C}^{0,1}([0,T]; \RR^d)$ and $\widetilde{K} \in L^{\infty,1}(\Delta^2; \mathcal{L}(\RR^d;\RR^m))$, it holds $$\VSig{x}{K} \equiv \VSig{\widetilde{x}}{\widetilde{K}}$$ almost everywhere if and only if
    $$\int_{s}^t K(\tau, r) \dd{x_r} = \int_{s}^t \widetilde{K}(\tau, r) \dd{\widetilde{x}_r}, \quad \text{for almost every } (s,t,\tau) \in \Delta^{3}.$$
\end{prop}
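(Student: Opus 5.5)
The first claim is essentially a rewriting of Definition~\ref{def:vsig}, so I would start there. By \eqref{eq:Volterra_path_def} and Notation~\ref{hyp:smoothness}, for almost every $\tau$ the map $t\mapsto z^{i,\tau}_t=\int_0^t K^i(\tau,r)\dd x_r$ is absolutely continuous. Its derivative is
\[
\frac{\dd}{\dd t}z^{i,\tau}_t = \sum_{j=1}^d K^{ij}(\tau,t)\,\dot x^j_t
\]
for almost every $t$, by the Lebesgue differentiation theorem, since $r\mapsto K(\tau,r)\dot x_r$ is integrable ($\dot x\in L^\infty$ and $K\in L^{\infty,1}$). Hence $\dd z^{i_l,r_{l+1}}_{r_l} = K^{i_l}(r_{l+1},r_l)\dd x_{r_l}$ as measures in $r_l$ for a.e.\ fixed $r_{l+1}$, with the convention $r_{n+1}=\tau$.

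Substituting this into the integrand of \eqref{eq:def_vsig_comp} gives \eqref{eq:iterated Volterra path_coordinate} directly. To justify it, I need the product $\prod_l K^{i_l}(r_{l+1},r_l)\dot x_{r_l}$ to be integrable on $\Delta^n_{s,t}$. I would get this by integrating iteratively in $r_1,r_2,\dots$ and bounding each step by $\|\dot x\|_\infty\sup_u\int_0^u|K(u,r)|\dd r$. Fubini--Tonelli then applies, and the null sets where the derivative identity fails are harmless, since they are null in $r_l$ for a.e.\ $r_{l+1}$.

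For the \enquote{in particular} statement, the key observation is that the right-hand side of \eqref{eq:iterated Volterra path_coordinate} depends on $(x,K)$ only through the derivatives $\partial_t z^{\tau}_t$. These are determined, as a.e.\ functions on $\Delta^2$, by the increments
\[
z^\tau_t - z^\tau_s=\int_s^t K(\tau,r)\dd x_r .
\]

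\emph{Sufficiency.} Suppose these increments agree for $(x,K)$ and $(\widetilde x,\widetilde K)$ for a.e.\ $(s,t,\tau)$. Then, by Fubini, for a.e.\ $\tau$ the two absolutely continuous functions $t\mapsto z^\tau_t$ and $t\mapsto\widetilde z^\tau_t$ agree for a.e.\ $(s,t)$, hence everywhere by continuity, using $z^\tau_0=0$. Their derivatives therefore agree a.e.\ on $\Delta^2$, and all iterated integrals in \eqref{eq:iterated Volterra path_coordinate} coincide.

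\emph{Necessity.} This is immediate: the level-one component is $\VSig{x}{K}^{i,\tau}_{s,t}=\int_s^t K^i(\tau,r)\dd x_r$, so equality of the signatures for all words of length one is precisely the stated condition.

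I expect the main obstacle to be measure-theoretic bookkeeping rather than any real difficulty: the derivative identity holds only a.e.\ in $t$ for a.e.\ $\tau$, and one must make sure this suffices when $\tau$ is replaced by the integration variable $r_{l+1}$. Tonelli on the product set, together with the integrability bound above, settles this.
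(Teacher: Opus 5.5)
Your proposal is correct and follows the same route as the paper. The first identity comes from substituting $\dd z^{i,\tau}_t = K^i(\tau,t)\dd x_t$ into Definition~\ref{def:vsig}, and the equivalence holds because the right-hand side of \eqref{eq:iterated Volterra path_coordinate} depends only on the first-level Volterra path, with necessity read off from the words of length one. You also supply the measure-theoretic details the paper leaves implicit: integrability of the iterated integrand via the bound $\|\dot x\|_\infty \|K\|_{L^{\infty,1}}$ at each step, and the handling of null sets via Tonelli.
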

\begin{proof}
    The first statement is a direct consequence of \eqref{eq:Volterra_path_def}. 
    It then follows from the representation \eqref{eq:iterated Volterra path_coordinate} that the Volterra signature $\VSig{x}{K}$ only depends on the Volterra path defined in the first level $z = (\VSig{x}{K}^{i})_{i\in\mathcal{A}} \in \Vone$.
    In other words, two Volterra signatures coincide if their first levels coincide in $\Vone$, which is precisely translates to the equivalence condition stated above.
    Note that the almost everywhere quantifier comes from the fact that $\Vone$ was also equipped with almost every everywhere equivalence.
\end{proof}

\begin{defn}\label{def:VP_def_z}
For a Volterra path $z\in\Vone$ we call $\bz = \VSig{x}{K}$ the \emph{full lift of $z$}, where $(x,K)$ is any representation of $z$, i.e., any $x\in\mathcal{C}^{0,1}([0,T];\RR^m)$  and $K\in\Lkernel$ such that
\eqref{eq:Volterra_path_def} holds.
\end{defn}

\subsubsection{Chen's relation}
Our next endeavor is to provide a Chen's relation for Volterra signatures.
The crucial ingredient to express this relation in a concise form is the availability of suitable convolution product for Volterra paths. In the accompanying work \cite{ii_part}, it is shown how to mitigate this lack of algebraic structure when performing numerical computations.

\begin{defn}\label{def:convolution_coordinate}
    Let $z$ be a Volterra path in $\Vone$ as given in \cref{def:volterra_path}.
    Then for any bounded measurable path $y:[0,T] \to \RR$ we define the \emph{$n$th order convolution} of $y$ with $z$ by
    \begin{equation}\label{eq:convol_def_words}
        \conv{y}{z}{i_1\cdots i_n}{\tau}{s}{t} = \int_{\Delta^{n}_{s,t}} y_{r_1}  \dd{z_{r_1}^{i_1,r_2}}  \cdots \dd{z^{i_{n-1}, r_n}_{r_{n-1}}}  \dd{z^{i_{n},\tau}_{r_{n}}}, \qquad (s,t,\tau) \in \Delta^3,
    \end{equation}
    for all $n\in\NN$ and $i_1\cdots i_n\in\mathcal{W}$.
    As a convention we also set \begin{equation}\label{eq:convol_def_empty}
    \conv{y}{z}{\varnothing}{\tau}{s}{t} = y_\tau, \qquad (s,t,\tau) \in \Delta^3.
    \end{equation}
\end{defn}
\begin{rem}\label{rem:full_lift}
    For any Volterra path $z\in\Vone$, recall that the full lift of $z$ is the proper intrinsic generalization of the Volterra signature from Definition \ref{def:vsig}. It can also be expressed from~\eqref{eq:convol_def_words} by setting \begin{equation}\label{eq:convol_def_full}\bz^{i_1\cdots i_n} = \conv{1}{z}{i_1\cdots i_n}{}{}{},\end{equation} for every word $i_1\cdots i_n \in \cW$.
\end{rem}
\begin{rem}
    While for the smooth case the convolution product  \eqref{eq:convol_def_full} is defined canonically from the underlying Volterra path, we caution that for the rough case this changes fundamentally (see \cite{HT21}). Indeed, in that case the convolution product for the first levels need to be provided as part of the definition of the Volterra rough path.
\end{rem}

Before stating Chen's relation, we first collect a few basic properties of the convolution product which will be used in the sequel.
\begin{lem} \label{lem:convolution_properties}
    Consider a Volterra path $z$ and a function $y$ as in Definition \ref{def:convolution_coordinate}. Then the following holds true 
    \begin{itemize}
        \item[(i)] For every word $i_1\cdots i_n \in \cW$, one has the recursive property \begin{equation}\label{eq:conv_tower_property}
        \int_{s}^t \conv{y}{z}{i_1\cdots i_{n-1}}{r}{s}{r} \dd{z^{i_{n},\tau}_r} = \conv{y}{z}{i_1\cdots i_{n}}{\tau}{s}{t}.
    \end{equation}
    \item[(ii)] Conversely, any $n$-th order convolution can be expressed in terms of the full signature $\bz$. Namely, for every word $i_1\cdots i_n \in \cW$ we have \begin{align}\label{eq:conv_ibp}
    \conv{y}{z}{i_1\cdots i_{n}}{\tau}{s}{t} 
    &= \int_s^t y_u  \conv{\frac{\dd}{\dd u}z^{i_1,\cdot}_u}{z}{i_2\cdots i_{n}}{\tau}{u}{t} \dd{u} \\ \nonumber
    &=- \int_s^t y_u  \left( \frac{\dd}{\dd u}\bz^{i_1\cdots i_{n}, \tau}_{u,t}\right) \dd{u}.
    \end{align}
    \end{itemize}
    
\end{lem}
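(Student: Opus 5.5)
Both items follow from Fubini's theorem applied to the iterated integral \eqref{eq:convol_def_words}. Every integrand is absolutely integrable, because $y$ is bounded and
$$\sup_{\tau}\int_0^\tau \big|\tfrac{\dd}{\dd r} z^{\tau}_r\big|\dd r = \Vert z\Vert_{\cV^1}<\infty .$$
Iterating this bound gives $\big|\conv{y}{z}{i_1\cdots i_n}{\tau}{s}{t}\big| \le \Vert y\Vert_\infty \Vert z\Vert_{\cV^1}^n$, and the same estimate holds for all partial integrals. Hence all interchanges of integration order below are legitimate.

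\emph{Item (i).} I would split the simplex $\Delta^n_{s,t}$ according to its last variable, $r_n=r\in[s,t]$. The remaining variables $(r_1,\dots,r_{n-1})$ then range over $\Delta^{n-1}_{s,r}$. The factor $\dd z^{i_n,\tau}_{r_n}$ is the only one involving $\tau$, and it depends on $r$ alone. Applying Fubini, the inner integral over $\Delta^{n-1}_{s,r}$ is exactly $\conv{y}{z}{i_1\cdots i_{n-1}}{r}{s}{r}$, because the upper variable $r_n=r$ plays the role of the external parameter in the last factor $\dd z^{i_{n-1},r}_{r_{n-1}}$. This yields \eqref{eq:conv_tower_property}. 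For $n=1$ the inner term is $\conv{y}{z}{\varnothing}{r}{s}{r}=y_r$ by \eqref{eq:convol_def_empty}, and the identity reduces to the definition.

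\emph{Item (ii).} I would instead split $\Delta^n_{s,t}$ according to its first variable, $r_1=u\in[s,t]$. The remaining variables $(r_2,\dots,r_n)$ range over $\Delta^{n-1}_{u,t}$. The first factor is $y_u\,\tfrac{\dd}{\dd u}z^{i_1,r_2}_u\,\dd u$, which is a bounded measurable function of $r_2$ for fixed $u$. Fubini then identifies the inner integral as the $(n-1)$th order convolution over $[u,t]$ with the function $\tilde y_{r}:=\tfrac{\dd}{\dd u}z^{i_1,r}_u$ and word $i_2\cdots i_n$. This gives the first equality in \eqref{eq:conv_ibp}.

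One technical point arises here. Definition~\ref{def:convolution_coordinate} requires $\tilde y$ to be bounded, whereas $\tfrac{\dd}{\dd u}z^{i_1,\cdot}_u$ is in general only integrable. I would therefore read the definition for integrable $\tilde y$, justified by the absolute integrability noted above. For $n=1$ the inner convolution is simply $\tilde y_\tau$, and the first equality is again the definition.

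For the second equality I would apply the first one with $y\equiv 1$ and lower limit $u$. By \eqref{eq:convol_def_full} this gives
$$\bz^{i_1\cdots i_n,\tau}_{u,t}=\int_u^t \conv{\tfrac{\dd}{\dd v}z^{i_1,\cdot}_v}{z}{i_2\cdots i_n}{\tau}{v}{t}\dd v .$$
The integrand is in $L^1$ in $v$, so $u\mapsto \bz^{i_1\cdots i_n,\tau}_{u,t}$ is absolutely continuous. By the Lebesgue differentiation theorem, its a.e. derivative in $u$ equals minus the integrand evaluated at $v=u$. Substituting this into the first equality gives the second one, including the minus sign.

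The main obstacle I anticipate is measure-theoretic rather than algebraic. One must check joint measurability of $(u,r)\mapsto \tfrac{\dd}{\dd u}z^{r}_u$, since $z$ is only a.e. absolutely continuous in $u$ for a.e. $\tau$. One must also confirm that the almost-everywhere statements are compatible with the a.e. equivalence built into $\Vone$. I would handle both by working with the representation $z^\tau_t=\int_0^t K(\tau,s)\dd x_s$ from Lemma~\ref{lem:equivalence_def_VP}. In that representation the derivative is the measurable function $K(\tau,u)\dot x_u$, and the $L^{\infty,1}$ bound of Hypothesis~\ref{def:kernel_class} supplies the domination needed for Fubini.
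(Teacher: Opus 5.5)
Your proposal is correct and follows essentially the same route as the paper: (i) is read off the definition by isolating the last simplex variable, and for (ii) you, like the paper, pass to the representation $\dd z^{\tau}_u = K(\tau,u)\dot x_u\,\dd u$, apply Fubini on the first variable $r_1=u$ to identify the inner $(n-1)$-fold convolution with $\tilde y=\frac{\dd}{\dd u}z^{i_1,\cdot}_u$, and then take $y\equiv 1$ and differentiate in the lower limit. Your remark on unbounded $\tilde y$ addresses a point the paper passes over silently, though it contradicts your earlier description of $\tilde y$ as bounded; strictly, what Tonelli supplies is absolute integrability of the section over $\Delta^{n-1}_{u,t}$ for a.e.\ $u$ (the $L^{\infty,1}$ condition controls $K$ only in its second variable, not $r\mapsto K(r,u)$), and that is exactly what the first identity requires.
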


\begin{rem}
    Note that both identities \eqref{eq:conv_tower_property} and \eqref{eq:conv_ibp} may serve to define the higher-order convolution inductively.
\end{rem}
\begin{rem}\label{rem:backward} We have seen in Lemma \ref{lem:equivalence_def_VP} that any Volterra path $z\in \Vone$ admits a representation of the form \eqref{eq:Volterra_path_def}. Owing to this representation we easily get $$\frac{\dd}{\dd \sigma}z^\tau_\sigma = K(\tau,\sigma)\dot{x}_\sigma.$$ Therefore using Fubini, expressions like the bracket in \eqref{eq:conv_ibp} can be conveniently written as
    \begin{align*}
        &\conv{\frac{\dd}{\dd\sigma}z^{i_1,\cdot}_\sigma}{z}{i_2\cdots i_{n}}{\tau}{s}{t} \\
        =& \int_s^t \int_{s}^{r_{n}} \cdots \int_s^{r_3} K^{i_1}(r_2, \sigma)\dot{x}_\sigma \left(\prod_{k=2}^{n-1}K^{i_k}(r_{k+1}, r_k)\dd{x_{r_k}} \right)K^{i_n}(\tau, r_n)\dd{x_{r_n}} \\
        =& \int_s^t \int_{r_2}^t \cdots \int_{r_{n-1}}^t  K^{i_n}(\tau, r_n)\dd{x_{r_n}}\left(\prod_{k=2}^{n-1}K^{i_k}(r_{k+1}, r_k)\dd{x_{r_k}} \right)K^{i_1}(r_2, \sigma)\dot{x}_\sigma,
    \end{align*}
    for all $(\sigma, s, t, \tau) \in \Delta^4$.
    These terms arise naturally for the case $t = \tau$ when performing Picard iteration of a linear Volterra equations starting from $\tau$ backwards in time. 
\end{rem}
\begin{proof}[Proof of Lemma~\ref{lem:convolution_properties}]
    The first identity \eqref{eq:conv_tower_property} %
     follows directly from \Cref{def:convolution_coordinate}.
   Hence we will focus our efforts on the proof of \eqref{eq:conv_ibp}. To this aim we first consider the case $n=1$, for which one can start again from \Cref{def:convolution_coordinate}. This enables to write
    $$\conv{y}{z}{i}{\tau}{s}{t} = \int_s^t y_r \dd{z}^{i,\tau}_r = \int_s^t y_r \frac{\dd}{\dd r} z^{i, \tau}_r \dd{r} = \int_s^t y_r \conv{\frac{\dd}{\dd r} z^{i, \cdot}_r}{z}{\varnothing}{\tau}{s}{t}\dd{r},$$ where we have invoked the convention \eqref{eq:convol_def_empty}.
   The case $n\ge 2$  in \eqref{eq:conv_ibp} is more conveniently handled by resorting to a representation of $z$ as in relation \eqref{eq:Volterra_path_def}. Then recalling (from Definition \ref{def:vsig}) that $K^i(\tau,\cdot) \dd x$ denotes the component $i$ in the vector $K(\tau,\cdot)\dd x$ and applying Fubini's theorem we get 
    \begin{align*}
        \conv{y}{z}{i_1\cdots i_{n}}{\tau}{s}{t} =& \int_s^t  \int_{s}^{r_{n}}\cdots \int_{s}^{r_2} y_{r_1} \prod_{l=1}^{n}K^{i_l}(r_{l+1}, r_l)\dot{x}_{r_l}\dd{r_l} \\
        =& \int_s^t  \int_{r_1}^t\cdots \int_{r_n}^ t y_{r_1} \prod_{l=1}^{n}K^{i_l}(r_{l+1}, r_l)\dot{x}_{r_l}\dd{r_l}.
    \end{align*}
    Thus, applying Fubini again we obtain \begin{equation}\label{eq:fubini_2}
        \begin{aligned}
            [y \ast z]_{s,t}^{i_1\cdots i_n,\tau} = & \int_s^t y_{r_1} \left(\int_{r_1}^t\cdots \int_{r_n}^ t   K^{i_1}(r_{2}, r_1)\dot{x}_{r_1}\prod_{l=2}^{n}K^{i_l}(r_{l+1}, r_i)\dot{x}_{r_l}\dd{r_l} \right) \dd{r_1}\\
        =& \int_s^t y_{r_1} \left(\int_{r_1}^t\int_{r_1}^{r_n}\cdots \int_{r_1}^{r_3}    \frac{\dd}{\dd r_1}z^{i_1,r_2}_{r_1} \prod_{l=2}^{n}K^{i_l}(r_{l+1}, r_l)\dot{x}_{r_l}\dd{r_l} \right) \dd{r_1}\\
        =&\int_s^t y_u  \conv{\frac{\dd}{\dd u}z^{i_1,\cdot}_u}{z}{i_2\cdots i_{n}}{\tau}{u}{t} \dd{u}.
        \end{aligned}
    \end{equation} This proved the first identity in \eqref{eq:conv_ibp}. For the second identity in \eqref{eq:conv_ibp}, it is now enough to apply the above identity to the case $y=1$. This yields, for $u<t$
    \begin{align*}
        \bz^{i_1 \cdots i_n, \tau}_{u,t} = \conv{1}{z}{i_1\cdots i_{n}}{\tau}{u}{t} =&\int_u^t  \conv{\frac{\dd}{\dd v}z^{i_1, \cdot }_v}{z}{i_2\cdots i_{n}}{\tau}{v}{t} \dd{v}.
    \end{align*} 
    It is readily checked from the above relations that \[ \frac{\dd }{\dd u}\bz_{u,t}^{i_1\cdots i_n,\tau} = - \Big [ \frac{\dd }{\dd u}z_u^{i_1,\cdot} \ast z\Big ]_{u,t}^{i_1\cdots i_n,\tau}.
    \]
    Plugging this relation into \eqref{eq:fubini_2}, the second part of  \eqref{eq:conv_ibp} is now achieved. This finishes our proof.
\end{proof}
Combining our definitions of convolution and Lemma \ref{lem:convolution_properties}, we can now state Chen's identity for smooth Volterra signatures.

\begin{prop}\label{prop:Chen_coordiante}
Let $\bz$ be the full lift of a Volterra path $z \in \Vone$, defined as in Remark \ref{rem:full_lift}. Then $\bz$ satisfies
\begin{equation}\label{eq:chen_coordinate}
    \bz_{s,t}^{i_1\cdots i_n, \tau} = \sum_{k=0}^n \conv{\bz^{i_1\cdots i_{k} \cdot}_{s,u}}{z}{ i_{k+1}\cdots i_n}{\tau}{u}{t},\qquad \;(s,u,t,\tau) \in \Delta^4.
\end{equation}
for all $n\in\NN$ and $i_1\cdots i_n \in \cW$, where we recall that the convolution product above is defined by~\eqref{eq:convol_def_words}.
\end{prop}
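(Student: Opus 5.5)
Since $\bz^{i_1\cdots i_n,\tau}_{s,t}$ is an integral over the simplex $\Delta^n_{s,t}$, we split that simplex according to how many of the integration variables lie in $[s,u]$. Up to a Lebesgue-null set, $\Delta^n_{s,t}$ is the disjoint union over $k=0,\dots,n$ of the sets
\[
D_k=\{s\le r_1\le\cdots\le r_k\le u\le r_{k+1}\le\cdots\le r_n\le t\}.
\]
Thanks to Proposition~\ref{prop:vsig_z}, the integrand $\dd z^{i_1,r_2}_{r_1}\cdots \dd z^{i_n,\tau}_{r_n}$ is an honest $L^1$ density on $\Delta^n_{s,t}$, namely $\prod_l K^{i_l}(r_{l+1},r_l)\dot x_{r_l}$ with $r_{n+1}=\tau$. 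Its integrability follows from Hypothesis~\ref{def:kernel_class} and the boundedness of $\dot x$, by integrating out $r_1,r_2,\dots$ successively. Additivity of the integral therefore gives $\bz^{i_1\cdots i_n,\tau}_{s,t}=\sum_{k=0}^n\int_{D_k}(\cdots)$.

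Next we identify each piece. $D_k$ is the product $\Delta^k_{s,u}\times\Delta^{n-k}_{u,t}$, so Fubini lets us integrate first over $(r_1,\dots,r_k)\in\Delta^k_{s,u}$ with $r_{k+1}$ frozen. The only factor coupling the two blocks is $\dd z^{i_k,r_{k+1}}_{r_k}$, whose upper variable is $r_{k+1}$. Consequently the inner integral is exactly
\[
\int_{\Delta^k_{s,u}}\dd z^{i_1,r_2}_{r_1}\cdots \dd z^{i_k,r_{k+1}}_{r_k}=\bz^{i_1\cdots i_k,r_{k+1}}_{s,u},
\]
the function $y_r:=\bz^{i_1\cdots i_k,r}_{s,u}$ evaluated at $r=r_{k+1}\ge u$. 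The remaining integral over $\Delta^{n-k}_{u,t}$ is
\[
\int y_{r_{k+1}}\,\dd z^{i_{k+1},r_{k+2}}_{r_{k+1}}\cdots\dd z^{i_n,\tau}_{r_n},
\]
which is precisely $[y\ast z]^{i_{k+1}\cdots i_n,\tau}_{u,t}$ from Definition~\ref{def:convolution_coordinate}, since $y$ is bounded and measurable.

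The two endpoint cases are handled by the conventions:
\begin{itemize}
\item For $k=n$, $D_n=\Delta^n_{s,u}$ and the convention~\eqref{eq:convol_def_empty} gives $[y\ast z]^{\varnothing,\tau}_{u,t}=y_\tau=\bz^{i_1\cdots i_n,\tau}_{s,u}$.
\item For $k=0$, $y\equiv\bz^{\varnothing}=1$ and we recover $\bz^{i_1\cdots i_n,\tau}_{u,t}$ via~\eqref{eq:convol_def_full}.
\end{itemize}

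The main point to watch is justifying Fubini together with the boundedness and measurability of $r\mapsto \bz^{i_1\cdots i_k,r}_{s,u}$ on $[u,T]$. This follows from $\sup_r\int_0^r|K(r,\sigma)|\dd\sigma<\infty$ by induction on $k$: bound $|\bz^{w,r}_{s,u}|\le C^k\|\dot x\|_\infty^k$ by integrating the innermost variable first. Alternatively, one could argue by induction on $n$ using the recursion~\eqref{eq:conv_tower_property}, splitting $\int_s^t=\int_s^u+\int_u^t$ in the last variable. The direct simplex decomposition is cleaner, so that is the route to take.
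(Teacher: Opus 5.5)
Your proof is correct, but it takes a different route from the paper's.

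You prove the coordinate identity \eqref{eq:chen_coordinate} directly. You decompose $\Delta^n_{s,t}$, up to null sets, into the products $\Delta^k_{s,u}\times\Delta^{n-k}_{u,t}$ and apply Fubini. The only factor coupling the two blocks is $K^{i_k}(r_{k+1},r_k)$, and this is what turns the inner integral into the weight $r\mapsto\bz^{i_1\cdots i_k,r}_{s,u}$ of the convolution. This is the classical proof of Chen's identity adapted to the Volterra setting. Your integrability and boundedness checks via $\sup_t\int_0^t|K(t,s)|\dd s<\infty$ are the right ones.

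The paper instead proves the tensor form \eqref{eq:chen_tensor}, of which \eqref{eq:chen_coordinate} is the coordinate projection, by a uniqueness argument. Splitting $\int_s^t=\int_s^u+\int_u^t$ in the fundamental equation \eqref{eq:vsig_fundamental_z} shows that $t\mapsto\bz^\tau_{s,t}$, for $t\ge u$, solves the restarted linear equation \eqref{eq:chen_proof} with $\tau$-dependent initial datum $\bz^\tau_{s,u}$. That equation has a unique solution, obtained level by level. The tower properties \eqref{eq:tensor_convol_iterate_0}--\eqref{eq:tensor_convol_iterate} then show that the convolution expression also solves it.

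Your argument is more elementary and self-contained. It makes the Fubini justification and the measurability and boundedness of $r\mapsto\bz^{w,r}_{s,u}$ explicit. The paper, by contrast, relies on the tower property, whose proof it omits as immediate and which rests on the same kind of Fubini computation.

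The paper's argument derives Chen's relation purely from the dynamics, namely the fundamental equation plus uniqueness, without unwinding the simplex integrals. This fits the section's theme of characterizing $\mathrm{VSig}$ through a linear Volterra equation, and it gives the compact tensor statement in one stroke. The alternative you mention at the end, induction on $n$ by splitting the last integral, is essentially the paper's argument carried out level by level.
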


In the spirit of presentation of the remaining paper, we will postpone the proof of Proposition \ref{prop:Chen_coordiante} and present it below in \Cref{cor:Chen} based on a dynamic representation of the Volterra signature.
This is the theme of the next section.

\subsection{A fundamental linear Volterra equation}\label{sec:fundamental}

We recall from Section \ref{preliminaries} that $\mathcal{W}$ denotes the set of words in the alphabet $\mathcal{A} = \{1, \dots, m\}$ and $e_{i_1\cdots i_n} := e_{i_1} \otimes \cdots \otimes e_{i_n} \in (\RR^m)^{\otimes n}$ denotes the basis tensor of level $n$ corresponding to the coordinate $i_1\cdots i_n\in\mathcal{W}$.
To give a dynamic representation of the Volterra signature, we henceforth represent the collection of its iterated integrals by a tensor series.

\begin{nota}\label{b1}
Let $x$ be a signal in $\mathcal{C}^{0,1}([0,T]; \RR^d)$, and consider a kernel $K$ with values in $\cL(\RR^{d};\RR^{m})$ that satisfies Hypothesis~\ref{def:kernel_class}. Then the iterated integrals $\{ \VSig{x}{K}^w\}_{w\in\mathcal{W}}$ characterizing $\mathrm{VSig}$ in Definition~\ref{def:vsig} can be represented by a formal tensor series, i.e.,
\begin{align*}
    \VSig{x}{K}^{\tau}_{s,t} := \sum_{n=0}^\infty \VSig{x}{K}^{(n),\tau}_{s,t}  := \sum_{n=0}^\infty \sum_{\substack{w\in\mathcal{W}\\|w| =n}}\VSig{x}{K}^{w,\tau}_{s,t}  e_w \quad \in T((\RR^m)),
\end{align*}
for all $(s,t,\tau)\in\Delta^3$, where we recall that $T((\RR^m))$ is introduced in \Cref{def:extended_TA}.
\end{nota}

Notation~\ref{b1} allows an easy characterization of 
the Volterra signature by a linear Volterra equation in the extended tensor algebra, which is manifested by the following
\begin{prop}\label{prop:fundamental_equation}
Under the same conditions as in Notation~\ref{b1}, the Volterra signature $\bz = \VSig{x}{K}: \Delta^3 \to T((\RR^m))$ uniquely solves a linear system of the form
\begin{align}\label{eq:vsig_fundamental}
\bz^{\tau}_{s,t} &= 1 +\int_s^t \bz^{u}_{s,u} \otimes K(\tau,u) \dd{x_u}, \qquad (s,t,\tau)\in \Delta^3 \, ,
\end{align}
where we have used the representation~\eqref{eq:Volterra_path_def} of $z$.
\end{prop}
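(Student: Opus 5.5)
The proof has two parts: existence, by checking the equation level by level in coordinates, and uniqueness, by induction on the tensor level using the graded structure of the equation.

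\emph{Existence.} Project \eqref{eq:vsig_fundamental} onto the basis tensor $e_w$ for a word $w=i_1\cdots i_n$. Since $K(\tau,u)\dd x_u=\sum_{i}K^{i}(\tau,u)\dd x_u\, e_i$ and $e_v\otimes e_i=e_{vi}$, the $e_w$-coefficient of the right-hand side is $1_{w=\varnothing}$ for $n=0$. For $n\ge1$ it is
$$\int_s^t \bz^{i_1\cdots i_{n-1},u}_{s,u}\,K^{i_n}(\tau,u)\dd x_u .$$
For $n=0$ the identity $\bz^{\varnothing}=1$ is just the convention $\Delta^0=\{\text{pt}\}$. For $n\ge1$, substitute the definition \eqref{eq:def_vsig_comp} with upper endpoint $u$ and upper index $u$. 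The inner integral runs over $\Delta^{n-1}_{s,u}$ with $r_n=u$, which is exactly the factor $K^{i_{n-1}}(r_n,r_{n-1})$. The outer integral over $u=r_n\in[s,t]$ carries the factor $K^{i_n}(\tau,r_n)\dd x_{r_n}$. Together these give the integral over $\Delta^n_{s,t}$ in \eqref{eq:def_vsig_comp} with $r_{n+1}=\tau$. Equivalently, this is the recursion \eqref{eq:conv_tower_property} of Lemma~\ref{lem:convolution_properties}(i) with $y\equiv1$, combined with Remark~\ref{rem:full_lift}, since $\dd z^{i_n,\tau}_r=K^{i_n}(\tau,r)\dd x_r$ by the representation \eqref{eq:Volterra_path_def}.

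Well-definedness of the integrals needs a short check. By induction, each level $\bz^{(n-1),u}_{s,u}$ is bounded by $C^{n-1}\|\dot x\|_\infty^{n-1}$, where $C=\sup_t\int_0^t|K(t,s)|\dd s$ from Hypothesis~\ref{def:kernel_class}. Boundedness then makes the level-$n$ integral absolutely convergent, and Fubini is justified.

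\emph{Uniqueness.} Let $\by$ be another solution of \eqref{eq:vsig_fundamental}, and set $\boldsymbol{\delta}=\bz-\by$. Then $\boldsymbol{\delta}$ solves the homogeneous equation
$$\boldsymbol{\delta}^\tau_{s,t}=\int_s^t\boldsymbol{\delta}^u_{s,u}\otimes K(\tau,u)\dd x_u .$$
The key observation is that the right-hand side at level $n$ involves only level $n-1$ of $\boldsymbol{\delta}$, because $K\dd x$ is homogeneous of degree one. At level $0$ the right-hand side vanishes, so $\boldsymbol{\delta}^{(0)}\equiv0$. Inductively, if $\boldsymbol{\delta}^{(n-1)}\equiv0$, then $\boldsymbol{\delta}^{(n)}\equiv0$. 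No Gronwall or resolvent argument is needed, since the system is strictly triangular.

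The main obstacle is integrability. Uniqueness must hold within a class of solutions for which the right-hand side makes sense, for instance measurable functions that are bounded (or merely integrable against $|K(\tau,u)|\dd u$) in each level. I would state this class explicitly. The almost-everywhere issues on the diagonal $u=\tau$ are handled by working with the a.e.\ equivalence classes already built into Definition~\ref{def:volterra_path}.
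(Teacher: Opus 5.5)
Your proposal is correct and is essentially the paper's proof. Projecting onto words gives the recursion $\bz^{wi,\tau}_{s,t}=\int_s^t \bz^{w,u}_{s,u}K^i(\tau,u)\dd x_u$; its strictly triangular structure yields boundedness, measurability and uniqueness by induction on $|w|$, and Fubini identifies it with the iterated integrals of Definition~\ref{def:vsig}. Your bound $(C\|\dot x\|_\infty)^n$ just quantifies the paper's claim that each $\bz^w$ is bounded on $\Delta^3$, with one caveat: solutions must be taken pointwise, not modulo a.e.\ equivalence, because the equation evaluates $\bz$ on the null set $\{t=\tau\}$.
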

\begin{rem}
    Using once more \eqref{eq:diff_volterra_path}, i.e.~$\dd{z^\tau_t} = \frac{\dd z^\tau_{t}}{\dd t}\dd{t}$, we can equivalently present the equation \eqref{eq:vsig_fundamental} using the underlying Volterra path $z^{\tau}_{s,t} = \int_s^t K(\tau, u)\dd{x_u}$ by
    \begin{align}\label{eq:vsig_fundamental_z}
\bz^{\tau}_{s,t} &= 1 +\int_s^t \bz^{u}_{s,u} \otimes\dd{z^\tau_u}, \qquad (s,t,\tau)\in \Delta^3.
\end{align}
\end{rem}

\begin{rem}\label{rem:classical_case_dynamic}
Drawing on Remark \ref{rem:usual_sig}, the linear equation for the usual signature $\widetilde{\bz}= \mathrm{Sig}(x)$ can be written as \begin{equation}\label{eq:classical_sig_eq}
    \widetilde{\bz}_{s,t} = 1 + \int_s^t \widetilde{\bz}_{s,u} \otimes \dd x_u, \quad (s,t) \in \Delta^2.
\end{equation}
That is the convolution aspect, apparent in  \eqref{eq:vsig_fundamental}, is absent here. As we will see below, this is a substantial computational advantage.
    \end{rem}

\begin{proof}[Proof of Proposition~\ref{prop:fundamental_equation}]
    Equation \eqref{eq:vsig_fundamental}  is given meaning to -- and in fact is solved by -- projection to coordinates:
\begin{align}\label{eq:fundamental_coord}
\bz^{wi,\tau}_{s,t} = & \int_s ^t \bz^{w,u}_{s,u}  K^{i}(\tau,u) \dd{x_u}, 
\quad\text{for all}\quad 
w\in\mathcal{W},\; i\in\mathcal{A},\; (s,t,\tau)\in \Delta^3.
\end{align}
    Indeed, starting with $w=\varnothing$, with $\bz^\varnothing \equiv 1$,  we observe inductively over the length of words, that $\bz^w$ is measurable and bounded over $\Delta^3$, hence, all integrals are well defined.
    This already fully determines the solution and hence uniqueness is settled.
    One then readily verifies form the \Cref{def:vsig} that with $w=j_1\cdots j_n$ it holds
\begin{multline}\label{eq:fundamental_coord_2}
    \int_s ^t \VSig{x}{K}^{w,u}_{s,u}  K^{i}(\tau,u) \dd{x_u} 
    = \int_s ^t \int_{\Delta^n_{s,u}} \left(\prod_{l=1}^n K^{j_l}(r_{l+1},r_l)\dd{x_{r_l}} \right) K^{i}(\tau,u) \dd{x_u} \\
    = \int_{\Delta^{m+1}_{s,t}} \left(\prod_{l=1}^n K^{j_l}(r_{l+1},r_l)\dd{x_{r_l}} \right) K^{i}(\tau,u) \dd{x_u}
    = \VSig{x}{K}^{wi,u}_{s,t}.
\end{multline}
Hence, $\bz = \VSig{x}{K}$ solves equation~\eqref{eq:vsig_fundamental}.
\end{proof}

To express the properties of the Volterra signatures more conveniently in the tensor setting, we formally extend the convolution operation $\ast$ from \Cref{def:convolution_coordinate} for both $y$ and $z$. We obtain an operation denoted by $\oast$, acting on paths in $T((\RR^m))$.

\begin{defn}\label{def:b2}
Let $\bz:\Delta^3\to T((\RR^m))$ be the full lift of a Volterra path $z\in\Vone$.
Then for any $\by: [0,T] \to T((\RR^m))$ such that $\by^{w}$ is measurable and bounded for all $w\in\mathcal{W}$ we define for all $n,k\in\NN$:
    \begin{equation}\label{eq:tensor_convol_def}
    \oconv{\by^{(n)}}{\bz}{(k)}{\tau}{s}{t} 
     := \sum_{\substack{w, v\in\mathcal{W}, \\|v| =n,|w|=k}}e_v\otimes e_w \conv{\by^v}{z}{w}{\tau}{s}{t}, \qquad (s,t,\tau) \in \Delta^3,
    \end{equation} with $\conv{\by^v}{z}{w}{\tau}{s}{t}$ still defined by \eqref{eq:convol_def_words},
    and further 
    \begin{equation}\label{eq:full_tensor_convol_def}
    \oconv{\by}{\bz}{}{\tau}{s}{t}
     :=\sum_{n=0}^\infty \sum_{k=0}^n \oconv{\by^{(n-k)}}{\bz}{(k)}{\tau}{s}{t}, \qquad (s,t,\tau) \in \Delta^3.
    \end{equation}
    We also denote by $\by\oast\bz$ the function from $\Delta^3$ to $T((\RR^m))$ given by $(s,t,\tau)\mapsto\oconv{\by}{\bz}{}{\tau}{s}{t}$.
\end{defn}

As a first demonstration of the utility of a compact notation like $\oast$, we will translate the inductive properties of~\Cref{lem:convolution_properties} into associative properties for the convolution product in $T((\RR^m))$. The proof is immediate and therefore omitted.

\begin{lem}\label{rem:convolution_properties_tensor}
    Let $\bz$ and $\by$ be as in Definition~\ref{def:b2}. Then
    \begin{equation}\label{eq:tensor_convol_iterate_0}
        \oconv{\by}{\bz}{(0)}{\tau}{s}{t} = \oconv{\by}{1}{}{\tau}{s}{t} =  \by_\tau, \qquad (s,t,\tau) \in \Delta^3.
    \end{equation}
    Furthermore, for $k,n\in\NN$:
\begin{equation}\label{eq:tensor_convol_iterate}\oconv{\oconv{\by}{\bz}{(k)}{\cdot}{s}{\cdot}}{\bz}{(n)}{\tau}{s}{t} = \oconv{\by}{\bz}{(n+k)}{\tau}{s}{t}, \qquad (s,t,\tau) \in \Delta^3. \end{equation}
    In particular, the linear equation \eqref{eq:vsig_fundamental_z} for $\bz$ can be recast as 
    \begin{align*}
        \bz^{\tau}_{s,t} = 1 + \oconv{\bz^{\cdot}_{s,\cdot}}{z}{}{\tau}{s}{t}, \qquad (s,t,\tau)\in\Delta^3.
    \end{align*}
\end{lem}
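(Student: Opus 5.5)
The plan is to reduce all three claims to coordinates, using Definition~\ref{def:b2} and Lemma~\ref{lem:convolution_properties}. Fix words $v$ and $w$ with $|v|=m'$ and $|w|=k$. The coefficient of $e_v\otimes e_w$ in $\oconv{\by^{(m')}}{\bz}{(k)}{\tau}{s}{t}$ is $\conv{\by^v}{z}{w}{\tau}{s}{t}$. Every identity in the lemma is linear in $\by$ and is graded by level, so it suffices to compare these coefficients.

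For \eqref{eq:tensor_convol_iterate_0}, take $k=0$. Then $w=\varnothing$, and the convention \eqref{eq:convol_def_empty} gives $\conv{\by^v}{z}{\varnothing}{\tau}{s}{t}=\by^v_\tau$. Summing over $v$ returns $\by_\tau$. The middle expression $\oconv{\by}{1}{}{\tau}{s}{t}$ is read in the same way: convolving with the trivial lift keeps only the level-$0$ part of $\bz$.

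For \eqref{eq:tensor_convol_iterate}, the inner object $\oconv{\by}{\bz}{(k)}{r}{s}{r}$ has coordinate $(vw)$ equal to $\conv{\by^v}{z}{w}{r}{s}{r}$, viewed as a function of $r$. Convolving once more at order $n$ with a word $u=j_1\cdots j_n$, we need to show
\[
\int_{\Delta^n_{s,t}} \conv{\by^v}{z}{w}{r_1}{s}{r_1}\, \dd z^{j_1,r_2}_{r_1}\cdots \dd z^{j_n,\tau}_{r_n} = \conv{\by^v}{z}{wu}{\tau}{s}{t}.
\]
We prove this by induction on $n$. The case $n=0$ is \eqref{eq:tensor_convol_iterate_0}, and the case $n=1$ is exactly the recursive identity \eqref{eq:conv_tower_property} applied to the word $wj_1$. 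For the inductive step, we write the outermost integral as a single $\dd z^{j_n,\tau}_{r_n}$ integration, apply the induction hypothesis to the inner $(n-1)$-fold integral, and then apply \eqref{eq:conv_tower_property} once more. Equivalently, we can unfold Definition~\ref{def:convolution_coordinate} directly: the nested simplices $\Delta^k_{s,r_1}$ inside $\Delta^n_{s,t}$ combine into $\Delta^{k+n}_{s,t}$, and the upper variables are chained in the correct order. The only point needing care is that all integrands are bounded and measurable, so Fubini applies and the integrals are well defined. Boundedness follows inductively over word length, as in the proof of Proposition~\ref{prop:fundamental_equation}.

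For the last claim, take $\by^\cdot=\bz^\cdot_{s,\cdot}$ and use Definition~\ref{def:b2} with $\by^{(m')}$ indexed by $v$ and convolution order $1$. The coordinate $vi$ of the order-one part $\oconv{\bz_{s,\cdot}}{\bz}{(1)}{\tau}{s}{t}$ is $\int_s^t \bz^{v,u}_{s,u}\,\dd z^{i,\tau}_u$. By \eqref{eq:fundamental_coord}, this equals $\bz^{vi,\tau}_{s,t}$. Hence \eqref{eq:vsig_fundamental_z} reads $\bz^\tau_{s,t}=1+\oconv{\bz^\cdot_{s,\cdot}}{z}{}{\tau}{s}{t}$. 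Here $\oast z$ is understood as convolution against the first level only, i.e.\ the $(1)$-component of $\oast\bz$.

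The main obstacle is bookkeeping rather than analysis. One has to check that the grading in \eqref{eq:full_tensor_convol_def} matches: the level-$n$ output collects $\by^{(n-k)}$ with order-$k$ convolutions, so the word $vw$ has length $n$. One also has to track the upper index $\tau$ versus the evaluation at $r=\tau$ in the iterated convolution.
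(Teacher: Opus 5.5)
Your coordinatewise argument is the one the paper has in mind when it calls the proof immediate (the proof is omitted there). It uses the convention \eqref{eq:convol_def_empty} for level zero, the tower property \eqref{eq:conv_tower_property} iterated by induction, and \eqref{eq:fundamental_coord} for the recast equation. However, your base case ``the case $n=0$ is \eqref{eq:tensor_convol_iterate_0}'' is false. Applying \eqref{eq:tensor_convol_iterate_0} to the inner path $r\mapsto[\by\oast\bz^{(k)}]^{r}_{s,r}$ returns its value at $r=\tau$, i.e.\ $[\by\oast\bz^{(k)}]^{\tau}_{s,\tau}$. The right-hand side of \eqref{eq:tensor_convol_iterate}, by contrast, is $[\by\oast\bz^{(k)}]^{\tau}_{s,t}$. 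For $k\ge1$ and $t<\tau$ these differ. Take $m=1$, $z^\tau_t=t$, $\by\equiv1$ and $k=1$: the left-hand side is $(\tau-s)e_1$, while the right-hand side is $(t-s)e_1$. So \eqref{eq:tensor_convol_iterate}, as stated for all $n\in\NN$, actually fails at $n=0$ off the diagonal. This is exactly the ``upper index $\tau$ versus evaluation at $r=\tau$'' point you set aside as mere bookkeeping.

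The fix is harmless. Restrict \eqref{eq:tensor_convol_iterate} to $n\ge1$; for $n=0$ it holds only if $k=0$ or $t=\tau$. Then start your induction at $n=1$, which is literally \eqref{eq:conv_tower_property} for the word $wj_1$. Your inductive step survives unchanged, because it only invokes the hypothesis with $(t,\tau)$ replaced by $(r_n,r_n)$, i.e.\ on the diagonal, where even the $n=0$ case is true. This restricted form is also all the paper uses later: only $n=1$ enters the proofs of \Cref{cor:Chen} and \Cref{prop:linear_VCDE}. The remaining parts of your proposal are correct: the $k=0$ identity, the grading check, and the recast fundamental equation with $\oast z$ read as the order-one part of $\oast\bz$.
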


\noindent

The notion of convolutional tensor product in Definition \ref{def:b2} also allows to express Chen's relation from Proposition \ref{eq:chen_coordinate} in a more elegant way. As promised above, we will prove this relation using the curent tensor product setting.
\begin{cor}\label{cor:Chen} Let $\bz$ be the full lift of a Volterra path $z\in\Vone$ considered as a function $\bz:\Delta^3\to T((\RR^m))$. Then $\bz$ satisfies
\begin{equation}\label{eq:chen_tensor}
    \bz^{\tau}_{s,t} = \oconv{\bz^{\cdot}_{s,u}}{\bz}{}{\tau}{u}{t} , \qquad (s,u,t,\tau) \in \Delta^4.
\end{equation}
\end{cor}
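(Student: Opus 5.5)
The plan is to use uniqueness for the fundamental linear equation in Proposition~\ref{prop:fundamental_equation}, restricted to the interval $[u,t]$. Fix $s\le u$ and define, for $(u,t,\tau)\in\Delta^3$,
\[
\by^{\tau}_{t} := \oconv{\bz^{\cdot}_{s,u}}{\bz}{}{\tau}{u}{t}.
\]
The goal is to show that $t\mapsto \bz^{\tau}_{s,t}$ and $t\mapsto \by^\tau_t$ both solve the same linear Volterra equation on $[u,T]$ with the same initial datum $\bz^{\tau}_{s,u}$. Since that equation determines its solution level by level, they must agree.

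\emph{Step 1: equation for the left side.} Split the integral in \eqref{eq:vsig_fundamental_z} at $u$:
\[
\bz^{\tau}_{s,t} = \bz^{\tau}_{s,u} + \int_u^t \bz^{r}_{s,r}\otimes \dd z^{\tau}_r .
\]
This is an inhomogeneous linear Volterra equation on $[u,t]$ with forcing term $\tau\mapsto \bz^\tau_{s,u}$.

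\emph{Step 2: the same equation for $\by$.} Expand $\by = \sum_{k\ge0}\oconv{\bz_{s,u}}{\bz}{(k)}{\tau}{u}{t}$.
\begin{itemize}
\item By \eqref{eq:tensor_convol_iterate_0}, the $k=0$ term equals $\bz^\tau_{s,u}$.
\item For $k\ge1$, the recursive identity \eqref{eq:conv_tower_property}, written in tensor form, gives
\[
\oconv{\by'}{\bz}{(k)}{\tau}{u}{t}=\int_u^t \oconv{\by'}{\bz}{(k-1)}{r}{u}{r}\otimes \dd z^\tau_r ,
\]
for any bounded measurable $\by'$. This is just \eqref{eq:tensor_convol_iterate} with $n=1$.
\item Summing over $k$, with finitely many terms at each tensor level, yields
\[
\by^\tau_t=\bz^\tau_{s,u}+\int_u^t \by^r_r\otimes \dd z^\tau_r .
\]
\end{itemize}
Here $\by^r_r$ means $\oconv{\bz_{s,u}}{\bz}{}{r}{u}{r}$, which matches the diagonal evaluation $\bz^r_{s,r}$ on the left side.

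\emph{Step 3: uniqueness.} Project onto coordinates as in the proof of Proposition~\ref{prop:fundamental_equation}. The level-$n$ component of $\int_u^t \mathbf{a}^r_r\otimes \dd z^\tau_r$ only involves level $n-1$ of $\mathbf{a}$. Hence the difference $\bz^\tau_{s,t}-\by^\tau_t$ vanishes by induction on the word length: the level-$0$ components are both $\bz^{\varnothing}=1$, and each higher level is an integral of the previous level's difference. Boundedness and measurability of all coordinates follow inductively, exactly as before.

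The main obstacle is bookkeeping in Step 2. One must check that the tensor-level decomposition \eqref{eq:full_tensor_convol_def}, where $\by^{(n-k)}$ is paired with $\bz^{(k)}$, commutes with the one-step recursion: appending the letter $i$ through $\dd z^{i,\tau}$ on the right must correspond to raising $k$ by one while keeping the $\by$-part $e_v$ on the left. Concretely, this means verifying coordinatewise that for words $v,w$ and a letter $i$,
\[
\conv{\bz^{v,\cdot}_{s,u}}{z}{wi}{\tau}{u}{t}=\int_u^t \conv{\bz^{v,\cdot}_{s,u}}{z}{w}{r}{u}{r}\,\dd z^{i,\tau}_r ,
\]
which is exactly \eqref{eq:conv_tower_property} with $y=\bz^{v,\cdot}_{s,u}$. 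The coordinate version, Proposition~\ref{prop:Chen_coordiante}, then follows by reading off the coefficient of $e_{i_1\cdots i_n}$ in \eqref{eq:chen_tensor}.
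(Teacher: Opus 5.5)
Your proposal is correct and follows essentially the same route as the paper: split the fundamental equation at $u$, show that both $t\mapsto\bz^{\tau}_{s,t}$ and $t\mapsto\oconv{\bz^{\cdot}_{s,u}}{\bz}{}{\tau}{u}{t}$ solve the same inhomogeneous equation with forcing $\bz^{\tau}_{s,u}$ (the latter via \eqref{eq:tensor_convol_iterate} with $n=1$ together with \eqref{eq:tensor_convol_iterate_0}), and conclude by level-by-level uniqueness. Your explicit coordinatewise check of the bookkeeping through \eqref{eq:conv_tower_property} with $y=\bz^{v,\cdot}_{s,u}$ spells out exactly what the paper leaves implicit when it invokes \eqref{eq:tensor_convol_iterate}.
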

\begin{proof}[Proof of \Cref{prop:Chen_coordiante} and \Cref{cor:Chen}]
    One first notices that \eqref{eq:chen_coordinate} is precisely the coordinate wise projection of \eqref{eq:chen_tensor}, hence the two identities are equivalent and it suffices to prove \eqref{eq:chen_tensor}.
    
    In order to prove \eqref{eq:chen_tensor}, let us start from equation \eqref{eq:vsig_fundamental} which characterizes the signature $\bz$, for $(s,t,\tau) \in \Delta^3$. Then we further split the interval $[s,t]$ into $[s,u] \cup [u,t]$, which yields 
\begin{equation}\label{eq:chen_decomp_11}
\begin{aligned}\bz^{\tau}_{s,t} &= 1 +\int_s ^t \bz^{r}_{s,r} \otimes \dd{z^\tau_r} = 1 +\int_s^u \bz^{r}_{s,r} \otimes \dd{z^\tau_r} + \int_u^t \bz^{r}_{s,r} \otimes \dd{z^\tau_r} .
    \end{aligned}
\end{equation}
On the right hand side of \eqref{eq:chen_decomp_11} we then recognize the expression for $\bz_{s,u}^\tau$ from \eqref{eq:vsig_fundamental}. We thus get \[
\bz^{\tau}_{s,t}=\bz^{\tau}_{s,u} + \int_u^t \bz^{r}_{s,r} \otimes \dd{z^\tau_r}.
\]
Now fix $(s,u) \in \Delta^2$ arbitrarily.
It follows from the above that $\by^\tau_t := \bz^\tau_{s,t}$ for $(t,\tau) \in \Delta^2_{u,T}$ solves the equation
\begin{align}\label{eq:chen_proof}
\by^\tau_t &= \bz^{\tau}_{s,u} +\int_u ^t \by^r_r \otimes \dd{z^\tau_r}, \qquad (t,\tau) \in \Delta^2_{u,T}.
\end{align}
Since the above equation can be solved by inductively projecting to tensor levels as in the proof of \Cref{prop:fundamental_equation}, it is clear that it admits a unique solution.
Hence, the claim now follows after verifying that this is equation is also solved by 
$\widetilde{\by}^\tau_t := \oconv{\bz^{\cdot}_{s,u}}{\bz}{}{\tau}{u}{t}$ for $(t,\tau) \in \Delta^2_{u,T}$.
To this end we plug $\widetilde{\by}$ into the right-hand side of \eqref{eq:chen_proof}, and invoke both definitions \eqref{eq:tensor_convol_def} and \eqref{eq:full_tensor_convol_def} to write 
\[
\bz^{\tau}_{s,u} +\int_u^t \widetilde{\by}_r^r\otimes \dd{z^\tau_r} = \bz^{\tau}_{s,u} +  \sum_{n=0}^\infty \sum_{k=0}^{n} \oconv{\oconv{\bz^{(n-k),\cdot}_{s,u}}{\bz}{(k)}{\cdot}{u}{\cdot}}{\bz}{(1)}{\tau}{u}{t}{}. 
\]
Now resort to \eqref{eq:tensor_convol_iterate} and then \eqref{eq:tensor_convol_iterate_0} to get
 \begin{align*}
\bz^{\tau}_{s,u} +\int_u^t \widetilde{\by}_r^r\otimes \dd{z^\tau_r}
&=  \bz^{\tau}_{s,u} +  \sum_{n=0}^\infty \sum_{k=0}^{n}\oconv{\bz^{(n-k),\cdot}_{s,u}}{\bz}{(k+1)}{\tau}{u}{t}  \\
&=  \sum_{n=0}^\infty \oconv{\bz^{(n),\cdot}_{s,u}}{\bz}{(0)}{\tau}{u}{t} +  \sum_{n=1}^\infty \sum_{k=1}^{n} \oconv{\bz^{(n-k),\cdot}_{s,u}}{\bz}{(k)}{\tau}{u}{t}.
\end{align*}
Gathering the two terms in the right hand side above and recalling our definition \eqref{eq:full_tensor_convol_def} again we end up with
\[\bz^{\tau}_{s,u} +\int_u^t \widetilde{\by}_r^r\otimes \dd{z^\tau_r}=
 \sum_{n=0}^\infty \sum_{k=0}^{n
}\oconv{\bz^{(n-k),\cdot}_{s,u}}{\bz}{(k)}{\tau}{u}{t}   =\bigg. \oconv{\bz^{\cdot}_{s,u}}{\bz}{}{\tau}{u}{t}.
\]
We have thus shown that $\widetilde{\by}$ solves \eqref{eq:chen_proof}, which proves that $\widetilde{\by} = {\by}$ and hence $\bz^{\tau}_{s,t} = \oconv{\bz^{\cdot}_{s,u}}{\bz}{}{\tau}{u}{t} $. 
\end{proof}

We end this section with a result that presents how linear Volterra equations are expanded in terms of the Volterra signature, thus justifying, in particular, the discussion from the introduction to this subsection and the choice of terminology ``fundamental linear Volterra equation''. To ensure global existence of such expansions, we require the Volterra signature to have an \emph{infinite radius of convergence}, by which we mean \begin{equation}
    \label{eq:ROC_Vsig}
    \sup_{(s,t,\tau) \in \Delta^{3}}\sum_{n=0}^\infty \lambda^n |\VSig{x}{K}_{s, t}^{(n),\tau}| <\infty, \qquad \lambda > 0.
\end{equation}
The following lemma shows that condition \eqref{eq:ROC_Vsig} is fulfilled under mild assumptions on $K$.
\begin{lem}\label{lem:ROC_examples}
Let $K:\Delta^2 \rightarrow \cL(\RR^d;\RR^m)$ be a matrix-valued kernel. Assume that $K$ satisfies an improved version of \cref{def:kernel_class}, that is suppose $K$ is in  $L^{\infty,p}(\Delta^2;\cL(\RR^d;\RR^m))$ for some $p>1$. Otherwise stated, we assume \begin{equation}\label{eq:Lp_kernel}
\Vert K \Vert_{L^{\infty,p}} := \sup_{t \in [0,T]}\left ( \int_0^
t|K(t,s)|^p\dd s \right )^{1/p}< \infty.
\end{equation} Then condition \eqref{eq:ROC_Vsig} is fulfilled. In particular, $\VSig{x}{K}$ can be considered as an element of $\cT^2$ (see \Cref{def:inner_pro}).
\end{lem}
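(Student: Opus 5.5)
We will bound each tensor level $\VSig{x}{K}^{(n),\tau}_{s,t}$ by an inductive estimate that yields factorial-type decay, using Hölder's inequality against the $L^{\infty,p}$ norm of $K$. Throughout, set $L:=\sup_r|\dot x_r|<\infty$, which is finite since $x$ is Lipschitz, and let $q$ be the conjugate exponent, $1/p+1/q=1$. Define
$$f_n(u):=\sup_{\tau\in[u,T]}\,\big|\bz^{(n),\tau}_{s,u}\big|, \qquad u\in[s,T],$$
using a norm on $(\RR^m)^{\otimes n}$ that is submultiplicative under $\otimes$, such as the projective or Hilbert--Schmidt norm. The goal is to show
$$f_n(t)\le C^n\frac{(t-s)^{n/q}}{\Gamma(n/q+1)}$$
for a constant $C$ that depends only on $L$, $\Vert K\Vert_{L^{\infty,p}}$ and $q$. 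This bound gives \eqref{eq:ROC_Vsig} at once, because $\sum_n (\lambda C)^n T^{n/q}/\Gamma(n/q+1)$ is a Mittag-Leffler-type series with infinite radius of convergence.

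The induction runs on the fundamental equation of Proposition~\ref{prop:fundamental_equation}. Projecting to level $n+1$ gives
$$\bz^{(n+1),\tau}_{s,t}=\int_s^t \bz^{(n),u}_{s,u}\otimes K(\tau,u)\,\dd x_u .$$
Hence, for $\tau\ge t$,
$$\big|\bz^{(n+1),\tau}_{s,t}\big|\le L\int_s^t f_n(u)\,|K(\tau,u)|\,\dd u \le L\,\Vert K\Vert_{L^{\infty,p}}\Big(\int_s^t f_n(u)^q\,\dd u\Big)^{1/q}.$$
This does not immediately reproduce a fractional integral, so we apply Hölder more carefully. Substituting the inductive bound $f_n(u)\le C^n (u-s)^{n/q}/\Gamma(n/q+1)$ and computing
$$\Big(\int_s^t (u-s)^{n}\,\dd u\Big)^{1/q}=\frac{(t-s)^{(n+1)/q}}{(n+1)^{1/q}}$$
yields
$$f_{n+1}(t)\le L\Vert K\Vert_{L^{\infty,p}}\,C^n\,\frac{(t-s)^{(n+1)/q}}{\Gamma(n/q+1)\,(n+1)^{1/q}}.$$

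The main obstacle is confirming that this recursion still produces super-geometric decay. Iterating it gives
$$f_n\le \frac{(L\Vert K\Vert_{L^{\infty,p}})^n\,(t-s)^{n/q}}{(n!)^{1/q}}.$$
We therefore intend to replace the Gamma-function ansatz with the cleaner bound $f_n(t)\le A^n (t-s)^{n/q}/(n!)^{1/q}$, where $A=L\Vert K\Vert_{L^{\infty,p}}$, and check the inductive step exactly as computed above. This step is where $p>1$ is essential: for $p=1$ we would have $q=\infty$ and lose the factorial. Since $\sum_n (\lambda A)^n T^{n/q}/(n!)^{1/q}<\infty$ for every $\lambda>0$, for instance by the root test, because $(n!)^{1/(qn)}\to\infty$, we obtain \eqref{eq:ROC_Vsig} uniformly in $(s,t,\tau)\in\Delta^3$.

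Finally, for membership in $\cT^2$ we note that the Hilbert--Schmidt norm on $V^{\otimes n}$ is submultiplicative and equivalent to the chosen norm up to a factor $c^n$. Taking $\lambda=c$ in \eqref{eq:ROC_Vsig} then gives
$$\Vert\VSig{x}{K}^{\tau}_{s,t}\Vert_{\cT^2}^2=\sum_n\big|\bz^{(n),\tau}_{s,t}\big|_{HS}^2\le\Big(\sum_n c^n\big|\bz^{(n),\tau}_{s,t}\big|\Big)^2<\infty.$$
The only remaining technical point is measurability of $f_n$ as a supremum over $\tau$. To avoid it, we will carry out the induction with the pointwise bound holding for every $\tau\ge u$ directly, rather than through $f_n$.
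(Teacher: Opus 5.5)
Your proof is correct and arrives at exactly the paper's estimate, $|\VSig{x}{K}^{(n),\tau}_{s,t}|\le(\|\dot x\|_\infty\|K\|_{L^{\infty,p}})^n\big((t-s)^n/n!\big)^{1/q}$, with the same ingredients: H\"older with the conjugate exponent against the $L^{\infty,p}$ norm, and a factorial that comes from the simplex volume. The only difference is bookkeeping: the paper shows $\int_{\Delta^n_{s,t}}\prod_{l}|K(r_{l+1},r_l)|^p\,\dd r\le\|K\|_{L^{\infty,p}}^{np}$ and then applies H\"older once on all of $\Delta^n_{s,t}$, whereas you apply it one variable at a time through the fundamental equation. (You can drop both the Gamma-function detour and the supremum over $\tau$ in $f_n$, since only the diagonal $u\mapsto\bz^{(n),u}_{s,u}$ enters the recursion and your H\"older bound is already uniform in $\tau\ge t$.)
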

\begin{proof}
    Defining the scalar kernel $k=|K|$, for any $x\in \cC^{0,1}([0,T];\RR^d)$ and $(s,t) \in \Delta^2$ we have \[
    |\VSig{x}{K}_{s, t}^{(n),\tau}| \leq \int_{\Delta_{s,t}^n}\prod_{l=1}^n|K(r_{l+1},r_l)| | \dot{x}_{r_l} |\dd r_l \leq \Vert \dot{x}\Vert_{\infty}^n \int_{\Delta_{s,t}^n}\prod_{l=1}^nk(r_{l+1},r_l)\dd r_l.
    \] 
    Moreover, an inductive argument shows that $\int_{\Delta_{s,t}^n}\prod_{l=1}^nk(r_{l+1},r_l)^p\dd r_l \leq \Vert k \Vert_{L^{\infty,p}}^{np},$ so that by the Hölder inequality we have \[
    \int_{\Delta_{s,t}^n}\prod_{l=1}^nk(r_{l+1},r_l)\dd r_l \leq \Vert k \Vert_{L^{\infty,p}}^{n} \mathrm{Vol}(\Delta_{s,t}^n)^{1/q}, \quad q = p/(p-1) >0.
    \]
    Using $\mathrm{Vol}(\Delta_{s,t}^n) = \frac{(t-s)^n}{n!}$, we can conclude that $$\sup_{(s,t,\tau)\in \Delta^3}\sum_{n=0}^\infty \lambda^n |\VSig{x}{K}_{s, t}^{(n),\tau}| \leq \sum_{n\geq 0} \frac{z^n}{(n!)^{1/q}}<\infty, \quad z = \lambda \Vert \dot{x}\Vert_{\infty} \Vert k \Vert_{L^{\infty,p}} T^{1/q},$$ so that \eqref{eq:ROC_Vsig} indeed holds true.
\end{proof}
We are now ready to state our representation of linear equations in terms of Volterra signatures.
\begin{prop}\label{prop:linear_VCDE}
Let $\xi \in \RR^k$, $x\in \cC^{0,1}([0,T];\RR^{d})$ and $K \in L^{\infty,p}(\Delta^2;\cL(\RR^d;\RR^m))$ for some $p>1$. For $A\in \mathcal{L}(\RR^m; \mathcal{L}(\RR^k; \RR^k))$, we consider the linear Volterra equation \begin{equation}\label{eq:volterra}
y_{t}
=
\xi \;+\;\int_{t_0}^{t} A\big(K(t,u)\,\dot x_u\big)\,y_{u} \dd u, \qquad t\in[t_0,T].
\end{equation} Also recall our convention \eqref{eq:linear_tensor_maps} for linear maps acting on tensors, specialized here to $E= \RR^k$. Then it holds that  
\begin{equation}\label{eq:psi_eq}
\Delta^3 \ni (s,t,\tau) \mapsto \Psi_{s,t}^\tau = \sum_{n=0}^\infty\widetilde A\big(\VSig{x}{K}^{(n),\tau}_{s,t}\big)
\end{equation}
lies in $L^{\infty}(\Delta^3;\mathcal{L}(\RR^k;\RR^k))$.
Moreover, for any initial condition $\xi \in \RR^k$, the path $y_t =\Psi_{t_0,t}^t (\xi)$ is the unique, bounded and measurable solution to \eqref{eq:volterra}.

\end{prop}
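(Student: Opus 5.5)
The plan has three steps: bound $\Psi$, show that $y_t=\Psi^t_{t_0,t}(\xi)$ solves \eqref{eq:volterra} by projecting the fundamental equation level by level, and prove uniqueness by iterating the equation.

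\emph{Boundedness.} Since $A$ is linear on the finite-dimensional space $\RR^m$, there is $C_A$ with $\|A(v)\|_{\mathrm{op}}\le C_A|v|$. By \eqref{eq:linear_tensor_maps}, $\widetilde A$ is multiplicative on elementary tensors, so $\|\widetilde A(\mathbf v)\|\le C^n|\mathbf v|$ for $\mathbf v\in(\RR^m)^{\otimes n}$, with a constant $C$ depending only on $C_A$ and the chosen norms. Lemma~\ref{lem:ROC_examples} applies because $K\in L^{\infty,p}$, and \eqref{eq:ROC_Vsig} with $\lambda=C$ gives
\[
\sup_{\Delta^3}\sum_n\|\widetilde A(\VSig{x}{K}^{(n),\tau}_{s,t})\|<\infty .
\]
Hence the series \eqref{eq:psi_eq} converges absolutely and uniformly. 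Each term is measurable, since $\bz^w$ is measurable and bounded by the proof of Proposition~\ref{prop:fundamental_equation}. Therefore $\Psi\in L^\infty(\Delta^3;\mathcal L(\RR^k))$.

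\emph{Existence.} Project the fundamental equation \eqref{eq:vsig_fundamental} onto level $n+1$:
\[
\bz^{(n+1),\tau}_{s,t}=\int_s^t \bz^{(n),u}_{s,u}\otimes K(\tau,u)\dot x_u\,\dd u .
\]
Apply $\widetilde A$ and use the ordering convention $\widetilde A(\mathbf v\otimes v)=A(v)\widetilde A(\mathbf v)$ for $v\in\RR^m$, which follows from \eqref{eq:linear_tensor_maps}. This gives
\[
\widetilde A(\bz^{(n+1),\tau}_{s,t})=\int_s^t A\big(K(\tau,u)\dot x_u\big)\,\widetilde A(\bz^{(n),u}_{s,u})\,\dd u .
\]
Summing over $n\ge0$ and adding the level-zero term $\Id$, we get
\[
\Psi^\tau_{s,t}=\Id+\int_s^t A(K(\tau,u)\dot x_u)\Psi^u_{s,u}\,\dd u .
\]
Exchanging sum and integral is justified by dominated convergence. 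The majorant is $C_A|K(\tau,u)||\dot x_u|\sum_n\|\widetilde A(\bz^{(n)})\|$, which is integrable in $u$ because $\int_0^\tau|K(\tau,u)|\dd u$ is bounded uniformly in $\tau$ and the series is uniformly bounded. Now take $s=t_0$, $\tau=t$ and apply both sides to $\xi$; the resulting identity is exactly \eqref{eq:volterra} for $y_t=\Psi^t_{t_0,t}\xi$. Measurability of $t\mapsto y_t$ needs a little care, because $\Psi$ is only defined almost everywhere. I would handle this either by working with the genuine pointwise iterated integrals, which are defined for every $(s,t,\tau)$ under our hypotheses, or by accepting a.e.\ identities.

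\emph{Uniqueness.} This is the step I expect to be the main obstacle. Let $e=y-\tilde y$ be the difference of two bounded measurable solutions. Then $e_t=\int_{t_0}^t A(K(t,u)\dot x_u)e_u\,\dd u$. Iterating this identity $n$ times gives
\[
|e_t|\le \|e\|_\infty\,(C_A\|\dot x\|_\infty)^n\int_{\Delta^n_{t_0,t}}\prod_{l=1}^n|K(r_{l+1},r_l)|\,\dd r_l ,
\]
with the convention $r_{n+1}=t$. The Hölder estimate from the proof of Lemma~\ref{lem:ROC_examples} bounds the simplex integral by $\|K\|^n_{L^{\infty,p}}\,(T^n/n!)^{1/q}$, and this tends to zero as $n\to\infty$ even after multiplying by the geometric factor. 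Hence $e\equiv0$. The point to watch is that the inductive bound in that lemma really is uniform in the endpoint variable $\tau=t$; this is where the assumption $p>1$ is used, rather than mere $L^{\infty,1}$ integrability.
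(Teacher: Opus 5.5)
Your proposal is correct. For boundedness and existence it follows the paper's argument: the paper also controls $\Psi$ through Lemma~\ref{lem:ROC_examples} together with $|\widetilde A(\cdot)|\le |A|^n|\cdot|$ on level $n$. It likewise obtains the equation by applying $\widetilde A$ to the fundamental equation level by level, using $\widetilde A(\mathbf v\otimes v)=A(v)\widetilde A(\mathbf v)$, and summing. Two of your points are details the paper leaves implicit. One is the dominated-convergence justification for exchanging sum and integral. The other is the observation that $y$ only sees the diagonal $\tau=t$, a null set of $\Delta^3$, so one should work with the pointwise-defined iterated integrals rather than an a.e.\ class.

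The genuine difference is in uniqueness. The paper derives $|y_t-\hat y_t|\le C\int_{t_0}^t|K(t,u)|\,|y_u-\hat y_u|\,\dd u$ and invokes the Volterra--Gronwall lemma \cite[Lemma 9.8.2]{gripenberg1990volterra}. You instead iterate this inequality and reuse the H\"older simplex estimate from the proof of Lemma~\ref{lem:ROC_examples}. Your route is self-contained and recycles an estimate already in the paper; the citation is shorter.

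One small correction to your final remark. Uniformity in the endpoint holds for every $p\ge1$, since $\|K\|_{L^{\infty,p}}$ already takes the supremum over the first variable. What $p>1$ buys is the factor $\mathrm{Vol}(\Delta^n)^{1/q}\le (T^n/n!)^{1/q}$ with $q<\infty$. That factorial decay is what beats the geometric factor $(C_A\|\dot x\|_\infty\|K\|_{L^{\infty,p}})^n$. For $p=1$ the iteration alone only gives $\|K\|_{L^{\infty,1}}^n$, with no decay.
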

\begin{proof}
 By definition of $\tilde{A}$ and \cref{lem:ROC_examples}, it readily follows that \[
 \sup_{(s,t,\tau) \in \Delta^3} \sum_{n\geq 0} |\widetilde A\big(\VSig{x}{K}^{(n),\tau}_{s,t}\big)| \leq \sup_{(s,t,\tau) \in \Delta^3} \sum_{n\geq 0} |A|^n |\VSig{x}{K}^{(n),\tau}_{s,t}|< \infty.
 \]
 Therefore the Volterra increment $\Psi$, as introduced in \eqref{eq:psi_eq}, is analytically well defined. In addition, as a direct consequence of the definition \eqref{eq:linear_tensor_maps} of $\widetilde{A}$, it holds that $A(v)\widetilde{A}(v_1 \otimes \cdots \otimes v_n) = \widetilde{A}(v_1 \otimes \cdots \otimes v_n \otimes v)$ for all $v_1,\dots,v_n,v \in \RR^m$. Thus we get \begin{align*}
     \int_{t_0}^tA(K(t,u)\dot{x}_u)\Psi_{t_0,u}^u(\xi)\dd u & = \sum_{n\geq 0} \int_{t_0}^t\widetilde{A}\big (\VSig{x}{K}_{t_0,u}^{(n),u} \otimes K(t,u) \dd x_u\big )(\xi) 
 \end{align*}
 Therefore resorting to \eqref{eq:tensor_convol_iterate} we find
 \[\int_{t_0}^tA(K(t,u)\dot{x}_u)\Psi_{t_0,u}^u(\xi)\dd u
=\sum_{n\geq 1} \widetilde{A}(\VSig{x}{K}_{t_0,t}^{(n),t})(\xi)  = \Psi_{t_0,t}^t(\xi)-\xi,
 \] where the second identity stems from \eqref{eq:psi_eq}. We have thus proved that  $y_t = \Psi_{t_0,t}^{t}(\xi)$ indeed solves \eqref{eq:volterra}. Finally, uniqueness follows from a Gronwall inequality for Volterra equations: Assume $y$ and $\hat{y}$ are bounded and measurable solutions to \eqref{eq:volterra}, and define $f(t)=|y_t-\hat{y}_t|$. Then by linearity we have \[
 f(t) \leq |A | \Vert x \Vert_{\cC^{0,1}}\int_{t_0}^t |y_u-\hat{y}_u||K(t,u)|\dd u = C \int_{t_0}^tf(u)|K(t,u)| \dd u.
 \] An application of the Volterra-Gronwall Lemma \cite[Lemma 9.8.2]{gripenberg1990volterra} shows that $f=0$ a.e., which concludes the proof.
\end{proof}

\subsection{Finite state space representation}\label{sec:dyn_exp}
As mentioned in the introduction, Volterra signatures are useful objects when one wishes to model systems with memory. However, one common pitfall of all representations for this type of systems is their computational cost. In this section we focus on a class of exponential kernels which encompasses relevant examples and leads to simplified computations. This class is defined below.
\begin{defn}\label{def:finite_state_space_kernels}
    Consider a constant matrix $\Lambda \in \RR^{R\times R}$ and a family of coefficients $\{A_r,b_r:1\leq r \leq q\}$ with $A_r \in \RR^{m\times d}$ and $b_r \in \RR^{R}$. Denote by $\mathbf{1}^\top$ the $\RR^{1,R}$ row vector $(1,\dots,1)$. For $(s,t) \in \Delta^2$ we define \begin{equation}\label{eq:exponential_kernel}
K_{A,b}^{\Lambda}(t,s) = \sum_{r=1}^q (\mathbf{1}^{\top} e^{-\Lambda(t-s)}b_r) A_r \in \cL(\RR^d;\RR^m).
\end{equation} 
\end{defn}
\begin{rem}\label{rem:exp_remark}
    Notice that for diagonal matrices $\Lambda$, Definition \ref{def:finite_state_space_kernels}  includes, in particular, sums of exponential kernels like $k(t,s) = \sum_{i=1}^R\alpha_ie^{-\lambda_i(t-s)}$. Such multi-factor kernels are widely used to model processes with short-term or fading memory, 
as the exponential decay naturally captures the rapid loss of dependence on past inputs. 
Moreover, completely monotone kernels - such as the fractional kernels %
 - can be arbitrarily well approximated by finite sums of exponential factors. For these reasons, Volterra systems driven by (sums of) exponential kernels appear in various applications, 
most prominently in path-dependent volatility modeling \cite{abi2019multifactor,abi2025volatility,bayer2023markovian,guyon2023volatility}, 
and therefore it is natural to study the Volterra signature induced by this type of kernel matrices.
\end{rem}
\begin{rem}\label{rem:prony_kernels}
Beyond (sum of) exponential kernels, the class \eqref{eq:exponential_kernel} also includes mixed exponential-polynomial kernels
\begin{equation}\label{c1}
K(t,s) = \sum_{r=1}^Q \sum_{l=1}^{m_r} e^{-\lambda_r(t-s)}\frac{(t-s)^{l-1}}{(l-1)!}A_{r,l}, \quad A_{r,l} \in \mathbb{R}^{m\times d}, \quad m_i \in \mathbb{N}.
\end{equation}
More generally, it also covers damped periodic (Prony) kernels obtained by allowing complex conjugate pairs of exponents, which lead to factors of the form $e^{-a(t-s)}\cos(\omega(t-s))$ and $e^{-a(t-s)}\sin(\omega(t-s))$, possibly multiplied by polynomials in $(t-s)$. More details on such representations, as well as their implications regarding computational aspects of Volterra signatures, are discussed in the accompanying article \cite{ii_part}.
\end{rem}

The main observation of this section is that Volterra signatures with kernels restricted to the class \eqref{eq:exponential_kernel} can be realized as the solution to a system of \emph{mean-reverting equations} in the tensor algebra. As motivated in the introduction, this observation already suggests the possibility of computing
\(\VSig{x}{K_{A,b}^{\Lambda}}\) with \emph{linear} complexity, much like in the classical case
\(K=I_d\). Indeed, the dynamics of the
lifted Volterra signature reduce to a classical controlled differential equation in the
tensor algebra, in contrast to the general case~\eqref{eq:vsig_fundamental} involving convolutions. %

\begin{prop}\label{prop:mean_reverting_prop}
    Let $x \in \cC^{0,1}([0,T];\RR^d)$ and consider a kernel $K_{A,b}^\Lambda$ as in Definition \ref{def:kernel_class}. Then the Volterra signature related to $K_{A,b}^\Lambda$ can be decomposed as  \[
    \VSig{x}{K_{A,b}^{\Lambda}}_{s,t}^t = 1+ \sum_{\ell =1}^R \bZ_{s,t}^{\ell},
    \] where $\bZ_{s,t} \in T((\RR^m))^R$ solves the  system of ordinary differential equations \begin{equation}\label{eq:mean_reverting}
    \bZ_{s,s}= \mathbf{0}, \quad \dd \bZ_{s,t} = -\Lambda. \bZ_{s,t}\dd t + \left (1+\sum_{i=1}^R\bZ^i_{s,t} \right)\otimes \dd (B.x_t), 
    \end{equation} and where the vectors $\Lambda. \bZ_{s,t}$, $B.x_t$ are respectively defined by \begin{equation}\label{eq:vector_prod_lifts}
    (\Lambda. \bZ_{s,t})_{\ell} := \sum_{k=1}^R\Lambda_{\ell k}\bZ_{s,t}^k \in T((\RR^m)), 
    \quad\text{and}\quad 
    (B.x_t)_\ell := \sum_{r=1}^q b_r^\ell A_rx_t \in \RR^m.
    \end{equation}  
    \end{prop}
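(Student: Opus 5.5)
The plan is to introduce the natural state variables directly and check that they solve \eqref{eq:mean_reverting}; uniqueness of that linear ODE system then identifies $\bZ$. Fix $s$ and write $\bz^\tau_{s,t}=\VSig{x}{K^\Lambda_{A,b}}^\tau_{s,t}$. For $t\ge s$ set
\[
\bZ^\ell_{s,t} := \int_s^t \bz^{u}_{s,u}\otimes \Big(\sum_{r=1}^q \big(e^{-\Lambda(t-u)}b_r\big)_\ell A_r\Big)\dd x_u \in T((\RR^m)), \qquad \ell=1,\dots,R .
\]
This is defined level by level, as in the proof of \Cref{prop:fundamental_equation}. Summing over $\ell$ replaces $(e^{-\Lambda(t-u)}b_r)_\ell$ by $\mathbf 1^\top e^{-\Lambda(t-u)}b_r$, which reproduces the kernel \eqref{eq:exponential_kernel}. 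The fundamental equation \eqref{eq:vsig_fundamental} evaluated on the diagonal $\tau=t$ then gives
\[
\bz^t_{s,t}=1+\sum_{\ell=1}^R\bZ^\ell_{s,t}.
\]
This is exactly the claimed decomposition, so the content of the proposition reduces to the dynamics of $\bZ$.

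Next I differentiate $\bZ_{s,t}$ in $t$, working tensor level by level. On each level the integrand is bounded, since $\dot x\in L^\infty$ and $\bz^{w}$ is bounded. The map $t\mapsto e^{-\Lambda(t-u)}$ is smooth with derivative $-\Lambda e^{-\Lambda(t-u)}$. A Leibniz rule for $t\mapsto\int_s^t F(t,u)\,\dd u$ then gives, for a.e.\ $t$,
\[
\tfrac{\dd}{\dd t}\bZ^\ell_{s,t} = \bz^t_{s,t}\otimes\sum_r b_r^\ell A_r\dot x_t \;-\;\sum_k\Lambda_{\ell k}\bZ^k_{s,t}.
\]
The first term is the boundary contribution at $u=t$, where $e^{0}=\Id$. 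The second term comes from differentiating the exponential inside the integral and regrouping the components, using $(\Lambda e^{-\Lambda(t-u)}b_r)_\ell=\sum_k\Lambda_{\ell k}(e^{-\Lambda(t-u)}b_r)_k$. Substituting $\bz^t_{s,t}=1+\sum_i\bZ^i_{s,t}$ from the first step gives \eqref{eq:mean_reverting} in the notation \eqref{eq:vector_prod_lifts}, and $\bZ_{s,s}=\mathbf 0$ holds trivially.

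The Leibniz step is the one place needing care, since $x$ is only Lipschitz and the statement is an ODE in integral (absolutely continuous) form. To avoid differentiating under the integral sign, I will use the variation-of-constants identity instead. Write
\[
e^{-\Lambda(t-u)}=\Id-\int_u^t\Lambda e^{-\Lambda(v-u)}\,\dd v
\]
and apply Fubini, which is legitimate because everything is bounded on each level. This gives
\[
\bZ_{s,t}=\int_s^t\bz^u_{s,u}\otimes\dd(B.x_u)-\int_s^t\Lambda.\bZ_{s,v}\,\dd v ,
\]
which is precisely the integral form of \eqref{eq:mean_reverting}.

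Finally, on each tensor level \eqref{eq:mean_reverting} is a finite-dimensional linear ODE. Its forcing at level $n$ involves only level $n-1$ of $\bZ$, so it has a unique absolutely continuous solution, determined inductively in $n$. Hence the $\bZ$ constructed above is the solution referred to in the statement, which completes the argument.
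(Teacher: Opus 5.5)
Your proposal is correct and follows essentially the same route as the paper. The paper defines the same $\bZ^\ell_{s,t}$ by splitting $\mathbf{1}^\top e^{-\Lambda(t-u)}b_r$ into its $R$ components inside the fundamental equation \eqref{eq:vsig_fundamental} at $\tau=t$. It then differentiates in $t$ via the Leibniz rule and $\frac{\dd}{\dd t}e^{-\Lambda(t-u)}=-\Lambda e^{-\Lambda(t-u)}$, which produces $-\Lambda.\bZ_{s,t}$ and the boundary term $\bz^t_{s,t}\otimes\dd(B.x_t)$. Your Fubini/variation-of-constants replacement for the pointwise differentiation and your level-by-level uniqueness argument are sound refinements that the paper leaves implicit.
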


\begin{rem}
Let $m=d$, $q=1$ and $ A_{1} = I_d$. Choosing $\Lambda = \mathrm{diag}(\lambda_1,\dots,\lambda_R)$ and $b_1= (\alpha_1,\dots,\alpha_R)^{\top},$ for some $\lambda_i,\alpha_i \in \mathbb{R}_+$, we have \begin{equation}\label{eq:diagonal_exp}
    K_{A,b}^{\Lambda}(t,s) := \sum_{\ell=1}^R\alpha_\ell e^{-\lambda_\ell(t-s)}I_{d}
\end{equation} in \eqref{eq:exponential_kernel}, that is the standard sum of exponentials kernel. Projected to the first level $\pi_1(T((\RR^d))^R) \cong \RR^{d \times R}$, the dynamics of $Z_{s,t} = \pi_1(\bZ_{s,t})$ in \cref{eq:mean_reverting} read
\begin{equation}\label{eq:OU_repre}
dZ^\ell_{s,t} = - \lambda_\ell Z^\ell_{s,t} \dd t + \alpha_\ell \dd x_t, \quad (s,t) \in \Delta^2, \quad 1 \leq \ell \leq R.
\end{equation}
Note that the representation \eqref{eq:OU_repre} corresponds to the multivariate Ornstein–Uhlenbeck dynamics, acting as a \emph{Markovian lift} above the Volterra path $\int_0^tK(t,s)\dd x_s$. In the same way, we may view $\bZ$ in \eqref{eq:mean_reverting} acting as a lift above $\mathrm{VSig}$ for exponential kernels, which leads to computational benefits as illustrated in  \cite{ii_part}.
\end{rem}

\begin{proof}[Proof of Proposition \ref{prop:mean_reverting_prop}]
We start by expressing the equation for $\bz = \VSig{x}{K}$ and $K=K_{A,b}^\Lambda$ in convolutional form thanks to Proposition \ref{prop:fundamental_equation}. That is plugging the expression~\eqref{eq:exponential_kernel} for $K$ into equation \eqref{eq:vsig_fundamental}, we get that $\VSig{x}{K}$ satisfies
\[
\VSig{x}{K}_{s,t}^t- 1  = \int_s^t\VSig{x}{K}_{s,u}^{u} \otimes \left (\sum_{r=1}^q (\mathbf{1}^{\top} e^{-\Lambda(t-u)}b_r) A_r \right)\dd x_u. 
\]
Expanding part of the matrix products in coordinates, we thus obtain 
\begin{equation}\label{eq:proof_MR_1}
    \VSig{x}{K}_{s,t}^t- 1 = \sum_{i=1}^R {\bZ}_{s,t}^{i}, 
\end{equation}
where the increments ${\bZ}_{s,t}^{i}$ are defined by
\begin{equation}\label{c2}
\bZ^i_{s,t} = \sum_{r =1}^q \sum_{j=1}^R \int_s^t (e^{-\Lambda(t-u)})_{ij}b_r^j\VSig{x}{K}_{s,u}^{u} \otimes \dd (A_{r} x_u).
\end{equation}
Let us now give more information about the dynamics \eqref{eq:proof_MR_1}. First we clearly have ${\bZ}^{\ell}_{s,s}= 0$. Next an application of the Leibniz rule (with respect to $t$) to expression \eqref{c2} shows that 
\begin{multline}\label{eq:MR_proof_2}
    \dd {\bZ}_{s,t}^\ell  
    = \sum_{r =1}^q \sum_{j=1}^R\left [\int_s^t \frac{\dd }{ \dd t} \left (e^{-\Lambda(t-u)}\right)_{\ell j}b_r^j\VSig{x}{K}_{s,u}^u \otimes \dd (A_{r} x_u) \right ]\dd t \\ 
    + \sum_{r=1}^q \sum_{j=1}^R 1_{\{j= \ell \}}b_r^j\VSig{x}{K}_{s,t}^{t} \otimes \dd (A_{r} x_u),
\end{multline} where we have used the fact that $e^{-\Lambda t}|_{t=0}=\mathrm{Id}_R$ to get the term $1_{\{j=\ell\}}$. In addition, from basic matrix exponential calculus \cite{higham2008functions} we know that \[
\frac{\dd}{\dd t}(e^{-\Lambda(t-s)})_{ij}=(-\Lambda e^{-\Lambda(t-s)})_{ij} = -\sum_{k=1}^R \Lambda_{ik}(e^{-\Lambda(t-s)})_{kj}.
\]  Reporting this idendity into \eqref{eq:MR_proof_2} we obtain 
\begin{multline}\label{eq:proof_MR_3}
    \dd {\bZ}_{s,t}^\ell  =  
    -\sum_{k=1}^R\Lambda_{\ell k}  \sum_{r =1}^q \sum_{j=1}^R \left [\int_s^t(e^{-\Lambda(t-s)})_{kj}  b_r^j\VSig{x}{K}_{s,u}^u \otimes \dd (A_{r} x_u) \right ]\dd t \\ 
     + \VSig{x}{K}_{s,t}^{t} \otimes \left ( \sum_{r=1}^q b_r^{\ell}A_{r} \right ) \dd x_t.
\end{multline}
We now easily recognize the relation \eqref{c2} defining the coordinates $\bZ_{s,t}^i$ (of $\VSig{x}{K}_{s,t}^t-1)$ in the right hand side of \eqref{eq:proof_MR_3}. Thus we end up with
\begin{equation}\label{eq:proof_MR_4}
     \dd {\bZ}_{s,t}^\ell = - \sum_{k=1}^R\Lambda_{\ell k} \bZ_{s,t}^{k} \dd t + \left (1 + \sum_{i=1}^R \bZ_{s,t}^i \right )\otimes  \left ( \sum_{r=1}^q b_r^{\ell}A_{r} \right ) \dd x_t,
\end{equation}
for all $\ell=1,\dots,R$. Recalling our notation \eqref{eq:vector_prod_lifts}, this concludes the proof of our claim~\eqref{eq:mean_reverting}.
\end{proof}

As mentioned above, the advantage of equation \eqref{eq:mean_reverting} is that one obtains an Ornstein-Uhlenbeck type system which is easy to simulate. On top of that, in case $K$ is a scalar-valued kernel, one obtains an explicit solution in terms of the classical signature of the underlying noise $x$. We detail this in the following proposition.

\begin{prop} \label{prop:mean reverting 1d}
We consider the same situation as in Proposition \ref{prop:mean_reverting_prop}, albeit in the scalar case $q=R=1$ and $K_{A,b}^\Lambda(t,s)=:K_\alpha^\lambda(t,s)= \alpha e^{-\lambda(t-s)}I_d$. Recall that the usual signature $\mathrm{Sig}(x)$ is defined by~\eqref{usual_sig_def}. Then equation \eqref{eq:mean_reverting} admits an explicit solution as a function of $\mathrm{Sig}(x)$:
    \begin{equation}\label{eq:exp_dynamical_sig}
    \VSig{x}{K}_{s,t}^t =   e^{-\lambda (t-s)}\mathrm{Sig}(\alpha x)_{s,t}+  \lambda \int_s^t   e^{-\lambda (t-r)} \mathrm{Sig}(\alpha x)_{r,t} \dd{r}, \quad (s,t) \in \Delta^2.
    \end{equation}
    More generally, for any $(s,t,\tau) \in \Delta^3$ we have \begin{equation}\label{eq:exp_dynamics_gen}
    \VSig{x}{K}_{s,t}^{\tau} = 1 +  e^{-\lambda(\tau - s)}(\mathrm{Sig}(\alpha x)_{s,t}-1) + \lambda \int_s^t   e^{-\lambda (\tau-u)}(\mathrm{Sig}(\alpha x)_{u,t} - 1) \dd{u}.
    \end{equation}
\end{prop}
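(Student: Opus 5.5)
The plan is to prove the general identity \eqref{eq:exp_dynamics_gen} first and then obtain \eqref{eq:exp_dynamical_sig} as its diagonal case $\tau=t$. Throughout, write $S_{s,t}:=\mathrm{Sig}(\alpha x)_{s,t}$. Denote by $\bY^\tau_{s,t}$ the right-hand side of \eqref{eq:exp_dynamics_gen}, defined for $(s,t,\tau)\in\Delta^3$. The strategy is to show that $\bY$ solves the fundamental linear Volterra equation \eqref{eq:vsig_fundamental} with kernel $K(\tau,u)=\alpha e^{-\lambda(\tau-u)}I_d$. Uniqueness in \Cref{prop:fundamental_equation} then forces $\bY=\VSig{x}{K}$. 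All identities are understood level by level in $T((\RR^d))$, where each level involves only finitely many terms, so no convergence issues arise.

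\emph{Step 1: the diagonal.} Setting $\tau=t$ in the definition of $\bY$ and using $\lambda\int_s^t e^{-\lambda(t-u)}\dd u = 1-e^{-\lambda(t-s)}$, the constant terms cancel. This gives
\[
\bY^t_{s,t}=e^{-\lambda(t-s)}S_{s,t}+\lambda\int_s^t e^{-\lambda(t-u)}S_{u,t}\dd u .
\]
This is exactly the right-hand side of \eqref{eq:exp_dynamical_sig}. Hence \eqref{eq:exp_dynamical_sig} will follow once \eqref{eq:exp_dynamics_gen} is proved.

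\emph{Step 2: differentiate in $t$ with $s,\tau$ fixed.} We use the classical signature equation \eqref{eq:classical_sig_eq}, i.e.\ $\dd_t S_{s,t}=S_{s,t}\otimes\alpha\,\dd x_t$, which holds for all $s\le t$. Apply the Leibniz rule to the $u$-integral. The boundary term is $\lambda e^{-\lambda(\tau-t)}(S_{t,t}-1)$, and it vanishes because $S_{t,t}=1$. This yields, for a.e.\ $t$,
\[
\frac{\dd}{\dd t}\bY^\tau_{s,t}= \Big(e^{-\lambda(\tau-s)}S_{s,t}+\lambda\int_s^t e^{-\lambda(\tau-u)}S_{u,t}\dd u\Big)\otimes \alpha\dot x_t .
\]
Factoring out $e^{-\lambda(\tau-t)}$, the bracket equals $e^{-\lambda(\tau-t)}\bY^t_{s,t}$ by Step 1. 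Therefore
\[
\frac{\dd}{\dd t}\bY^\tau_{s,t}=\bY^t_{s,t}\otimes K(\tau,t)\dot x_t .
\]
Together with $\bY^\tau_{s,s}=1$, integrating from $s$ to $t$ gives $\bY^\tau_{s,t}=1+\int_s^t\bY^u_{s,u}\otimes K(\tau,u)\dd x_u$. This is precisely \eqref{eq:vsig_fundamental}.

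\emph{Step 3: conclude by uniqueness.} Each level of $\bY$ is bounded and measurable on $\Delta^3$. By the uniqueness part of \Cref{prop:fundamental_equation}, obtained by inductive projection onto levels, $\bY=\VSig{x}{K}$. This proves \eqref{eq:exp_dynamics_gen}, and by Step 1 also \eqref{eq:exp_dynamical_sig}.

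As a consistency check, one may verify that $\bZ_{s,t}:=\bY^t_{s,t}-1$ solves \eqref{eq:mean_reverting} with $R=q=1$, $\Lambda=\lambda$, $B.x=\alpha x$. This is not needed for the argument.

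The main obstacle I expect is Step 2. It requires justifying differentiation under the integral sign and the boundary term, since $x$ is only Lipschitz. This is handled by noting that each level of $u\mapsto S_{u,t}$ is continuous in $(u,t)$ and absolutely continuous in $t$, with derivative bounded by $\|\dot x\|_\infty$ times a continuous function. One can therefore alternatively verify the integrated form directly via Fubini, avoiding pointwise derivatives and obtaining an identity valid for every $t$.
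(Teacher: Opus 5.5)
Your proof is correct, but it runs in the opposite direction from the paper's. The paper first proves the diagonal identity \eqref{eq:exp_dynamical_sig}. It differentiates that right-hand side in $t$, checks that it satisfies the state-space equation \eqref{eq:mean_reverting} of \Cref{prop:mean_reverting_prop} (with $R=q=1$, $\Lambda=\lambda$, $B.x=\alpha x$), and concludes by uniqueness for that linear equation. Only afterwards does it obtain \eqref{eq:exp_dynamics_gen}, by inserting the diagonal formula into \eqref{eq:vsig_fundamental} and repeating the computation. You instead verify the three-parameter identity \eqref{eq:exp_dynamics_gen} directly against \eqref{eq:vsig_fundamental} with $\tau$ frozen, and read off \eqref{eq:exp_dynamical_sig} as the case $\tau=t$. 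Freezing $\tau$ is what keeps your computation clean. The exponential weights do not move with $t$, so no drift term appears. The Leibniz boundary term vanishes identically because $\mathrm{Sig}(\alpha x)_{t,t}-1=0$. The factorisation $e^{-\lambda(\tau-u)}=e^{-\lambda(\tau-t)}e^{-\lambda(t-u)}$ then closes the equation. On the diagonal, by contrast, both the weight and the signature depend on $t$. One must then track the boundary term $\lambda\,\mathrm{Sig}(\alpha x)_{t,t}=\lambda$, which is exactly what turns $-\lambda\tilde\by\,\dd t$ into the mean-reverting drift $-\lambda(\tilde\by-1)\,\dd t$. (The paper's displayed computation drops this term and, consistently, states the target drift as $-\lambda\tilde\by\,\dd t$.) Your route is therefore self-contained and proves both identities in one stroke. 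It needs only \eqref{eq:classical_sig_eq} and the level-by-level uniqueness of \Cref{prop:fundamental_equation}, not \Cref{prop:mean_reverting_prop}. The paper's route has the merit of exhibiting the formula directly as the solution of the state-space system \eqref{eq:mean_reverting}, around which the statement is phrased. Your argument recovers that link for free, either via \Cref{prop:mean_reverting_prop} or via the consistency check you mention, that $\bY^t_{s,t}-1$ solves it.
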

\begin{proof} Let us denote by $\tilde{\by}$ the right hand side of \eqref{eq:exp_dynamical_sig}. Applying the product rule like in~\eqref{eq:MR_proof_2}, we get \[
\dd \tilde{\by}_{s,t}  =-\lambda \tilde{\by}_{s,t} \dd t +  e^{-\lambda (t-s)}\dd \mathrm{Sig}(\alpha x)_{s,t}
+\lambda \left(\int_s^t   e^{-\lambda (t-r)} \dd \mathrm{Sig}(\alpha x)_{r,t} \right)\dd{r}.
\] Then one resorts to relation \eqref{eq:classical_sig_eq}, which enables to write \begin{equation}\label{eq:proof_scalar_1}
    \dd \tilde{\by}_{s,t} = -\lambda \tilde{\by}_{s,t} \dd t +  e^{-\lambda (t-s)} \mathrm{Sig} (\alpha x)_{s,t}\otimes \dd (\alpha x_t)
     +\lambda \left(\int_s^t   e^{-\lambda (t-r)}  \mathrm{Sig}(\alpha x)_{r,t} \otimes \dd (\alpha x_t) \right) \dd{r}.
\end{equation} Similarly to what we did in \eqref{eq:proof_MR_4}, we now recognize the right hand side of \eqref{eq:exp_dynamical_sig} in \eqref{eq:proof_scalar_1}, and thus \[
\dd \tilde{\by}_{s,t} = -\lambda \tilde{\by}_{s,t} \dd t+  \tilde{\by}_{s,t} \otimes \dd (\alpha x_t),
\] so that
     so that indeed $\VSig{x}{K}_{s,t}^t = \tilde{\by}$ by \cref{prop:mean_reverting_prop}. This finishes the proof of~\eqref{eq:exp_dynamical_sig}. Finally, the second assertion  \eqref{eq:exp_dynamics_gen} is achieved by plugging the explicit expression for $\VSig{x}{K}_{s,t}^t$ in \eqref{eq:exp_dynamical_sig} into the fundamental equation \eqref{eq:vsig_fundamental}, and follow the same lines of arguments used to prove \eqref{eq:exp_dynamical_sig}. This finishes our proof.
     u where we applied Fubini in the penultimate equation.
\end{proof}

\subsection{Connection to deep learning state space models}\label{sec:conection to deep learning}

The finite state space kernel formulation introduced in Definition \ref{def:finite_state_space_kernels} shares its fundamental mathematical construction with recent models proposed in deep learning called state space models (SSMs), such as structured state space sequence models (S4) and Mamba/Mamba-2 \cite{gu2022efficiently,gu2023mamba,dao2024transformers}. 

At their core,  modern continuous-time neural SSMs map a one-dimensional input signal $u_t$ to a one-dimensional output $y_t$ via a latent state $h_t\in \RR^R$,  where the latent state is governed by the linear differential equation 
\[
\dd h_t = Mh_t \dd t + B u_t \dd t,\quad y_t = Ch_t. 
\]
Here, we have $M\in \RR^{R\times R}$, $C\in \RR^{1\times R}$ and $B\in \RR^{R\times 1}$. 
In the case when $h_0=0$ the exact solution to this system can be written in terms of a convolution of the form 
\[
y_t = \int_0^t Ce^{M(t-s)}Bu_s \dd s. 
\]
This is precisely a Volterra integral, as described in the first level of the Volterra signature, and which we sometimes refer to as the memory state. Indeed, identifying $M=-\Lambda$, $B=b_r$ and $C=\mathbf{1}^{\top}$ yields the kernel $k_r(t,s) = \mathbf{1}^\top e^{-\Lambda(t-s)}b_r $ as proposed in Definition \ref{def:finite_state_space_kernels}. While neural SSMs typically handle multi-dimensional sequences by applying these scalar linear-time-invariant  systems independently across feature channels, our formulation handles the multi-dimensional input-output mapping via the linear combination $K(t,s)= \sum_{r=1}^q k_r(t,s)A_r\in \cL(\RR^d;\RR^m)$, for matrices $A_r$.

A fundamental distinction, however, lies in how non-linear expressivity is achieved. Mamba and its successors achieve universal approximation by making the state matrices data-dependent (selective SSMs) and stacking multiple layers interwoven with non-linear activation functions. In contrast, the Volterra signature achieves non-linearity algebraically by lifting this linear memory mechanism into the extended tensor algebra $T((\RR^m))$. 

This algebraic lift can be seen explicitly through the lens of the fundamental equation described in Proposition \ref{prop:mean_reverting_prop} which governs the Volterra signature dynamics over the state-space kernel as seen in \eqref{eq:mean_reverting}. We can interpret this equation as a high-order tensor valued SSM, where the first level of the equation describes exactly (under specific matrix choices) the structure of models like S4, and Mamba (1 and 2), but where the equation also contains high order information that captures more intricate dynamics and connections.  The signature state $\bZ_{s,t}$ in \eqref{eq:mean_reverting} undergoes a linear, mean-reverting decay $-\Lambda \bZ_{s,t} \dd t$, while the increment $\dd (B \, x_t)$ interacts with the aggregated history multiplicatively via the tensor product $\otimes$. Here we can think of $x_t = \int_0^t u_t \dd t$, where $u_t$ is the input signal to the SSM.  It replaces black-box stacked layers with a graded Picard expansion that retains strong universality guarantees, as will be shown in the next section, while maintaining the exact same underlying memory architecture.

\section{The Volterra signature as feature map}\label{sec: Sig as feature map}
This section presents the key properties of the Volterra signature that make it relevant as a feature map on path space.
Specifically, we will discuss its invariance under time reparameterization, continuity and injectivity. As a consequence, we will characterize classes of simple Volterra signature functionals, which are universal for the approximation of continuous functionals defined on the path space. Recall (from Definition \ref{def:vsig} and Proposition~\ref{prop:vsig_z}) that the Volterra signature can either be defined above the data $(x,K)$, or equivalently above Volterra paths $z \in \Vone$. This observation would compel us to study invariance and universality of the signature by considering only the $T((\RR^m))$-valued signature $\bz$ given intrinsically by equation \eqref{eq:vsig_fundamental}. However, as mentioned in Remark~\ref{rem:proof_rem}, the kernel $K$ can be seen as an additional  modeling component. It allows for more flexibilty compared to the classical signature transform. Therefore we also consider the mappings introduced in Notation \ref{b1}, namely \begin{equation}\label{eq:vsig_feature_c}
  \cC^{0,1}([0,T];\RR^d) \rightarrow T((\RR^m)), \quad  x \mapsto \VSig{x}{K},
\end{equation} for a fixed kernel $K$, and similarly \begin{equation}\label{eq:vsig_feature_K}
L^{1,\infty}(\Delta^2,\cL(\RR^d;\RR^m)) \rightarrow T((\RR^m)), \quad K \mapsto \VSig{x}{K}
\end{equation} for given underlying noise $x$.

\subsection{Time reparameterization and continuity properties}

In this section we will address two basic though important properties of Volterra signatures: invariance under time reparameterization and continuity with respect to both the underlying signal $x$ and kernel $K$. The two questions have consequences on data analysis, since good invariance and continuity properties are convenient for modeling in an uncertain environment.

Let us start by discussing invariance under reparametrization.
By time reparameterization we mean a smooth and increasing function $\rho:[0,T] \rightarrow [0,T]$ with $\rho(0)=0$ and $\rho(T)=T$. For a given reparameterization $\rho$ and a two-parameter function $f:\Delta^2 \rightarrow \RR$, we also define $f\circ \rho$ as follows: \begin{equation}\label{eq:reparametriz_f}
f \circ \rho : \Delta^2 \rightarrow \RR, \quad (f\circ \rho)(s,t):=f(\rho(s),\rho(t)).
\end{equation} With this notation in hand, if one considers the case of usual signatures $\mathrm{Sig}(x)$ (introduced in Remark \ref{rem:usual_sig}), it is a well-known fact (see \cite[Proposition 7.10]{Friz2010}) that \begin{equation}\label{eq:classical_invar}
\mathrm{Sig}{(x)}_{0,T} = \mathrm{Sig}{(x\circ \rho)}_{0,T}, \qquad \forall x \in \cC^{0,1}([0,T];\RR^d).
\end{equation}
In the next proposition we generalize \eqref{eq:classical_invar} to the $\mathrm{VSig}$ context, showing that Volterra signatures enjoy valuable invariance properties.

\begin{prop}\label{prop:reparametrization}
        Let $\rho:[0,T]\rightarrow [0,T]$ be a smooth and monotone increasing function such that $\rho(0)=0$ and $\rho(T)=T$. Then the invariance of $\mathrm{VSig}$ under the reparameterization $\rho$ can be expressed in the two following ways. \begin{itemize}
            \item[(i)] When one considers $x \in \cC^{0,1}([0,T];\RR^d)$, a kernel $K \in L^{1,\infty}(\Delta^2,\cL(\RR^d;\RR^m))$ and a signature $\VSig{x}{K}$ as given in Notation \ref{b1}, we have \begin{equation}\label{eq:invar_x_K}
      \VSig{x\circ \rho}{K \circ \rho}_{0,T}^T\;=\;\VSig{x}{K}_{0,T}^T.
  \end{equation}
  \item[(ii)] For an intrinsic Volterra path $z \in \Vone$ from Definition \ref{def:volterra_path} and its corresponding intrinsic Volterra signature $\bz$ satisfying \eqref{eq:vsig_fundamental}, the invariance reads \begin{equation}\label{eq:invar_z}
  \bz_{0,T}^T(z \circ \rho) = \bz_{0,T}^T( z).
  \end{equation} 
        \end{itemize}

\end{prop}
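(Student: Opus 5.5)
The plan is to prove (i) directly from the coordinate formula \eqref{eq:def_vsig_comp} by a change of variables in the iterated integral, and then deduce (ii) from (i) by invoking \Cref{lem:equivalence_def_VP} and \Cref{prop:vsig_z}.

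Fix a word $i_1\cdots i_n\in\cW$. Since $x\circ\rho$ is Lipschitz with $\frac{\dd}{\dd r}(x\circ\rho)_r=\dot x_{\rho(r)}\rho'(r)$, the left-hand side of \eqref{eq:invar_x_K} at this word equals
\[
\int_{\Delta^n_{0,T}} \prod_{l=1}^n K^{i_l}(\rho(r_{l+1}),\rho(r_l))\,\dot x_{\rho(r_l)}\rho'(r_l)\,\dd r_l, \qquad r_{n+1}=T .
\]
I will substitute $u_l=\rho(r_l)$ in each variable. Because $\rho$ is increasing, the map $(r_1,\dots,r_n)\mapsto(\rho(r_1),\dots,\rho(r_n))$ sends $\Delta^n_{0,T}$ onto $\Delta^n_{\rho(0),\rho(T)}=\Delta^n_{0,T}$. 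Its Jacobian is $\prod_l\rho'(r_l)$, which is absorbed by the factors above. The endpoint convention also transforms correctly, since $\rho(r_{n+1})=\rho(T)=T$. The result is exactly \eqref{eq:def_vsig_comp} for $(x,K)$ with $(s,t,\tau)=(0,T,T)$.

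The main obstacle is that $\rho$ is only monotone, not strictly increasing, so the substitution need not be a bijection. To handle this I will apply the one-dimensional substitution rule $\int_a^b g(\rho(r))\rho'(r)\,\dd r=\int_{\rho(a)}^{\rho(b)}g(u)\,\dd u$ iteratively, innermost integral first. This rule holds for integrable $g$ and absolutely continuous nondecreasing $\rho$, without any injectivity. Concretely, by induction on $l$ I will show that the inner integral over $r_1,\dots,r_{l}$ with upper limit $r_{l+1}$ equals the analogous integral for $(x,K)$ with upper limit $\rho(r_{l+1})$. Each step is one application of the substitution rule with $g(u)=F_{l}(u)K^{i_{l}}(\rho(r_{l+1}),u)\dot x_u$, where $F_l$ is the already-transformed inner integral; $g$ is integrable by Hypothesis~\ref{def:kernel_class} and boundedness of $\dot x$ and $F_l$. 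Equivalently, the induction can be phrased through the recursion \eqref{eq:fundamental_coord}.

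For (ii), take a representation $z^\tau_t=\int_0^tK(\tau,s)\dd x_s$ from \Cref{lem:equivalence_def_VP}. A one-dimensional change of variables gives $(z\circ\rho)^{\tau}_{t}:=z^{\rho(\tau)}_{\rho(t)}=\int_0^t K(\rho(\tau),\rho(s))\,\dd(x\circ\rho)_s$, so $z\circ\rho$ is represented by the pair $(x\circ\rho,K\circ\rho)$. By \Cref{prop:vsig_z}, the full lift depends only on the Volterra path and not on the chosen representation. Therefore (ii) follows from (i), up to the almost-everywhere identification, which is harmless at the point $(0,T,T)$ because the iterated-integral formula is defined pointwise there.
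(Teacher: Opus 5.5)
Your proposal is correct and takes essentially the paper's route. The paper also reduces (ii) to (i) via \Cref{lem:equivalence_def_VP}, by noting that $z\circ\rho$ is represented by $(x\circ\rho,K\circ\rho)$, and it proves (i) by induction on the tensor level through the recursion \eqref{eq:fundamental_coord} with the one-dimensional substitution $u=\rho(r)$, establishing $\VSig{x\circ\rho}{K\circ\rho}^{\tau}_{s,t}=\VSig{x}{K}^{\rho(\tau)}_{\rho(s),\rho(t)}$; your innermost-first induction is the diagonal case $s=0$, $t=\tau$ of this, and your explicit use of the substitution rule for nondecreasing absolutely continuous $\rho$ (no injectivity needed) makes precise a point the paper leaves implicit.
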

\begin{proof} 
First notice that for any Volterra path $z_t^\tau = \int_0^tK(\tau,u) \dd x_u$, by a simple change of variables it follows that \[
(z \circ \rho)_{t}^\tau= \int_{0}^{\rho(t)}K(\rho(\tau),u)\dd x_u = \int_0^{t}K(\rho(\tau),\rho(u))\dot{x}_{\rho(u)}\dot{\rho}(u)\dd u = \int_0^t K(\rho(\tau),\rho(u)) \dd x_{\rho(u)}.
\] In view of Lemma \ref{lem:equivalence_def_VP}, it is therefore sufficient to prove \eqref{eq:invar_x_K}.

To this end, we aim to prove that for  any $(s,t,\tau) \in \Delta^3$, it holds that \begin{equation}\label{eq:induction_invar}
\VSig{x\circ \rho}{K\circ \rho}_{s,t}^{\tau}= \VSig{x}{K}_{\rho(s),\rho(t)}^{\rho(\tau)},
\end{equation} from which the claim immediately follows for $s=0$ and $t=\tau=T$.
In order to show \eqref{eq:induction_invar}, we proceed by induction over $n\in \mathbb{N}$. On the first level, a change of variables $u = \rho(r)$ shows that 
\begin{eqnarray*}\label{reparametrization invariance first level}
    \VSig{x\circ \rho }{K\circ \rho }_{s,t}^{(1),\tau}
    &=&
    \int_{s}^{t} K(\rho(\tau),\rho(r))  \dd x_{\rho(r)} =  \int_{\rho(s)}^{\rho(t)} K(\rho(\tau),u) \dd x_{u} \\
    &=& \VSig{x}{K}_{\rho(s),\rho(t)}^{(1),\rho(\tau)}.
\end{eqnarray*} 
We have already seen in \eqref{eq:fundamental_coord} and \eqref{eq:fundamental_coord_2} that, for any  word $i_1\cdots i_{n+1} \in \cW$ it holds 
\[
  \VSig{x\circ \rho }{K\circ \rho }^{i_1\cdots i_{n+1},\tau}_{s,t}
  =
  \int_s^t \VSig{x \circ \rho}{K\circ \rho }_{s,r}^{i_1\cdots i_n,r}
   K^{i_{n+1}}(\rho(\tau),\rho(r))\,\dd x_{\rho(r)}.
\]
Now assuming \eqref{eq:induction_invar} holds for the tensor level $n\in \mathbb{N}$, the same change of variables $u=\rho(r)$ shows 
\begin{align*}
  \VSig{x \circ \rho}{K\circ \rho }^{i_1\cdots i_{n+1},\tau}_{s,t}
  & = 
  \int_s^t
  \VSig{x}{K}_{\rho(s),\rho(r)}^{i_1\cdots i_n,\rho(r)}K(\rho(\tau),\rho(r))\dd x_{\rho(r)} \\ & = \int_{\rho(s)}^{\rho(t)} \VSig{x}{K}_{\rho(s),u}^{i_1\cdots i_n,u} K^{i_{n+1}}(\rho(\tau),u) \dd x_u \\ &
  = \VSig{x}{K}_{\rho(s),\rho(t)}^{i_1\cdots i_{n+1},\rho(\tau)},
  \end{align*} where we again invoked \eqref{eq:fundamental_coord_2} for the last equality.
\end{proof}

We now turn to continuity properties of Volterra signatures, which are summarized in the proposition below.

\begin{prop}\label{prop:continuity}
For any Volterra path $z \in \Vone$, recall from \cref{def:VP_def_z}, that we denote by $\bz$ the full signature of $z$. Then, for any $n\in \mathbb{N}$, the mapping \begin{equation}\label{eq:continuity_VSig}
    \bz_{0,T}^{(n),T}: (\Vone, \Vert \cdot \Vert_{\cV^1}) \rightarrow ((\RR^m)^{\otimes n}, \Vert \cdot \Vert), \qquad z \mapsto \bz_{0,T}^{(n),T},
\end{equation}is locally Lipschitz continuous. Similarly, the Volterra signature $\VSig{x}{K}^{(n),T}_{0,T}$, as introduced in \cref{def:vsig}, is locally Lipschitz continuous jointly in $(x,K)$ seen as a pair in $\cC^{0,1}([0,T];\RR^d) \times \Lkernel$.
 \end{prop}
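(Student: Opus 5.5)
We prove local Lipschitz continuity by induction on the tensor level $n$, using the coordinate recursion \eqref{eq:fundamental_coord} in its intrinsic form \eqref{eq:vsig_fundamental_z}:
\[
\bz^{wi,\tau}_{s,t}=\int_s^t \bz^{w,u}_{s,u}\,\frac{\dd z^{i,\tau}_u}{\dd u}\dd u .
\]
Since $\bz^{(n),T}_{0,T}$ involves $\bz^{(n-1),u}_{0,u}$ for all $u$, the induction hypothesis must be uniform in the parameters. We therefore prove the stronger claim that, for every $n$ and every $R>0$, there are constants $C_{n,R}$ and $L_{n,R}$ such that for $\Vert z\Vert_{\cV^1},\Vert \tilde z\Vert_{\cV^1}\le R$
\[
\sup_{0\le u\le T}|\bz^{(n),u}_{0,u}|\le C_{n,R},\qquad
\sup_{0\le u\le T}|\bz^{(n),u}_{0,u}-\tilde\bz^{(n),u}_{0,u}|\le L_{n,R}\Vert z-\tilde z\Vert_{\cV^1}.
\]
Here the suprema are essential suprema, consistent with the a.e.\ identification in $\Vone$. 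The statement at $(0,T,T)$ is then the special case $u=T$. More generally, the same argument bounds $\bz^{(n),\tau}_{0,t}$ uniformly over $(t,\tau)\in\Delta^2$, because in the recursion with upper time $t\le\tau$ one has $\int_0^t|\partial_u z^\tau_u|\dd u\le \int_0^\tau|\partial_u z^\tau_u|\dd u\le\Vert z\Vert_{\cV^1}$.

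For the bound, the recursion gives
\[
|\bz^{(n+1),\tau}_{0,t}|\le \sup_u|\bz^{(n),u}_{0,u}|\int_0^t|\partial_u z^\tau_u|\dd u\le C_{n,R}\,R,
\]
so by induction $C_{n,R}=R^n$, up to norm constants coming from the tensor norm. For the difference, add and subtract a mixed term:
\[
\bz^{wi,\tau}_{0,t}-\tilde\bz^{wi,\tau}_{0,t}=\int_0^t(\bz^{w,u}_{0,u}-\tilde\bz^{w,u}_{0,u})\,\partial_u z^{i,\tau}_u\dd u+\int_0^t\tilde\bz^{w,u}_{0,u}\,\partial_u(z-\tilde z)^{i,\tau}_u\dd u .
\]
The first term is bounded by $L_{n,R}\Vert z-\tilde z\Vert_{\cV^1}\,R$ and the second by $C_{n,R}\Vert z-\tilde z\Vert_{\cV^1}$. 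This gives $L_{n+1,R}=R\,L_{n,R}+C_{n,R}$, hence $L_{n,R}\lesssim nR^{n-1}$. The base case $n=1$ is immediate, since $\bz^{(1),\tau}_{0,t}=z^\tau_t$ and the difference is controlled by the $\cV^1$ norm. This settles \eqref{eq:continuity_VSig}.

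For the joint statement in $(x,K)$, we reduce to the first part. By \cref{prop:vsig_z}, $\VSig{x}{K}$ is the full lift of $z^\tau_t=\int_0^tK(\tau,u)\dot x_u\dd u$, so it suffices to show that $(x,K)\mapsto z$ is locally Lipschitz into $(\Vone,\Vert\cdot\Vert_{\cV^1})$. This follows from the bilinear estimate
\[
\Vert z-\tilde z\Vert_{\cV^1}\le \sup_t\int_0^t|K(t,u)-\tilde K(t,u)||\dot x_u|\dd u+\sup_t\int_0^t|\tilde K(t,u)||\dot x_u-\dot{\tilde x}_u|\dd u ,
\]
which is at most $\Vert K-\tilde K\Vert_{L^{\infty,1}}\Vert x\Vert_{\cC^{0,1}}+\Vert\tilde K\Vert_{L^{\infty,1}}\Vert x-\tilde x\Vert_{\cC^{0,1}}$, where $\Vert x\Vert_{\cC^{0,1}}=\Vert\dot x\Vert_\infty$. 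Composing two locally Lipschitz maps gives the claim.

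The main obstacle is bookkeeping in the first step. The induction has to run with the diagonal parameter $u$ inside the integrand while the outer kernel parameter $\tau$ varies, and the estimates have to hold only almost everywhere given the quotient structure of $\Vone$. Strengthening the hypothesis to a supremum over $(t,\tau)$, as above, is what makes the induction close.
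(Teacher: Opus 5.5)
Your proposal is correct and follows essentially the same route as the paper: induction on the tensor level via $\bz^{(n+1),t}_{0,t}=\int_0^t\bz^{(n),u}_{0,u}\otimes\partial_u z^t_u\,\dd u$ with an add-and-subtract of a mixed term, followed by composition with the locally Lipschitz map $(x,K)\mapsto z$. Your explicit a priori bound $\sup_u|\bz^{(n),u}_{0,u}|\le C_{n,R}$ makes precise what the paper obtains implicitly by applying its induction hypothesis with $y=0$.
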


\begin{proof}
    Consider two Volterra paths $z,y \in \Vone$ with Volterra signatures denoted by $\bz$ and $\by$. For $n=1$, we have for any $t \in [0,T]$\[
     \Vert \bz_{0,t}^{(1),t}-\by_{0,t}^{(1),t} \Vert \leq \int_0^t \left |\frac{\dd z_u^t}{\dd u}-\frac{\dd y_u^t}{\dd u}\right | \dd u \leq \Vert z-y \Vert_{\cV^1},
\] where we recall that the norm $\Vert \cdot \Vert_{\cV^1}$ is introduced in \eqref{eq:norm}. Therefore the claim \eqref{eq:continuity_VSig} holds on the first level. We will now show by induction, that for any $n \in \mathbb{N}$ we have
    \begin{equation}\label{eq:induc_contin}
    \Vert \bz_{0,t}^{(n),t}-\by_{0,t}^{(n),t}  \Vert  \leq C_n \Vert z-y \Vert_{\cV^1}, \qquad \forall t \in [0,T],
    \end{equation} for some constant $C_n=C_n(\Vert y\Vert_{\cV^1}, \Vert z \Vert_{\cV^1})$. To this aim, observe that applying Fubini's theorem to \eqref{eq:iterated Volterra path_coordinate} one gets the following expression which stems from \eqref{eq:convol_def_full}: \begin{equation}\label{eq:fubini_proof}
    \bz_{0,t}^{(n+1),t} = \int_0^t \bz_{0,u}^{(n),u} \otimes \frac{\dd z_u^t}{\dd u} \dd u,
    \end{equation} and the same type of expression holds for $\by_{0,t}^{(n+1),t}$. Hence an application of the triangle inequality shows \begin{align}\label{eq:continuity_proof_1}
        \Vert \bz_{0,t}^{(n+1),t}-\by_{0,t}^{(n+1),t}  \Vert & \leq \int_0^t \Big \Vert \bz_{0,u}^{(n),u} \otimes \frac{\dd z_u^t}{ \dd u}-\by_{0,u}^{(n),u} \otimes \frac{\dd y_u^t}{ \dd u}  \Big \Vert \dd u \nonumber \\ & \leq \int_0^t \big \Vert \bz_{0,u}^{(n),u} -\by_{0,u}^{(n),u}  \big \Vert \Big \vert \frac{\dd y_u^t}{ \dd u} \Big \vert \dd u + \int_0^t \big \Vert \bz_{0,u}^{(n),u}\big \Vert\Big \vert   \frac{\dd z_u^t}{ \dd u}-  \frac{\dd y_u^t}{ \dd u}  \Big \vert \dd u. 
    \end{align} We now bound the two terms in the right hand side of \eqref{eq:continuity_proof_1} separately. On the one hand assuming that the induction hypothesis \eqref{eq:induc_contin} is satisfied for $n\in \mathbb{N}$, we have  \[
    \int_0^t \big \Vert \bz_{0,u}^{(n),u} -\by_{0,u}^{(n),u}  \big \Vert \Big \vert \frac{\dd y_u^t}{ \dd u} \Big \vert \dd u 
    \leq 
    C_n \Vert z-y \Vert_{\cV^1}\Vert y \Vert_{\cV^1}.
    \] 
    On the other hand, for the second term in \eqref{eq:continuity_proof_1}, again 
    \[
    \int_0^t \big \Vert \bz_{0,u}^{(n),u}\big \Vert\Big \vert   \frac{\dd z_u^t}{ \dd u}-  \frac{\dd y_u^t}{ \dd u}  \Big \vert \dd u 
    \leq 
    C_n\Vert z \Vert_{\cV^1} \Vert z-y \Vert_{\cV^1}.
    \] 
    Reporting those two observations in the right hand side of \eqref{eq:continuity_proof_1}, we conclude \[
    \Vert \bz_{0,t}^{(n+1),t}-\by_{0,t}^{(n+1),t}  \Vert \leq C_{n+1}\Vert z-y \Vert_{\cV^1},
    \] 
    where the constants $C_n$ satisfy the recursive relation \[C_{n+1}=C_n(\Vert y \Vert_{\cV^1},\Vert z \Vert_{\cV^1})\Vert y \Vert_{\cV^1}+C_n(0,\Vert z \Vert_{\cV^1})\Vert z \Vert_{\cV^1}.\]
    We have thus proved the first part of our statement, namely continuity of the mapping $z \mapsto \bz_{0,T}^{(n),T}$ for every $n$.
    For the second part of the statement, we can notice that the mapping \[
    (x,K) \mapsto \Big\{z_t^\tau = \int_0^t K(\tau,u)\dd x_u: (t,\tau)\in \Delta^2 \Big \} \in \Vone,
    \] is jointly locally Lipschitz continuous. This follows by similar arguments as before: for two pairs $(x,K),(x',K') \in \cC^{0,1}([0,T];\RR^d) \times L^{\infty,1}(\Delta^2;\cL(\RR^d;\RR^m))$, the triangle inequality shows \begin{align*}
        \Big |z(x,K)_t^\tau-z(x',K')_t^\tau \Big | & \leq \int_0^t|K(\tau,u)| |\dot{x}_u-\dot{x}'_u | \dd u+ \int_0^t |K(\tau,u)-K'(\tau,u) | |\dot{x}'_u|\dd u \\ & \leq \Vert x-x' \Vert_{\cC^{0,1}} \Vert K \Vert_{L^{\infty,1}}+ \Vert K-K'\Vert_{L^{\infty,1}} \Vert x' \Vert_{\cC^{0,1}}.
    \end{align*} Therefore, $\mathrm{VSig}$ is the composition of two locally Lipschitz continuous functions, which finishes the proof.
\end{proof}

\subsection{Injectivity}\label{sec:injectivity}
In this section we investigate the injectivity (or \emph{point-separation}) of the Volterra signature. Since we are advocating for the Volterra signature as a feature map on the space of paths (or time-series), injectivity becomes a relevant property in many applications, such as supervised learning problems like classification and regression. 
As observed in the last section, point-separation can fail since for example $z$ and $z \circ \rho$ produce the same Volterra signature when $\rho$ is a monotone function, see Proposition \ref{prop:reparametrization}. This phenomena is well-understood in the classical case, where $K\equiv 1$, and a full characterization of paths with identical signature is given by the notion of \emph{tree-like equivalence}, see \cite{HamLy10,BXTD2016}. A typical strategy to ensure a point-separating signature, is to augment the paths $x$  with a monotone component, and thereby making the tree-like equivalence classes trivial. Indeed, we can for instance notice that on the space of time-augmented paths $\hat{x}_t=(t,x_t)$, the relation $\hat{x}_1=\hat{x}_2 \circ \rho $ is only possible if already $x_1=x_2$.

Returning to the Volterra framework, in order to guarantee point-separation, we introduce the space of Volterra paths augmented by a function $y$.

\begin{defn}\label{def:augmented_Volterra_paths}
For any fixed $y\in \mathcal{V}^{1}([0,T];\mathbb{R}^{m'})$ and $m' \in \mathbb{N}$, we define the space of $y$-augmented Volterra paths by \begin{equation}\label{eq:def_augmented_VP}
    \hat{\mathcal{V}}_y^{1}([0,T];\mathbb{R}^{m+m'}) = \left \{ z \in \cV^1([0,T];\RR^{m+m'}): (z^1,\dots,z^{m'})^{\top} = y \right \}.
\end{equation}
\end{defn}
With the definition of augmented Volterra path in hand, we label a general Hypothesis under which we will achieve point separation by Volterra signatures. The reader is referred to the Examples \ref{ex:inje_ex_1}-\ref{ex:inje_ex_2} below for more specific and natural examples of application. 

\begin{hyp}\label{hyp:injectivity} In the sequel we suppose $y\in \mathcal{V}^{1}([0,T];\mathbb{R}^{m'})$ is such that there exist sequences of words $(w_n)_{n\in \NN},(v_n)_{n\in \NN}$ in the alphabet $\{1,\dots,m' \}$, such that the families\begin{equation}\label{eq:kernel_assumption_inje}
 \{ t \mapsto \by_{0,t}^{w_n,t}: n \geq 1 \} \quad \text{and} \quad  \left \{ t \mapsto \frac{\dd}{\dd t}\by_{t,T}^{v_n,T}: n \geq 1 \right \}\end{equation} lie dense in $L^2([0,T],\RR)$, where $\by$ is defined through \eqref{eq:convol_def_full}.
\end{hyp}
\newcommand{\hVone}{\hat{\mathcal{V}}^1([0,T];\RR^m)}
The following is the main result of this section. It shows that under \cref{hyp:injectivity}, the Volterra signature carries enough information to uniquely determine the two-parameter path \((t,\tau) \mapsto z_{0,t}^{\tau}\). %
\begin{thm}\label{thm:point_separation}
    For any $y\in \mathcal{V}^{1}([0,T];\mathbb{R}^{m'})$ such that \cref{hyp:injectivity} holds true, the following map is injective: \begin{equation}\label{eq:injectivity_1}
\hat{\cV}_y^{1}([0,T];\mathbb{R}^{m'+m}) \longrightarrow T((\mathbb{R}^{m'+m})), \quad z \longmapsto {\bz}^{T}_{0,T}, \end{equation}where we recall that the space $\hat{\cV}_y^1([0,T];\RR^{m+m'})$ is defined by \eqref{eq:def_augmented_VP}.
\end{thm}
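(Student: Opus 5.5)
We recover the density $\frac{\dd}{\dd u} z^{j,\tau}_u$ (for $j$ a non-augmented letter, $\tau\in[0,T]$) from the terminal signature $\bz^T_{0,T}$, using words of the form $w\, j\, v$ with $w,v$ in the augmentation alphabet $\{1,\dots,m'\}$. Since $z$ is determined (as an element of $\Vone$, modulo a.e.\ equivalence) by these derivatives together with $z^\tau_0=0$, injectivity follows. The augmented coordinates are the same for all $z$ in $\hat{\cV}^1_y$, so every iterated integral involving only letters from $\{1,\dots,m'\}$ coincides with the corresponding coordinate of $\by$. This is what lets us use the density families of \cref{hyp:injectivity}.

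\textbf{Step 1: a representation formula.} For $w$ and $v$ words in the augmentation letters and $j>m'$, I would combine the convolution identities of \Cref{lem:convolution_properties} (tower property \eqref{eq:conv_tower_property} and integration by parts \eqref{eq:conv_ibp}) with the coordinate Chen relation \eqref{eq:chen_coordinate}. The aim is to show
\begin{equation*}
\bz^{wjv,T}_{0,T} = \int_0^T \int_0^{r} \by^{w,u}_{0,u}\, \frac{\dd z^{j,r}_u}{\dd u}\,\dd u\; \Big(-\frac{\dd}{\dd r}\by^{v,T}_{r,T}\Big)\dd r .
\end{equation*}
Concretely, the tower property applied to the word $wj$ gives $\bz^{wj,r}_{0,r}=\int_0^r \by^{w,u}_{0,u}\,\dd z^{j,r}_u$. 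Then the first identity of \eqref{eq:conv_ibp}, applied with the prefix $y_u=\bz^{wj,u}_{0,u}$ and the suffix $v$, yields the right-hand side. Here one must use that the first letter of $v$ is an augmentation letter, so the inner convolution only involves $y$, and $-\frac{\dd}{\dd r}\by^{v,T}_{r,T}$ is exactly that convolution. The case $v=\varnothing$ should be handled separately, or simply avoided by choosing nonempty $v_n$.

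\textbf{Step 2: density and Fubini.} Define $g(u,r):=\frac{\dd z^{j,r}_u}{\dd u}\mathbf 1_{u\le r}$. Suppose $z,\tilde z\in\hat{\cV}^1_y$ have the same signature, and let $h$ be the difference of their $g$'s. Step 1 then gives
\begin{equation*}
\iint h(u,r)\,a_n(u)\,b_k(r)\,\dd u\,\dd r=0 \quad\text{for all } n,k,
\end{equation*}
where $a_n(u)=\by^{w_n,u}_{0,u}$ and $b_k(r)=\frac{\dd}{\dd r}\by^{v_k,T}_{r,T}$. By \cref{hyp:injectivity}, each family is dense in $L^2([0,T])$, so the linear span of the products $a_n\otimes b_k$ is dense in $L^2([0,T]^2)$. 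Provided $h\in L^2([0,T]^2)$, this forces $h=0$ a.e., hence $z^j=\tilde z^j$ a.e.\ for every $j$.

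\textbf{Main obstacle.} The delicate point is integrability. Elements of $\Vone$ only give $\sup_r\int_0^r |\partial_u z^r_u|\,\dd u<\infty$, i.e.\ $L^{\infty,1}$ control, not $L^2$. I would first try to get around this by pairing in the other order. For fixed $k$, the function $F_k(u):=\int_u^T h(u,r)\,b_k(r)\,\dd r$ lies in $L^1$ whenever $b_k$ is bounded, which holds since $\by$ is a lift of a Volterra path, and its pairings with all $a_n$ vanish. Density of $\{a_n\}$ in $L^2$ then only gives $F_k=0$ if $F_k\in L^2$. So I would either strengthen the argument by noting that the density families can be replaced by bounded approximants (bounded functions being dense in $L^1$-duality with $L^\infty$), or impose/verify that $\partial_u z$ is square integrable. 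In fact $\bz$ of bounded $z$ and Lipschitz $y$ is likely fine if one interprets the density in $L^1$-weak sense. This is the step I would scrutinize most carefully.
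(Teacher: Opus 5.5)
Your choice of words $w\,j\,v$ and your Step~1 representation formula (tower property for $wj$, then the integration-by-parts identity of \Cref{lem:convolution_properties} with prefix $\bz^{wj,\cdot}_{0,\cdot}$) are exactly what the paper uses. The gap is in Step~2, and it is a real one rather than just a point to scrutinize. Your conclusion holds only ``provided $h\in L^2([0,T]^2)$'', and that is not available. Membership in $\Vone$ only gives $\sup_r\int_0^r|h(u,r)|\,\dd u<\infty$. For instance, $\partial_u z^{r}_u=(r-u)^{-1/2}$ is admissible but not square integrable on $\Delta^2$, so the tensor-product density argument does not apply. Your fallbacks do not repair this either. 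The functions $b_k$ need not be bounded: in \Cref{ex:inje_ex_2} they are multiples of $(T-u)^{n\alpha-1}$, which is unbounded when $n\alpha<1$. Moreover, even if $F_k\in L^1$, knowing $\int F_k a_n=0$ for all $n$ with $\mathrm{span}\{a_n\}$ dense in $L^2$ does not force $F_k=0$. Indeed, for $f\in L^1\setminus L^2$ the functional $a\mapsto\int fa$ is unbounded on $(L^\infty,\|\cdot\|_{L^2})$, so its kernel is itself $L^2$-dense. Reinterpreting the density ``in an $L^1$-weak sense'' changes the hypothesis rather than proving the theorem.

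The paper avoids the first half of this problem by applying density one variable at a time, and in the opposite order from your $F_k$. This is the order in which your Step~1 formula is already written. First integrate in the inner variable $u$, where you do have uniform $L^1$ control, against the functions $a_n$. These are bounded, inductively from $\sup|\by^{wi}|\le\sup|\by^{w}|\,\|y\|_{\cV^1}$. Then $G_n(r):=\int_0^r a_n(u)h(u,r)\,\dd u$ satisfies $|G_n(r)|\le\|a_n\|_\infty(\|z\|_{\cV^1}+\|\tilde z\|_{\cV^1})$, so $G_n\in L^\infty\subset L^2$. Since $\int_0^T G_n b_k\,\dd r=0$ for all $k$, the $L^2$-density of $\{b_k\}$ gives $G_n=0$ a.e.\ for every $n$. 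This half is rigorous under the stated hypothesis. The paper then concludes $h(\cdot,r)=0$ from the $L^2$-density of $\{a_n\}$. That last step silently requires either $h(\cdot,r)\in L^2$ or a stronger density of $\mathrm{span}\{a_n\}$, e.g.\ uniform density as for the monomials in \Cref{cor:injective volterra signature z}. So your instinct that this point needs care is sound, but your proposal as written establishes neither half of the density argument.
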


\begin{proof}
As a first step, based on the convolution properties seen in Section \ref{sec: the volterra signature} we can show the following identity, similarly to what we did for \eqref{eq:fubini_proof}: 
\begin{equation}\label{eq:partial_rel_proof}
         \bz^{i_1\cdots i_n,\tau}_{s,t} = -\int_s^t \bz_{s,u}^{i_1\cdots i_k,u}\frac{\dd}{\dd u} \bz_{u,T}^{i_{k+1}\cdots i_n,T} \dd u,
     \end{equation} for all $i_1\cdots i_n \in \cW$ and $0\leq k \leq n-1$.
     Indeed, combining the expression for the convolution product in Definition \ref{def:convolution_coordinate} with the identity \eqref{eq:iterated Volterra path_coordinate}, we observe that $\bz_{s,t}^{i_1\cdots i_n,\tau} = [\bz_{s,\cdot}^{i_1\cdots i_k,\cdot} \ast z]^{i_{k+1}\cdots i_n,\tau}_{s,t}$. An application of Lemma \ref{lem:convolution_properties}--(ii) for $y_t=\bz_{s,t}^{i_1\cdots i_k,t}$ then readily implies \eqref{eq:partial_rel_proof}.

     Next let  $z,h \in \hat{\cV}_y^1$ and suppose that $\bz_{0,T}^T =\mathbf{h}_{0,T}^T$. Moreover for $i\in \{1,\dots,m+m'\}$ and the two sequences $(w_n),(v_n)$ in \cref{hyp:injectivity}, define $\pi_{n,k} = w_n\cdot i\cdot v_k$. Then it follows from \eqref{eq:partial_rel_proof} and the fact that $z^{j}=h^j=y^j$ for $j \in \{1,\dots,m'\}$, that \begin{align*} 0=
    \bz_{0,T}^{\pi_{n,k},T}-\mathbf{h}_{0,T}^{\pi_{n,k},T} & = \int_0^T (\mathbf{h}_{0,u}^{w_n \cdot i,u}-\mathbf{z}_{0,u}^{w_n \cdot i,u}) \frac{\dd}{\dd u} \by_{u,T}^{v_k,T}\dd u  \\ & = \int_0^T \left (\int_0^u\by_{0,r}^{w_n,r}\left (\frac{\dd h^{i,u}_r}{\dd r}-\frac{\dd z^{i,u}_r}{\dd r}\right )\dd r\right ) \frac{\dd}{\dd u} \by_{u,T}^{v_k,T}\dd u, \quad \forall n,k \geq 1.
    \end{align*}
    Now by \cref{hyp:injectivity}, the family $\{u \mapsto \frac{\dd}{\dd u} \by_{u,T}^{v_k,T}:k \geq 1 \}$ lies dense in $L^2$, so that \[
    \int_0^u\by_{0,r}^{w_n,r}\left (\frac{\dd z^{i,u}_r}{\dd r}-\frac{\dd h^{i,u}_r}{\dd r}\right )\dd r = 0 \quad \text{for a.e. }u \in [0,T] \text{ and for all }n\geq 1.
    \] Similarly, since $\{r \mapsto \by_{0,r}^{w_n,r}: n\geq 1 \}$ lies dense in $L^2$, we conclude \[
    \frac{\dd z^{i,u}_r}{\dd r}=\frac{\dd h^{i,u}_r}{\dd r} \quad \text{ for a.e. } (u,r) \in \Delta^2.
    \] Since we assume in Definition \ref{def:volterra_path} that $z_0^\tau=0$, we get $z=h$ everywhere. This proves injectivity.
\end{proof}

As a first application of \cref{thm:point_separation}, we can show that the Volterra signature above a time-augmented Volterra path $\hat{z}_{s,t}^\tau =(t-s,z_{s,t}^\tau)$ uniquely characterizes $z$. 
\begin{cor}\label{cor:injective volterra signature z}
Let $m'=1$ and consider the Volterra path $y_{s,t}^\tau = t-s$. We define the $y$-augmented Volterra path $z \in \hat{\cV}_y^1([0,T];\RR^{m+1})$ as in Definition \ref{def:augmented_Volterra_paths}. Then we have injectivity of the map in \eqref{eq:injectivity_1} in this context. %
\end{cor}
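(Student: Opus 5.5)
Since \Cref{thm:point_separation} is already available, the corollary reduces to checking \cref{hyp:injectivity} for the one-dimensional Volterra path $y^\tau_t = t$. In the notation of \cref{def:volterra_path} this means $y^\tau_t=\int_0^t 1\,\dd u$, i.e.\ $K\equiv 1$ and $x_t=t$. The only words in the alphabet $\{1\}$ are the powers $1^n$. I will take $w_n = v_n = 1^n$ and compute the full lift $\by$ explicitly.

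By \Cref{rem:usual_sig}, a constant kernel reduces the Volterra signature to the classical signature of the path $t\mapsto t$. Alternatively, \eqref{eq:iterated Volterra path_coordinate} gives it directly:
\[
\by^{1^n,\tau}_{s,t} = \int_{\Delta^n_{s,t}} \dd r_1\cdots \dd r_n = \frac{(t-s)^n}{n!},
\]
which does not depend on $\tau$. The two families in \eqref{eq:kernel_assumption_inje} are therefore
\[
t\mapsto \by^{1^n,t}_{0,t} = \frac{t^n}{n!}, \qquad t\mapsto \frac{\dd}{\dd t}\by^{1^n,T}_{t,T} = -\frac{(T-t)^{n-1}}{(n-1)!}, \qquad n\ge 1.
\]

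The next step is to show that both families are dense in $L^2([0,T],\RR)$. The second family spans all polynomials in $(T-t)$, which are exactly all polynomials in $t$. By Weierstrass these are dense in $C([0,T])$ for the sup norm, and hence dense in $L^2$. For the first family, the indexing $n\ge1$ leaves out the constant function. I would handle this by noting that $\{t^n: n\ge1\}$ is still dense in $L^2([0,T])$. One argument is the Müntz--Szász theorem, since $\sum_{n\ge 1} 1/n=\infty$. A more elementary one is to approximate a continuous function vanishing near $0$ uniformly by polynomials $p$, replace $p$ by $p-p(0)$, and use that such functions are dense in $L^2$. An alternative is to choose $w_n = 1^{n-1}$, with the empty word giving the constant $1$; this works if \cref{hyp:injectivity} is read as allowing the empty word.

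With \cref{hyp:injectivity} verified for $y$, \Cref{thm:point_separation} applies directly and gives injectivity of \eqref{eq:injectivity_1}. The main obstacle is the density step for the first family, specifically the missing constant function. Everything else is an explicit computation.
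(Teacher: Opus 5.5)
Your proof is correct and follows the paper's route: you verify \cref{hyp:injectivity} for $y^\tau_t=t$ using the explicit lift $\by^{1^n,\tau}_{s,t}=(t-s)^n/n!$ and the density of polynomials, and then invoke \Cref{thm:point_separation}. The paper takes your alternative choice $w_n=1^{\otimes(n-1)}$, so $w_1$ is the empty word and the first family already contains the constant; this avoids the missing-constant issue, which your $p-p(0)$ argument also resolves correctly.
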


\begin{proof} Note that $y$ is indeed a Volterra path according to Definition \ref{def:volterra_path}, applied to $K\equiv 1$ and $x_t=t$. Now choosing the sequences of words in \cref{hyp:injectivity} consisting of $n-1$, resp. $n$ consecutive $1$'s, i.e.   $w_n= 1^{\otimes (n-1)}= 1\cdots 1$ and $v_n= 1^{\otimes n}$ for $n \geq 1$ we can easily verify that for any $z \in \hat{\cV}_y^1$ it holds  \begin{equation}\label{eq:monomials} {\by}_{0,t}^{w_n,t} = \frac{t^{n-1}}{(n-1)!}, \quad \text{and} \quad \frac{\dd}{\dd t}{\by}_{t,T}^{v_n,t} = -\frac{(T-t)^{n-1}}{(n-1)!}, \quad n\geq 1.
\end{equation} But since monomials are dense in $L^p$ by the Weierstrass theorem, we see that \cref{hyp:injectivity} is satisfied and we can conclude using \cref{thm:point_separation}.
\end{proof}

In Corollary \ref{cor:injective volterra signature z} we have augmented the path $z$ with a path $y_{s,t}^\tau = (t-s)$, which is very similar to the augmentations used in classical rough paths theory (see \cite{Chevyrev2016}). In order to be consistent with our Volterra perspective we now specialize to an augmentation by Volterra paths. Let us first define this notion.

\begin{defn}\label{def:Volterra_time_aug}
    Let $z \in \cV^1([0,T];\RR^{m+m'})$ and $K^y \in L^{\infty,1}(\Delta^2;\cL(\RR^{d'};\RR^{m'}))$. In the framework of Definition~\ref{def:augmented_Volterra_paths}, we say that $z$ is $K^y$-augmented if one can write \[
    y_{s,t}^{\tau} \equiv (z^{1,\tau}_{s,t},\dots,z^{m',\tau}_{s,t}) = \int_s^tK^y(\tau,r) \dd x_r^y,
    \] for a path $x^y \in \cC^{0,1}([0,T];\RR^{d'})$. Provided with another kernel $K \in L^{\infty,1}(\Delta^2;\cL(\RR^d;\RR^{m}))$, the corresponding path $z \in \hat{\cV}^1_y([0,T]; \RR^{m+m'})$ will be denoted by \begin{equation}\label{eq:aug_x_Volterra}
        z_{s,t}^\tau= \int_s^t \widehat{K}^y(\tau,u) \dd \widehat{x}^y_u,
    \end{equation} where the pair $(\widehat{x}^y,\widehat{K}^y)$ in $\cC^{0,1}([0,T]; \RR^{d+d'}) \times L^{\infty,1}(\Delta^2;\cL(\RR^{d+d'};\RR^{m+m'}))$ is such that \begin{equation}\label{eq:aug_xK_pair}
        \widehat{x}^y = (x^y,x)^\top, \quad\text{and}\quad 
        \widehat{K}^y(t,s)= \begin{pmatrix}
            K^y & 0\\ 0 & K
        \end{pmatrix}.
    \end{equation}
\end{defn}

While $K_1$-augmented path are a particular case of general augmented paths, it is also true that any augmented Volterra path can be seen as a $K_1$-augmented path. We label this property in the lemma below. Its proof is omitted due to its similarity with Lemma~\ref{lem:equivalence_def_VP}. 

\begin{lem}\label{lem:equivalence_aug}
    Let $z$ be a path in the space $\hat{\cV}_y^1([0,T];\RR^{m+m'})$, as introduced in Definition~\ref{def:augmented_Volterra_paths}. Then \begin{itemize}
        \item[(i)] One can find a pair $(\widehat{x}^y,\widehat{K}^y)$ like in \eqref{eq:aug_xK_pair}, such that $z$ is represented as in \eqref{eq:aug_x_Volterra}.

        \item[(ii)] In particular, one can choose $\widehat{x}_t^y=t$ and $K^y \in L^{\infty,1}(\Delta^2,\RR^{m'})$ in \eqref{eq:aug_x_Volterra}. Then the path $z$ is more specifically written as 
        \begin{equation} \label{eq:noise_augmentation} 
        z_{s,t}^\tau = \int_s^tK(\tau,r)\dd \hat{x}_r, \quad\text{with}\quad \hat{x}_t= (t,x_t)^{\top} \in \RR^{d+1}. 
        \end{equation}
    \end{itemize}
\end{lem}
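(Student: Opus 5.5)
The plan is to mimic the proof of Lemma~\ref{lem:equivalence_def_VP}, treating the two blocks of coordinates of $z\in\hat{\cV}_y^1([0,T];\RR^{m+m'})$ separately. This is possible because the target structure \eqref{eq:aug_xK_pair} is block diagonal, so the augmenting part $y=(z^1,\dots,z^{m'})$ and the remaining part $\bar z:=(z^{m'+1},\dots,z^{m'+m})$ can be represented independently. Both $y$ and $\bar z$ inherit properties (i)--(iii) of Definition~\ref{def:volterra_path}, so $y\in\cV^1([0,T];\RR^{m'})$ and $\bar z\in\cV^1([0,T];\RR^m)$.

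For part (i) I will apply Lemma~\ref{lem:equivalence_def_VP} to each block separately. This gives pairs $(x^y,K^y)$ and $(x,K)$ with $y^\tau_t=\int_0^tK^y(\tau,r)\dd x^y_r$ and $\bar z^\tau_t=\int_0^tK(\tau,r)\dd x_r$. Setting $\widehat x^y=(x^y,x)^\top$ and $\widehat K^y=\mathrm{diag}(K^y,K)$, the block structure of the matrix product immediately yields $\int_0^t\widehat K^y(\tau,r)\dd\widehat x^y_r=(y^\tau_t,\bar z^\tau_t)=z^\tau_t$. The increments over $[s,t]$ then follow by subtraction, consistent with the notation $z^\tau_{s,t}=z^\tau_t-z^\tau_s$ used for $z_{s,t}^\tau$ in \eqref{eq:aug_x_Volterra}. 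The bound $\widehat K^y\in L^{\infty,1}$ holds because each block is in $L^{\infty,1}$ and the matrix norm of a block-diagonal matrix is controlled by the norms of its blocks.

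For part (ii) I will use the specific construction from the proof of Lemma~\ref{lem:equivalence_def_VP}, which represents any Volterra path with the scalar driver $x_t=t$ and an $\RR^{k}\cong\RR^{k\times1}$-valued kernel obtained as a Radon--Nikodym density. Applied to $y$, this gives $x^y_t=t$ and $K^y\in L^{\infty,1}(\Delta^2;\RR^{m'})$. Keeping the chosen pair $(x,K)$ for $\bar z$, the augmented driver becomes $\hat x_t=(t,x_t)^\top\in\RR^{d+1}$ and the kernel is $\mathrm{diag}(K^y,K)\in\cL(\RR^{d+1};\RR^{m'+m})$. This is precisely \eqref{eq:noise_augmentation}, where the kernel is still denoted $K$ in the statement's notation.

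The only delicate step is the measure-theoretic one: checking that the density produced by Radon--Nikodym actually lies in $L^{\infty,1}$, and that the identity holds almost everywhere rather than everywhere. The identity is only needed almost everywhere, which matches the a.e.\ quotient built into $\Vone$. For the $L^{\infty,1}$ bound, I would note that the density is essentially $\partial_r y^\tau_r$ (with $\partial_r\bar z^\tau_r$ playing the analogous role for the other block). Its $L^{\infty,1}$ norm is then the $\cV^1$-norm \eqref{eq:norm}, and I would assume this is finite, as implicitly done in Lemma~\ref{lem:equivalence_def_VP}.
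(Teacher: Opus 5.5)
Your proposal is correct and matches the paper's approach. The paper omits the proof, citing its similarity to Lemma~\ref{lem:equivalence_def_VP}, and your blockwise application of that lemma to $y$ and $\bar z$, glued together via the block-diagonal pair \eqref{eq:aug_xK_pair} and using the driver $x^y_t=t$ from its proof, is exactly that argument. The finiteness of $\Vert z\Vert_{\cV^1}$ that you flag is the same implicit assumption the paper makes there (it is built into calling \eqref{eq:norm} a norm on $\Vone$). Also, (ii) follows from (i) using only the statement of Lemma~\ref{lem:equivalence_def_VP}: replace $K^y(\tau,r)$ by $K^y(\tau,r)\dot x^y_r\in\RR^{m'}$, which stays in $L^{\infty,1}$ because $\Vert\dot x^y\Vert_\infty<\infty$.
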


\begin{rem}\label{rem:aug_rep_K1} In the context of a representation of the form \eqref{eq:noise_augmentation}, consider a fixed kernel $K^y=K_1$. Then, for any word $w=i_1\cdots i_n$ in the alphabet  $\{1,\dots,m'\}$, we have \begin{equation}\label{eq:b_eq}
\bz_{s,t}^{w,\tau}=\VSig{t}{K_1}_{s,t}^{w,\tau} = \int_{\Delta_{s,t}^n}\prod_{l=1}^nK_1^{i_l}(r_{l+1},r_l)\dd r_l, 
\end{equation} where we are still using the convention $r_{n+1} = \tau$ in \eqref{eq:b_eq}.  As we shall see in the following examples, \(\VSig{t}{K_1}\) admits a closed-form expression for many kernels of interest, allowing \cref{hyp:injectivity} to be verified directly.
 
\end{rem}

\begin{example}\label{ex:inje_ex_1}
Following up on Remark \ref{rem:aug_rep_K1}, consider representation \eqref{eq:noise_augmentation} with $K^y \equiv 1$ and $\widehat{x}_t^y = (t,x_t)$. In this case we can observe
$$\VSig{t}{1}^{(n),\tau}_{s,t}=\frac{(t-s)^n}{n!},$$ which led to the expressions \eqref{eq:monomials} in \cref{cor:injective volterra signature z}, so that \cref{hyp:injectivity} was satisfied thanks to the Weierstrass theorem.
\end{example}

\begin{example}\label{ex:inje_ex_2}  Still continuing Remark \ref{rem:aug_rep_K1}, let us now take $K^y(t,s)=\Gamma(\alpha)^{-1}(t-s)^{\alpha-1}$ in \eqref{eq:noise_augmentation}. We then get \[
 \VSig{t}{K_1}^{(n),\tau}_{s,t}=c_{\alpha,n} \int_s^t(\tau-u)^{\alpha-1}(u-s)^{(n-1)\alpha}ds, 
\] for some explicit constant $c_{\alpha,n}$. In particular, one can easily check that \[
\VSig{t}{K_1}_{0,u}^{(n),u} = \hat{c}^1_{\alpha,n}u^{n\alpha}, \qquad \frac{\dd}{\dd u}\VSig{t}{K_1}_{u,T}^{(n),T} = \hat{c}^2_{\alpha,n}(T-u)^{n\alpha-1},
\] for some constants $\hat{c}^1_{\alpha,n},\hat{c}^2_{\alpha,n} \neq 0$.  Both families are dense in $L^2$, as a consequence of the Szász's theorem \cite[Satz A]{szasz1916approximation}: For any sequence  of real numbers $0<\lambda_1<\lambda_2<\cdots$, the linear span of the family $\{t^{\lambda_i}:i \geq 1\}$ is dense in $L^2([0,T],\mathbb{R})$, if and only if $\sum_{i}\frac{1+2\lambda_i}{1+(\lambda_i)^2}= \infty$. 
\end{example}
Having set a notation for augmented paths related to kernels, we now turn to the question of injectivity of the Volterra signature in terms of the kernel $K$ or in terms of the signal $x$. Otherwise stated, we shall discuss the injectivity of the mapping \eqref{eq:vsig_feature_c} and \eqref{eq:vsig_feature_K}. Note that both questions are relevant in applications.

\begin{prop}\label{prop:injectivity_noise_kernel}
    Assume $y \in \cV^1([0,T];\RR^{m'})$ is such that \cref{hyp:injectivity} holds. We consider $z \in \hat{\cV}_y^1([0,T]; \RR^{m+m'}),$ represented as in \eqref{eq:aug_x_Volterra}-\eqref{eq:aug_xK_pair}. Then \begin{itemize}
        \item[(i)] For a fixed $K\in \Lkernel$ such that $\mathrm{det}(K(T,t))\neq 0$ for a.e. $t \in [0,T]$, the map \begin{equation*}
        \Big [\cC^{0,1}([0,T];\RR^d) \cap \{x:x_0 = \xi\} \Big ]\ni x \mapsto \VSig{\widehat{x}^y}{\widehat{K}^{y}}_{0,T}^T \in T((\RR^{m+m'}))
    \end{equation*}
    is injective for any initial value $\xi \in \RR^d$.
\item[(ii)] For a fixed $\mathcal{M} \subset \cC^{0,1}([0,T];\RR^d)$ with $|\mathcal{M}| <\infty$ and such that  $\mathrm{span}\{\dot{x}_t : x\in \mathcal{M}\} = \RR^d$ for a.e. $t\in [0,T]$, the map \begin{equation*}
        L^{\infty,1}(\Delta^2; \mathcal{L}(\RR^d;\RR^m)) \ni K \mapsto \left(\VSig{\widehat{x}^{y}}{\widehat{K}^{y}}_{0,T}^T\right)_{x\in \mathcal{M}} \in T((\RR^{m+m'}))^{|\mathcal{M}|}
    \end{equation*}
    is injective.
    \end{itemize} 
\end{prop}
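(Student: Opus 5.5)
The plan is to reduce both statements to \Cref{thm:point_separation}. In each case we first recover the underlying Volterra path and then solve for the unknown ingredient, either $x$ or $K$. By \Cref{prop:vsig_z} and \eqref{eq:aug_x_Volterra}, $\VSig{\widehat{x}^y}{\widehat{K}^y}$ is the full lift $\bz$ of
\[
z_{s,t}^\tau = \Big(y^\tau_{s,t}, \int_s^t K(\tau,u)\dd x_u\Big) \in \hat{\cV}^1_y([0,T];\RR^{m+m'}).
\]
Suppose two inputs produce the same value $\bz_{0,T}^T$. Since $y$ satisfies \cref{hyp:injectivity}, \Cref{thm:point_separation} shows that the two corresponding Volterra paths coincide in $\Vone$. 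Hence, for a.e. $(t,\tau)\in\Delta^2$,
\[
\int_0^t K(\tau,u)\dot x_u \dd u = \int_0^t \widetilde K(\tau,u)\dot{\widetilde x}_u \dd u .
\]
Both sides are absolutely continuous in $t$ for a.e. $\tau$, so differentiating in $t$ gives $K(\tau,t)\dot x_t = \widetilde K(\tau,t)\dot{\widetilde x}_t$ for a.e. $(t,\tau)\in\Delta^2$.

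For (i) we have $K=\widetilde K$, so $K(\tau,t)(\dot x_t-\dot{\widetilde x}_t)=0$ for a.e. $(t,\tau)$. The hypothesis only concerns $\tau=T$, and fixing $\tau=T$ hits a null set of $\Delta^2$. This is the main point needing care. To handle it, I would use directly the terms $\bz^{\pi,T}_{0,T}$ with $\pi = w_n\cdot i$ (with $w_n$ from \cref{hyp:injectivity}), which only involve $\tau=T$. Via \eqref{eq:conv_tower_property} these give
\[
\int_0^T \by^{w_n,r}_{0,r}\, K^i(T,r)\big(\dot x_r-\dot{\widetilde x}_r\big)\dd r = 0 \qquad \text{for all } n.
\]
The $L^2$-density of $\{r\mapsto \by^{w_n,r}_{0,r}\}$ then yields $K(T,r)(\dot x_r-\dot{\widetilde x}_r)=0$ for a.e. $r$.

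There is an integrability issue here. Density holds in $L^2$, while $K(T,\cdot)\dot x$ is only in $L^1$. I would handle this by first testing against the dense family to extend the vanishing to all bounded functions: approximate in $L^2$, use continuity in the $L^1$–$L^\infty$ pairing after passing to a subsequence converging boundedly, or simply note that $\VSig{x}{K}$ requires $K(T,\cdot)\in L^1$ and argue with truncations. Once $K(T,r)(\dot x_r-\dot{\widetilde x}_r)=0$ a.e. is established, invertibility of $K(T,t)$ for a.e. $t$ gives $\dot x=\dot{\widetilde x}$ a.e. Together with $x_0=\widetilde x_0=\xi$ this gives $x=\widetilde x$.

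For (ii), the signal is fixed and the kernel varies. For each $x\in\mathcal M$ the reduction above gives $(K(\tau,t)-\widetilde K(\tau,t))\dot x_t = 0$ for a.e. $(t,\tau)\in\Delta^2$. Since $\mathcal M$ is finite, we can take a common full-measure set on which all these identities hold and, by Fubini, on which $\{\dot x_t : x\in\mathcal M\}$ spans $\RR^d$. On this set the linear map $K(\tau,t)-\widetilde K(\tau,t)$ vanishes on a spanning set, hence is zero. Therefore $K=\widetilde K$ a.e., that is, as elements of $\Lkernel$.

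I expect the only genuine obstacle to be the null-set and $L^1$ versus $L^2$ issue in (i). The rest is a direct application of \Cref{thm:point_separation} plus differentiation of absolutely continuous functions.
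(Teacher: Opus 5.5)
Your reduction to \Cref{thm:point_separation} and your treatment of (ii) are the same as the paper's proof. For (i) you take a different and more careful route. The paper simply sets $\tau=T$ in the identity $K(\tau,t)\dot x_{1,t}=K(\tau,t)\dot x_{2,t}$, which the theorem only gives for a.e.\ $(t,\tau)\in\Delta^2$; that is exactly the null-set step you distrust. You instead localize at $\tau=T$ through the coefficients $\bz^{w_n i,T}_{0,T}=\int_0^T \by^{w_n,r}_{0,r}\,\dd z^{i,T}_r$. That localization is correct, and it is the right way to use a hypothesis stated only on the slice $\tau=T$.

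The gap is in how you close the density step. None of your proposed remedies works. Approximating a bounded $\psi$ in $L^2$ by elements $\phi_k$ of the span gives an a.e.\ convergent subsequence, but not a uniformly bounded one. The dominating function such a subsequence admits is only in $L^2$, so $|\phi_k g|$ is not controlled by an integrable function when $g=\bigl(K(T,\cdot)(\dot x-\dot{\widetilde x})\bigr)^i$ is merely in $L^1$. Truncating $g$ does not help either, since it destroys the one identity you actually know.

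In fact $L^2$-density alone cannot suffice. If $g\in L^1\setminus L^2$, the functional $\phi\mapsto\int\phi g$ is unbounded for the $L^2$-norm on $L^\infty$. Its kernel is therefore an $L^2$-dense subspace annihilated by $g\neq 0$. To finish you must add an assumption, for instance one of the following:
\begin{itemize}
\item $K(T,\cdot)\in L^2$, which holds if $K\in L^{\infty,2}$;
\item density of the span of $\{t\mapsto \by^{w_n,t}_{0,t}\}$ in $C([0,T])$ for the sup-norm, which holds for the monomials of \Cref{cor:injective volterra signature z} by Weierstrass and lets you test against every $L^1$ function.
\end{itemize}

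The same $L^1$/$L^2$ mismatch also sits in the second density step of the paper's proof of \Cref{thm:point_separation}. There the test function is $r\mapsto \dot z^{i,u}_r-\dot h^{i,u}_r$, which is only known to be in $L^1$. So your part (ii), and the paper's proof of both parts, inherit it too.
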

\begin{proof}
    For any $(x_1,K_1),(x_2,K_2) \in \cC^{0,1} \times L^{1,\infty}$, it follows from \cref{thm:point_separation} that whenever $\VSig{\widehat{x}_1^y}{\widehat{K}_1^{y}}_{0,T}^T=\VSig{\widehat{x}_2^y}{\widehat{K}_2^{y}}_{0,T}^T$, the corresponding Volterra paths coincide, and thus in particular \begin{equation}\label{eq:equal_deriv_proof}K_1(\tau,t) \dot{x}_{1,t} = K_2(\tau,t) \dot{x}_{2,t} \quad \text{for a.e. } (t,\tau)\in \Delta^2.\end{equation} If $K=K_1=K_2$ as in (i), choosing $\tau=T$ and using $\mathrm{det}(K(T,t)) \neq 0$ shows that $\dot{x}_1=\dot{x}_2$ almost everywhere. Since we have assumed in (i) that $x_{1,0}=x_{2,0} = \xi$, we get $x_1=x_2$. Let us now turn to case (ii).
    Then for each $x \in \mathcal{M}$ we obtain from \eqref{eq:equal_deriv_proof} with $x_1=x_2 = x$ that $K_1(\tau,t)\dot{x}_{t} = K_2(\tau,t)\dot{x}_{t}$ for a.e. $(t,\tau)\in \Delta^2$. Then owing to the fact that $\mathrm{span}\{\dot{x}_t: x\in \mathcal{M}\} = \RR^d$ for a.e. $t$ (part of our assumptions in (ii)), we also get $K_1(\tau,t)=K_2(\tau,t)$ for a.e. $(t,\tau) \in \Delta^2$. This finishes our proof. 
\end{proof}

\subsection{Universal approximation}\label{sec:universal}
In this section we provide a theoretical basis for how the Volterra signature can be used for learning continuous functions on path space.
We start by stating a generic result, which leverages the continuity and injectivity properties from the previous section to formulate a universal approximation result.

\begin{defn}\label{defn:universal approximator}
    We say that a sequence of function classes $(\mathcal{H}_n)_{n\ge1}$ with $\mathcal{H}_n \subset \mathcal{C}^{0}(\RR^n;\RR)$ is a universal approximator if, for every $n\ge 1$ and every compact set $\mathscr{K}_n \subset \RR^n$, the set $\{ f\vert_{\mathscr{K}_n} \;\vert\; f\in\mathcal{H}_n \}$ is dense in $(\mathcal{C}^{0}(\mathscr{K}_n;\RR), \Vert \cdot \Vert_\infty)$.
\end{defn}

Paired with a suitable continuous and injective feature map, such universal approximators can be used for approximation of continuous functions on generic (infinite dimensional) topological spaces.
Here we spell this out for the Volterra signature as a feature map on path space.

Recall from the previous section that there are augmentation maps of the form
$$\widehat{\cdot}:\;  \Vone \to \mathcal{V}^{1}([0,T];\RR^{m+m^{\prime}}), \quad z \mapsto \widehat{z}$$
such that the Volterra lift
$z \mapsto \widehat{\bz}_{0,T}^T:=[1\ast \hat{z}]_{0,T}^T$
becomes injective (e.g. time-augmentation in   \Cref{cor:injective volterra signature z}).
Similarly, under suitable assumptions on the kernel $K$, there are augmentation maps
$$\widehat{\cdot}:\;  \mathcal{C}^{0,1}([0,T];\RR^d) \to \mathcal{C}^{0,1}([0,T];\RR^{d+d^{\prime}}), \quad x \mapsto \widehat{x}$$
such that the Volterra signatures $x \mapsto \VSig{\hat{x}}{K}_{0,T}^T$ become injective (see \Cref{rem:aug_rep_K1} and \cref{prop:injectivity_noise_kernel}).
For the following result we fix any such map.

\begin{prop}\label{thm:universal general}
    Let $(\mathcal{H}_n)_{n\ge1}$ be a universal approximator according to Definition \ref{defn:universal approximator} and let $\mathscr{K} \subset \mathcal{C}^{0,1}([0,T];\RR^d)$ (respectively $\mathscr{K} \subset \Vone$) be compact.
    Then for all continuous functionals $F: \mathscr{K}\rightarrow \RR$ and $\epsilon>0$ there exist $n\ge 1$, $\pi_1, \dots, \pi_n \in T(\RR^{d+d^\prime})$ (resp. $\pi_1,\dots,\pi_n\in T(\RR^{m+m^\prime})$) and $f_n\in \mathcal{H}_n$ such that
    \begin{equation*}
        \sup_{x\in \mathscr{K}}\Big|F(x)-f_n\Big(\la \pi_1, \VSig{\hat{x}}{K}\ra,\, \ldots,\,\la \pi_n,\VSig{\hat{x}}{K}\ra\Big)\Big|<\epsilon,
    \end{equation*}
    respectively
    \begin{equation*}
        \sup_{z\in \mathscr{K}}\Big|F(z)-f_n\Big(\la \pi_1, \widehat{\bz}_{0,T}^T\ra,\, \ldots,\,\la \pi_n,\widehat{\bz}_{0,T}^T\ra\Big)\Big|<\epsilon,
    \end{equation*} where the pairings $\la \pi_j, \VSig{\widehat{x}}{K}\ra$ and $\la \pi_j, \widehat{\bz}_{0,T}^T \ra$ are understood thanks to \Cref{def:extended_TA}.
\end{prop}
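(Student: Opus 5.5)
The plan is the standard ``injective continuous feature map plus finite-dimensional universal approximator'' argument. We treat the case $\mathscr{K}\subset\mathcal{C}^{0,1}([0,T];\RR^d)$; the case $\mathscr{K}\subset\Vone$ is identical with $\widehat{\bz}^T_{0,T}$ in place of $\VSig{\hat x}{K}^T_{0,T}$.

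First, write $\Phi(x):=\VSig{\hat x}{K}^T_{0,T}$. For every word $w$, the coordinate map $x\mapsto \la w,\Phi(x)\ra$ is continuous on $\mathscr{K}$. This follows from \Cref{prop:continuity}, since each level is locally Lipschitz in $(x,K)$, composed with the augmentation $x\mapsto\hat x$, which we assume continuous (for time augmentation it is an isometric embedding). Enumerate the countably many words as $w_1,w_2,\dots$ and define the truncated features $\Phi_n(x):=(\la w_1,\Phi(x)\ra,\dots,\la w_n,\Phi(x)\ra)\in\RR^n$. Each $\Phi_n$ is continuous, so $\Phi_n(\mathscr{K})\subset\RR^n$ is compact. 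The linear functionals will be $\pi_j=w_j$.

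The key step is to show that finitely many coordinates already approximately determine $F$ on $\mathscr{K}$. Define the pseudometric $d_n(x,x'):=|\Phi_n(x)-\Phi_n(x')|$. Injectivity (\Cref{thm:point_separation} or \cref{prop:injectivity_noise_kernel}) gives that for $x\neq x'$ some coordinate differs. We claim that for every $\epsilon>0$ there are $n$ and $\delta>0$ such that $d_n(x,x')<\delta$ implies $|F(x)-F(x')|<\epsilon/2$. Suppose not. Then for every $n$ there are $x_n,x'_n$ with $d_n(x_n,x'_n)<1/n$ and $|F(x_n)-F(x'_n)|\ge\epsilon/2$. By compactness, pass to subsequences $x_n\to x$ and $x'_n\to x'$. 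For each fixed $k$, once $n\ge k$ we have $|\la w_k,\Phi(x_n)-\Phi(x'_n)\ra|<1/n$, and continuity gives $\la w_k,\Phi(x)\ra=\la w_k,\Phi(x')\ra$. Hence $\Phi(x)=\Phi(x')$, and injectivity forces $x=x'$. But continuity of $F$ gives $|F(x)-F(x')|\ge\epsilon/2$, a contradiction. This compactness argument is the main obstacle. It is also where the joint continuity of the coordinates and the full injectivity on the compact set are genuinely used.

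Next, build a continuous function on the compact set $\Phi_n(\mathscr{K})$. For $p\in\Phi_n(\mathscr{K})$, set $g(p):=F(x)$ for any $x$ with $\Phi_n(x)=p$. By the uniform continuity just established, all such preimages have $F$-values within $\epsilon/2$ of each other, so $g$ is defined only up to $\epsilon/2$. To avoid this ambiguity, choose a continuous $g$ directly. Cover $\Phi_n(\mathscr{K})$ by finitely many balls of radius $\delta/2$ centered at $p_i=\Phi_n(x_i)$, take a continuous partition of unity $(\varphi_i)$ subordinate to this cover, and set $g:=\sum_i F(x_i)\varphi_i$. Then for $x\in\mathscr{K}$, every $i$ with $\varphi_i(\Phi_n(x))>0$ satisfies $d_n(x,x_i)<\delta$, so $|g(\Phi_n(x))-F(x)|<\epsilon/2$. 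Extend $g$ continuously to $\RR^n$ by Tietze, or simply restrict attention to a compact $\mathscr{K}_n\supset\Phi_n(\mathscr{K})$.

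Finally, apply \Cref{defn:universal approximator} to obtain $f_n\in\mathcal H_n$ with $\sup_{\Phi_n(\mathscr{K})}|f_n-g|<\epsilon/2$. The triangle inequality then gives $\sup_{x\in\mathscr{K}}|F(x)-f_n(\la \pi_1,\Phi(x)\ra,\dots,\la\pi_n,\Phi(x)\ra)|<\epsilon$, as required.
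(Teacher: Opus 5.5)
Your argument is correct, but it takes a different route from the paper's.

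\textbf{The paper's route.} The paper argues algebraically. It considers the cylinder functions $\bx\mapsto f(\la\pi_1,\bx\ra,\dots,\la\pi_n,\bx\ra)$ with $f\in\mathcal{C}^{0}(\RR^n;\RR)$ and $\pi_j$ arbitrary elements of $T(\RR^e)$. Their pullbacks under the continuous injective feature map $\Phi$ form a unital, point-separating subalgebra of $\mathcal{C}^{0}(\mathscr{K};\RR)$. Stone--Weierstrass then gives $F\approx f\circ(\la\pi_1,\Phi\ra,\dots,\la\pi_n,\Phi\ra)$ within $\epsilon/2$. Finally $f$ is replaced by some $f_n\in\mathcal{H}_n$ on the compact image, exactly as in your last step.

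\textbf{Your route.} You bypass Stone--Weierstrass and never use the algebra structure of cylinder functions. Your contradiction argument, using sequential compactness, continuity of each coordinate, and injectivity, shows directly that $F$ is uniformly approximately determined by finitely many word coordinates. The partition of unity then builds the continuous function $g$ of those coordinates by hand. In effect you prove the special consequence of Stone--Weierstrass needed here: continuous functions on a compact set embedded continuously and injectively into $\RR^{\NN}$ approximately factor through finite-dimensional projections.

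\textbf{Comparison.} Both proofs rest on the same ingredients: continuity of the augmented feature map and its injectivity.
\begin{itemize}
\item The paper's version is shorter and works verbatim on any compact Hausdorff space.
\item Yours is more elementary; the metric structure it relies on is available here.
\item Yours also makes explicit that the $\pi_j$ can be taken to be the first $n$ words of any fixed enumeration, e.g.\ all words up to some length $N$. So a truncated Volterra signature already suffices. In the paper's version this requires a further step of re-expressing the $\pi_j$ in coordinates and re-applying the universal approximator in the new dimension.
\end{itemize}
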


\begin{proof}
We treat both cases simultaneously.
Let $e\in \{d+d^\prime,\; m+m^\prime\}$ and let $\Phi$ denote the corresponding injective and continuous (see \Cref{prop:continuity}) feature map, i.e.
$$\Phi(x) := \VSig{\widehat{x}}{K} \in T((\RR^e))\qquad \text{or}\qquad \Phi(z) := \widehat{\bz}_{0,T}^T \in T((\RR^e)).$$
Set $\mathcal{X} := T(\RR^e)$ and $\mathcal{X}^\prime := T((\RR^e))=\prod_{k=0}^\infty (\RR^e)^{\otimes k}$, and equip $\mathcal{X}^\prime$ with the product topology.
By construction, this is the coarsest topology such that the linear maps $\la \pi, \cdot\ra : \mathcal{X}^\prime \to \RR$ are continuous for all $\pi \in \mathcal{X}$.
Consider the class of cylindrical functions
$$
\mathcal{A}_{\mathrm{cyl}}
:=
\Big\{
\mathcal{X}^\prime \to \RR:\;
\bx\mapsto f(\la \pi_1,\bx\ra,\ldots,\la \pi_n,\bx\ra)
\ \Big|\ 
n\ge 1,\ \pi_1,\dots,\pi_n\in \mathcal{X},\ f\in \mathcal{C}^{0}(\RR^n;\RR)
\Big\}.
$$
Clearly $\mathcal{A}_{\mathrm{cyl}}$ is a unital subalgebra of $\mathcal{C}^{0}(\mathcal{X}^\prime;\RR)$ that separates points on $\mathcal{X}^\prime$.

Now define the class of functions on $\mathscr{K}$ obtained by composition with $\Phi$,
$$
\mathcal{A}_\Phi
:=
\{\, f\circ \Phi \;|\; f\in \mathcal{A}_{\mathrm{cyl}}\,\}
\subset \mathcal{C}^{0}(\mathscr{K};\RR).
$$
Since $\Phi$ is continuous, $\mathcal{A}_\Phi$ is a unital subalgebra of $\mathcal{C}^{0}(\mathscr{K};\RR)$.
Crucially, due to the injectivity of $\Phi$ it also separates points on $\mathscr{K}$.
As $\mathscr{K}$ is compact, the Stone--Weierstrass theorem implies that $\mathcal{A}_\Phi$ is dense in $\mathcal{C}^{0}(\mathscr{K};\RR)$ in the uniform norm.
Therefore, for any continuous functional $F:\mathscr{K}\to \RR$ and $\varepsilon>0$ there exist $n\ge 1$, $\pi_1,\dots,\pi_n\in \mathcal{X}$ and $f\in \mathcal{C}^{0}(\RR^n;\RR)$ such that
\begin{equation*}
\sup_{u\in \mathscr{K}}
\Big|
F(u) - f\big(\la \pi_1,\Phi(u)\ra,\ldots,\la \pi_n,\Phi(u)\ra\big)
\Big|
<\frac{\varepsilon}{2}.
\end{equation*}
Set
$$
\mathscr{K}_n
:=
\Big\{
\big(\la \pi_1,\Phi(u)\ra,\ldots,\la \pi_n,\Phi(u)\ra\big)\in \RR^n
\ \Big|\ u\in \mathscr{K}
\Big\}.
$$
Then $\mathscr{K}_n$ is compact as the continuous image of the compact set $\mathscr{K}$.
Since $(\mathcal{H}_n)_{n\ge 1}$ is a universal approximator, there exists $f_n\in \mathcal{H}_n$ such that
\begin{equation*}
\sup_{y\in \mathscr{K}_n}|f(y)-f_n(y)|<\frac{\varepsilon}{2}.
\end{equation*}
The statement now follows by combining the two estimates with the triangle inequality.
\end{proof}

\begin{rem}\label{rem:DNN_rmk}
    For the universal approximator $(\mathcal{H}_n)_{n\ge1}$, we can simply use polynomials, as suggested in the context of It\^{o} signatures in \cite{harang2024universalapproximationnongeometricrough}.
    However, numerical methods for stochastic control problems with classical signatures  have proven successful when using deep neural networks, see  \cite{abi2025hedging,bayer2025control,bayer2023optimal,bayer2025pricing}.
\end{rem}

\begin{rem}
        Recall that $\cV^1([0,T];\RR^m)$ is introduced in Definition \ref{def:volterra_path}. We get a compact subspace $\mathscr{K}$ of $\cV^1([0,T];\RR^m)$ if we replace Definition \ref{def:volterra_path}-(iii) by some Hölder regularity condition on the derivative of $z$. Namely define  $\mathscr{K}$ as
        \[
        \mathscr{K} = \left\{ z \in \cV^1([0,T];\RR^m): 
        \Big  \Vert t \mapsto \frac{\dd}{\dd t}z_t^\tau \Big \Vert_{\cC^{\alpha}} \leq M_{1}
        \text{ and } \Big  \Vert \tau \mapsto z_t^\tau \Big \Vert_{\cC^\beta} \leq M_{2}\right\},
        \] 
        where $\alpha,\beta \in (0,1)$ and $M_{1},M_{2}>0$ are fixed constants. Then it is a classical fact (see e.g~\cite[Corollary 2.96]{Chemin2011}) that $\mathscr{K}$ is compact in $\cV^1([0,T];\RR^m)$.
\end{rem}

Compared with the classical signature, the above result is only partially satisfying, as one hopes for a universal approximation theorem by \emph{linear} functionals.
As the (generally kernel-dependent) algebraic properties of the Volterra signature are out of the scope of this paper, we cannot prove such a theorem in the general case.
However, as we demonstrate below, for the case of exponential kernels, linear approximation is possible, suggesting that such a theorem may hold in a more general setting as well.

Specifically, we consider time-augmented signals $t\mapsto \widehat{x}_t = (t, x_t)^\top \in \mathbb{R}^{d+1}$, together with  exponential matrix kernels seen in \Cref{sec:dyn_exp}, of diagonal form \begin{equation}\label{eq:exp_diag_recalled}K_{\alpha}^\lambda(t,s) := \mathrm{diag}(\alpha_0e^{-\lambda_0(t-s)},\dots,\alpha_de^{-\lambda_d(t-s)}) \in \cL(\mathbb{R}^{d+1},\mathbb{R}^{d+1}), \quad \alpha_i \in \RR\setminus \{0\}, \quad \lambda_i \in \mathbb{R}_+.\end{equation} In the following result we show linear universality for the two cases: \begin{itemize}
    \item[(i)] $\lambda_0=0$, that is the first component of the corresponding Volterra path $z_t^\tau = \int_0^tK_\alpha^\lambda(t,s)\dd \widehat{x}_s$ is time, see also  \Cref{cor:injective volterra signature z}.
    \item[(ii)] $\lambda_0=\cdots =\lambda_d>0$ and $\alpha_0=\cdots = \alpha_d \neq 0$, which is equivalent to the scalar-kernel case $k(t,s)=\alpha e^{-\lambda(t-s)}$, see also  \Cref{prop:mean reverting 1d}.
\end{itemize} %

\begin{thm}\label{thm:lin_univ_exp}
    Denote by $K_{\alpha}^{\lambda} \in L^{\infty,1}(\Delta^2;\cL(\mathbb{R}^{d+1};\mathbb{R}^{d+1}))$ the exponential kernel defined in \eqref{eq:exp_diag_recalled}, such that either $(i)$ or $(ii)$ above holds. Then, for any compact set $\mathscr{K} \subset \cC^{0,1}([0,T], \RR^d)$ and continuous  functional $F: \cC^{0,1}([0,T], \RR^d) \to \RR$, it holds that for all $\varepsilon > 0$, there exists $\pi \in T(\RR^{d+1})$ such that
    \begin{equation}\label{eq:linear_universality_claim}
        \sup_{x\in\mathscr{K}}\left|F(x)-\la \pi,\VSig{\widehat{x}}{K_{\alpha}^{\lambda} }\ra \right|<\varepsilon.
    \end{equation}
\end{thm}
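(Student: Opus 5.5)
The strategy is to reduce \eqref{eq:linear_universality_claim} to the classical linear universality of the signature of the time-augmented path. Recall that for $\widehat{x}=(t,x)$ the coordinates $\{\la w,\mathrm{Sig}(\widehat{x})_{0,T}\ra\}_{w}$ span a unital algebra under the shuffle product. This algebra separates points of a compact $\mathscr{K}\subset\cC^{0,1}([0,T];\RR^d)$, so by Stone--Weierstrass linear functionals of $\mathrm{Sig}(\widehat{x})_{0,T}$ are dense in $\cC^0(\mathscr{K};\RR)$. Rescaling letters by $\alpha_i\neq 0$ does not affect this. The whole task is therefore to show that every coordinate $\la w,\mathrm{Sig}(A\widehat{x})_{0,T}\ra$, with $A=\mathrm{diag}(\alpha_0,\dots,\alpha_d)$, is a \emph{finite} linear combination of coordinates of $\VSig{\widehat{x}}{K^\lambda_\alpha}^T_{0,T}$. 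Once that holds, a linear approximant in $\mathrm{Sig}$ is also one in $\mathrm{VSig}$, which gives the claim.

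The key identity comes from differentiating the fundamental equation \eqref{eq:fundamental_coord} in the diagonal variable, as in the proof of Proposition~\ref{prop:mean_reverting_prop}. Write $v_t^w:=\VSig{\widehat{x}}{K}^{w,t}_{0,t}$. For a word $w'i$, the diagonal exponential kernel gives
\[
\frac{\dd}{\dd t}v^{w'i}_t=-\lambda_i v^{w'i}_t+\alpha_i v^{w'}_t\,\dot{\widehat{x}}^i_t .
\]
Next I express $\int_0^T v^{w'i}_t\,\dd t$ as a coordinate. In case (i), where $\lambda_0=0$, the letter $0$ has constant kernel $\alpha_0$, so $\VSig{\widehat x}{K}^{w'i0,T}_{0,T}=\alpha_0\int_0^T v^{w'i}_t\,\dd t$. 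Integrating the ODE from $0$ to $T$ then gives
\[
\int_0^T v^{w'}_t\,\alpha_i\,\dd\widehat{x}^i_t \;=\; v^{w'i}_T+\frac{\lambda_i}{\alpha_0}\VSig{\widehat x}{K}^{w'i0,T}_{0,T},
\]
so an \emph{undamped} final integration against letter $i$ is a linear combination of two $\mathrm{VSig}$ coordinates. To undo the damping at the inner letters I would run an induction on word length. The induction hypothesis should be stated for the whole paths $t\mapsto\mathrm{Sig}(A\widehat{x})^{w}_{0,t}$: each is a linear combination of paths $t\mapsto v^{u}_t$. The same ODE identity, applied at every intermediate time $t$, then propagates this to length $|w|+1$. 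The key fact is that $\int_0^t v^u_s\,\dd s=\alpha_0^{-1}v^{u0}_t$ holds at all $t$, again because $\lambda_0=0$.

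In case (ii) I would instead use Proposition~\ref{prop:mean reverting 1d}. In this case $\mathbf y_t:=\VSig{\widehat{x}}{K}^t_{0,t}$ solves the tensor ODE $\dd\mathbf y_t=-\lambda(\mathbf y_t-1)\dd t+\mathbf y_t\otimes\alpha\,\dd\widehat{x}_t$. Now the time letter is itself damped, so $\int_0^t \la u,\mathbf y_s\ra\,\dd s$ is no longer exactly a coordinate. My first attempt would be to invert \eqref{eq:exp_dynamical_sig} level by level: set $\mathbf S:=\mathrm{Sig}(\alpha\widehat{x})$ and write $\mathbf S=e^{\lambda t}\,(\cdots)$. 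I would then try to show that appending time letters to $\mathbf y$ produces the Laplace-type averages $\lambda\int e^{-\lambda(t-r)}\mathbf S_{r,t}\,\dd r$ through a triangular relation. If exact finiteness fails, I would fall back on an approximation argument. The factor $e^{\lambda(T-r)}$ expands as $\sum_k\lambda^k(T-r)^k/k!$, and the truncated series is realised by appending time letters. The remainders are controlled uniformly on $\mathscr K$ because $\sup_{x\in\mathscr K}\|\dot x\|_\infty<\infty$ and the coordinates admit the factorial bounds from the proof of Lemma~\ref{lem:ROC_examples}. This yields uniform approximation of each $\mathrm{Sig}$ coordinate, which suffices for \eqref{eq:linear_universality_claim}.

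The main obstacle is the inversion step. I have to show that the closure, in uniform norm on $\mathscr K$, of $\{\la\pi,\VSig{\widehat x}{K^\lambda_\alpha}\ra\}$ contains all classical signature coordinates, or equivalently is closed under products. This is clean in case (i) thanks to the undamped time letter. In case (ii) it requires handling the damping of the time component, and that is where I expect the argument to need the most care.
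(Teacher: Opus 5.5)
Your case (i) is the paper's argument. Both use the mean-reverting identity from Proposition~\ref{prop:mean_reverting_prop} and absorb $\int_0^t v^u_s\,\dd s$ into the coordinate $v^{u0}_t$ via the undamped time letter. Both then induct on word length at the level of paths to get an exact finite $\ell(\pi)$ with $\la\pi,\mathrm{Sig}(\widehat x)_{0,T}\ra=\la\ell(\pi),\VSig{\widehat x}{K^\lambda_\alpha}^{T}_{0,T}\ra$. Your factor $\alpha_0^{-1}$ is right; \eqref{eq:mean-rev-id_prof} tacitly takes $\alpha_0=1$.

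Case (ii) is where you genuinely differ, and your fallback argument is sound.

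\textbf{The paper's route in case (ii).} It does not argue word by word. Via \eqref{eq:exp_dynamical_sig} it writes $\la\pi,\VSig{\widehat x}{K^\lambda_\alpha}^T_{0,T}\ra=e^{-\lambda T}\Psi[\la\pi,\mathrm{Sig}(\widehat{\cdot}\,)_{\cdot,T}\ra](0,x)$. It then inverts the operator $\Psi$ of Lemma~\ref{eq:sig_vsig_integral_transform} by a Neumann series. Finally it invokes a time-reversed universality theorem for $(t,x)\mapsto\la\pi,\mathrm{Sig}(\widehat x)_{t,T}\ra$, uniform over start times on a forward-started path space. This is applied to $g=e^{\lambda T}\Psi^{-1}[f]$ with $f(t,x)=F(x_{t\vee\cdot})$.

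\textbf{Your route in case (ii).} You keep the case (i) induction and replace the missing undamped integral by
$\sum_{k=1}^N\lambda^{k-1}\alpha^{-k}v^{u0^k}_t=\int_0^t e^{-\lambda(t-s)}\sum_{j<N}\frac{(\lambda(t-s))^j}{j!}\,v^u_s\,\dd s$, where $0^k$ denotes $k$ appended time letters. This is uniformly close to $\int_0^t v^u_s\,\dd s$ on $[0,T]\times\mathscr K$. At each inductive step the error is multiplied by at most $|\alpha|\sup_{\mathscr K}\int_0^T|\dot{\widehat x}_s|\,\dd s$, plus the Taylor remainder. So you need nothing beyond classical linear universality at the single time $T$.

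\textbf{What each buys.} Your route is more elementary and handles both cases with one mechanism. It avoids the forward-started universality result, which the paper only sketches, and any need for $f(t,x)=F(x_{t\vee\cdot})$ to be jointly continuous. In exchange, the paper's route shows $\mathrm{VSig}$-linear functionals as the image of $\mathrm{Sig}$-linear functionals under an explicit invertible integral transform.

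You can in fact drop the approximation, because exact finiteness holds in case (ii) if you \emph{prepend} the time letter instead of appending it. Integrating out the first variable in $\VSig{\widehat x}{K^\lambda_\alpha}^{0w,T}_{0,T}$ produces $\tfrac{\alpha}{\lambda}(1-e^{-\lambda r_1})$. The exponential weights then telescope, $e^{-\lambda r_1}\prod_{l=1}^{n}e^{-\lambda(r_{l+1}-r_l)}=e^{-\lambda T}$, which gives
\[
\mathrm{Sig}(\alpha\widehat x)^{w}_{0,T}=e^{\lambda T}\Big(\VSig{\widehat x}{K^\lambda_\alpha}^{w,T}_{0,T}-\tfrac{\lambda}{\alpha}\,\VSig{\widehat x}{K^\lambda_\alpha}^{0w,T}_{0,T}\Big)\qquad\text{for every word } w.
\]
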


For the case (ii) above, the proof relies on the following technical lemma, showing that we can revert the transformation from the classical signature to the Volterra signature in \Cref{prop:mean reverting 1d}.

\begin{lem}\label{eq:sig_vsig_integral_transform}
Let $X$ be a compact topological space and set $\mathscr{C}:= \mathcal{C}^{0}([0,T] \times X; \RR)$ equipped with the supremum norm $\Vert\cdot\Vert_\infty$.
Further, let $\lambda \in \RR$ and $\alpha> 0$ and define, for any $g\in \mathscr{C}$,
\begin{equation*}
\Psi[g](t,x)
:= g(t,x)
+ \lambda \int_t^T e^{\lambda r} g(r,x)\,dr,
\qquad t\in[0,T],\ x\in X.
\end{equation*}
Then $\Psi$ is an isomorphism on the topological vector space $\mathscr{C}$.
\end{lem}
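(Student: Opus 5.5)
The map $\Psi$ is a Volterra-type operator of the second kind with a kernel that acts only in the time variable. The plan is to show that $\Psi$ is a bounded linear map on $\mathscr{C}$ and to construct a bounded inverse explicitly by solving a linear ODE in $t$, with $x$ treated as a parameter. By the bounded inverse theorem, or by the explicit formula directly, this yields a topological isomorphism.

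\emph{Step 1 (well-definedness and boundedness).} For $g\in\mathscr{C}$, uniform continuity of $g$ on the compact set $[0,T]\times X$ shows that $(t,x)\mapsto\int_t^T e^{\lambda r}g(r,x)\,dr$ is jointly continuous. It is continuous in $t$ because it is an integral with a variable endpoint. It is continuous in $x$ uniformly in $t$ because $|g(r,x)-g(r,x')|$ is small uniformly in $r$. Hence $\Psi[g]\in\mathscr{C}$. Moreover $\|\Psi[g]\|_\infty\le (1+|\lambda| T e^{|\lambda| T})\|g\|_\infty$, so $\Psi$ is linear and continuous.

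\emph{Step 2 (inversion).} Given $h\in\mathscr{C}$, we look for $g$ with $\Psi[g]=h$. Set $G(t,x):=\int_t^T e^{\lambda r}g(r,x)\,dr$, so that $\partial_t G=-e^{\lambda t}g$ and $G(T,x)=0$. The equation $g+\lambda G=h$ becomes
\[
-\partial_t G = e^{\lambda t}h - \lambda e^{\lambda t} G,
\qquad G(T,x)=0.
\]
This is a scalar linear ODE in $t$ for each fixed $x$. With the integrating factor $\phi(t)=\exp(-e^{\lambda t})$ (for $\lambda\neq0$; the case $\lambda=0$ is trivial since then $\Psi=\mathrm{Id}$), we get $\frac{d}{dt}\big(\phi G\big)=-\phi e^{\lambda t}h$. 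Therefore
\[
G(t,x)=\phi(t)^{-1}\int_t^T \phi(r)e^{\lambda r}h(r,x)\,dr,
\]
and we define $g:=h-\lambda G$. Step 1 shows again that $g\in\mathscr{C}$, with $\|g\|_\infty\le C\|h\|_\infty$ and $C$ depending only on $\lambda$ and $T$.

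We then check both directions. Substituting back shows $\Psi[g]=h$, so $\Psi$ is surjective. For injectivity, note that if $\Psi[g]=0$, then $G$ solves the homogeneous ODE with $G(T,\cdot)=0$. Hence $G\equiv0$, and then $g=-\lambda G=0$. The inverse map $h\mapsto g$ is linear and bounded by the above estimate, so $\Psi$ is an isomorphism of the Banach space $(\mathscr{C},\|\cdot\|_\infty)$. Alternatively, one can invoke the open mapping theorem once bijectivity is established.

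The main obstacle is the inversion step, specifically getting the ODE reformulation right (sign conventions and the integrating factor) and checking that the resulting formula preserves joint continuity in $(t,x)$. If the integrating-factor route proves awkward, a fallback is a Neumann series: write $\Psi=\mathrm{Id}+\lambda V$, where $V[g](t)=\int_t^T e^{\lambda r}g(r)\,dr$ is a Volterra operator. Its iterates satisfy $\|V^n\|\le (e^{|\lambda|T}T)^n/n!$, so $\sum_n(-\lambda V)^n$ converges in operator norm and gives a bounded inverse directly.
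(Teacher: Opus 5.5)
Your proof is correct, but your main argument takes a different route from the paper's. The paper writes $\Psi=\mathrm{Id}+\Phi$ and proves the iterate bound $\Vert\Phi^{\circ n}[g]\Vert_\infty\le \frac{1}{n!}|\lambda|^n e^{n|\lambda|T}T^n\Vert g\Vert_\infty$. From this single estimate it reads off both the continuity of $\Psi$ and the convergence of $\Psi^{-1}=\sum_{n\ge0}(-1)^n\Phi^{\circ n}$; this is exactly your fallback.

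Your main argument instead uses the special form of the kernel: the weight $e^{\lambda r}$ depends only on the integration variable, so $\Psi[g]=h$ reduces to a first-order linear ODE in $t$. Your sign conventions and the integrating factor $\exp(-e^{\lambda t})$ are right, and you obtain the closed form
\[
\Psi^{-1}[h](t,x)=h(t,x)-\lambda\int_t^T e^{\lambda r}\exp\bigl(e^{\lambda t}-e^{\lambda r}\bigr)\,h(r,x)\,dr .
\]
This is precisely the sum of the paper's Neumann series. The $n$-th iterated kernel of $\Phi$ is $\lambda e^{\lambda r}(e^{\lambda r}-e^{\lambda t})^{n-1}/(n-1)!$ on $\{r>t\}$, and the alternating sum of these kernels gives the resolvent kernel above.

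Your route therefore gives an explicit inverse and explicit operator-norm bounds, but it depends on this reducibility to an ODE. The paper's route is shorter and applies verbatim to any Volterra operator with bounded kernel.

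You also verify that $\Psi$ maps $\mathscr{C}$ into itself, which the paper takes for granted. Since $X$ is only assumed to be a compact topological space, justify that step by compactness of $[0,T]$ rather than by ``uniform continuity'': for each $x_0$ and $\varepsilon>0$ there is a neighbourhood $U$ of $x_0$ with $\sup_{r\in[0,T]}|g(r,x)-g(r,x_0)|<\varepsilon$ for all $x\in U$.
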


\begin{proof}
It is evident that $\Psi$ is linear and decomposes as $\Psi = \mathrm{Id} + \Phi$.
One readily estimates
$
\Vert\Phi^{\circ n}[g] \Vert_{\infty} \le \frac{1}{n!}|\lambda|^n e^{n|\lambda|T}T^n \Vert g\Vert_{\infty},
$
implying in particular continuity of $\Phi$ and thus of $\Psi$.
Further, this shows that the Neumann series
$
\Psi^{-1} = \sum_{n=0}^\infty (-1)^n\Phi^{\circ n}
$
converges, thus proving that $\Psi$ is an isomorphism.
\end{proof}

Now we are ready to prove the theorem for linear approximation via the Volterra signature with exponential kernels.

\begin{proof}[Proof of \Cref{thm:lin_univ_exp}]
Let us first assume $(i)$ holds true. We first claim that for any element $\pi \in T(\RR^{d+1})$, there exists a $\ell=\ell(\pi)\in T(\RR^{d+1})$, such that \begin{equation}\label{eq:claim_lin_un}
    \langle \pi, \mathrm{Sig}(\widehat{x})_{0,T}\rangle = \langle \ell(\pi), \VSig{\widehat{x}}{K_\alpha^\lambda}_{0,T}^T \rangle.
\end{equation} Assuming the claim \eqref{eq:claim_lin_un} is true, by universality of the classical signature from time-augmented paths (e.g. \cite[Theorem 5]{litterer2014chen}), we can find $\pi^{\varepsilon}\in T(\RR^{d+1})$ such that \[
\sup_{x\in \cK}|F(x)-\langle \pi^\varepsilon,\mathrm{Sig}(\widehat{x})_{0,T}\rangle | \leq  \varepsilon,
\] and thus in particular $ \sup_{x\in \cK}|F(x)-\langle \ell(\pi^\varepsilon),\VSig{\widehat{x}}{K_\alpha^\lambda}\rangle | \leq \varepsilon.$ To show the claim~\eqref{eq:claim_lin_un}, an application of \Cref{prop:mean_reverting_prop}, for the choices $b_r= e_r, A_r= \alpha_r e_r^\top e_r$ with $0\leq r \leq d$ and matrix $\Lambda = \mathrm{diag}(\lambda_0,\dots,\lambda_d)$, shows that for any $i_1,\dots,i_n \in \{0,\dots,d\}$ 
\begin{multline}\label{eq:mean-rev-id_prof}
    \VSig{\widehat{x}}{K_\alpha^\lambda}_{0,T}^{i_1\cdots i_n,T}  = 
    -\lambda_{i_n} \int_0^T\VSig{\widehat{x}}{K_\alpha^\lambda}_{0,s}^{i_1\cdots i_n,s} \dd s
    + \alpha_{i_n}\int_0^T\VSig{\widehat{x}}{K_\alpha^\lambda}_{0,s}^{i_1\cdots i_{n-1},s} \dd \widehat{x}^{i_n}_s  \\ 
    = -\lambda_{i_n}\VSig{\widehat{x}}{K_\alpha^\lambda}_{0,T}^{i_1\cdots i_n 0,T}+ \alpha_{i_n}\int_0^T\VSig{\widehat{x}}{K_\alpha^\lambda}_{0,s}^{i_1\cdots i_{n-1},s} \dd \widehat{x}^{i_n}_s,
\end{multline} 
where the second equality follows  from the assumption $\lambda_0=0$ and $\widehat{x}^0_t=t$. The claim now easily follows by induction over elements in $T^{n}(\RR^{d+1})$ (see \cref{def:proj_and_trunc}), together with~\eqref{eq:mean-rev-id_prof}. Indeed, for $n=1$ and single letter $\pi=i \in \{0,\dots,d\}$, we have 
\begin{eqnarray}
\langle i, \mathrm{Sig}(\widehat{x})_{0,T}\rangle=\widehat{x}_T^i-\widehat{x}_0^{i} 
&=& \frac{1}{\alpha_{i}}\Big(\VSig{\widehat{x}}{K_\alpha^\lambda}_{0,T}^{i,T}
+\lambda_{i}\VSig{\widehat{x}}{K_\alpha^\lambda}_{0,T}^{i 0,T}\Big ) \nonumber  \\
&=& \langle \ell(i),\VSig{\widehat{x}}{K_\alpha^\lambda}_{0,T}^{,T}\rangle,\label{eq:eq_d}
\end{eqnarray}
for $\ell(i)=\frac{1}{\alpha_{i}}i(\varnothing+\lambda_{i}0)$. Since we can extend \eqref{eq:eq_d} linearly to any linear combination of letters, the claim~\eqref{eq:claim_lin_un} holds for $\pi \in T^{1}(\RR^{d+1})$. Assuming the claim holds on level $n$, for $\pi=i_1\cdots i_{n+1}$ we have \[
\langle \pi,\mathrm{Sig}(\widehat{x})_{0,T}\rangle = \int_0^T \langle i_1\cdots i_{n},\mathrm{Sig}(\widehat{x})_{0,s}\rangle \dd \widehat{x}^{i_{n+1}}_s = \int_0^T \langle \ell(i_1\cdots i_n),\VSig{\widehat{x}}{K_\alpha^\lambda}_{0,s}^{s}\rangle \dd \widehat{x}^{i_{n+1}}_s,
\] where we used the definition of the signature and the induction hypothesis. An application of \eqref{eq:mean-rev-id_prof} then shows
\begin{multline*}
\int_0^T \langle \ell(i_1\cdots i_n),\VSig{\widehat{x}}{K_\alpha^\lambda}_{0,s}^{s}\rangle \dd \widehat{x}^{i_{n+1}}_s \\
=  
\Big \langle \frac{1}{\alpha_{i_{n+1}}}\ell(i_1\cdots i_n)i_{n+1}(\varnothing+\lambda_{i_{n+1}}0),\VSig{\widehat{x}}{K_\alpha^\lambda}_{0,T}^{T}\Big \rangle, 
\end{multline*}
and since we can extend linearly to $\pi \in T^{n+1}(\RR^{d+1})$, the claim \eqref{eq:claim_lin_un} follows by induction.

Now suppose $(ii)$ holds, and let $\Psi$ be the operator defined as in \cref{eq:sig_vsig_integral_transform} with $X = \mathscr{K}$.
Let $F: \mathcal{C}^{0,1}([0,T]; \RR^d) \to \RR$ be continuous and let $\varepsilon>0$ be arbitrarily fixed.
Define the function $f:[0,T]\times \mathcal{C}^{0,1}([0,T]; \RR^d)$ by $f(t,x) := F(x_{t\vee \cdot})$, where $x_{t\vee r} := 1_{[0,t)}(r)x_t + 1_{[t,T]}(r)x_r \in \mathcal{C}^{0,1}([0,T]; \RR^d)$.
We first apply the inverse operator of $\Psi$ from \Cref{eq:sig_vsig_integral_transform} to $f\in \mathscr{C}$ to obtain the transformed function $g := e^{\lambda T}\Psi^{-1}[f] \in \mathscr{C}$.
Then, by a time reversed formulation of the universal approximation theorem for signatures on the stopped paths space %
there exists $\pi \in T(\RR^{d+1})$ such that
\[
\Vert g - \langle \pi, \Sig{\widehat{x}}_{\cdot,T} \rangle \Vert_{\infty}
= \sup_{x\in \mathscr{K}}\sup_{t\in[0,T]} \bigl|g(t,x) - \langle \pi, \Sig{\widehat{x}}_{t, T}\rangle\bigr|
< \Vert\Psi\Vert^{-1}e^{\lambda T}\varepsilon,
\]
where $\Vert\Psi\Vert$ denotes the operator norm of $\Psi$ on $(\mathscr{C},\Vert \cdot\Vert_{\infty})$.
Indeed, this follows by an entirely analogous argument to the stopped path space situation (e.g. in \cite[Proposition 3.3]{bayer2025control}) where one verifies that the maps $x|_{[t,T]}\mapsto \langle \pi, \Sig{\widehat{x}}_{t,T}\rangle$, $\pi\in T(\RR^{d+1})$, form a point-separating unital subalgebra of the continuous functions on the forward-started path space $\bigcup_{t\in[0,T]}\cC^{0,1}([t,T];\RR^d)$ equipped with the distance $$d(x|_{[t,T]},y|_{[s,T]}) := |t-s| + \Vert x_{\cdot\vee t}-y_{\cdot\vee s}\Vert_{\cC^{0,1}}.$$
Next note that by \Cref{prop:mean reverting 1d} and the definition of $\Psi$ in \Cref{eq:sig_vsig_integral_transform} it holds
\[
\langle \pi, \VSig{\widehat{x}}{K_{\alpha,\lambda}}_{0,T} \rangle
:= e^{-\lambda T}\Psi[\langle \pi, \mathrm{Sig}(\,\widehat{\cdot}\,)_{\cdot,T} \rangle](0,x), \qquad  x \in \mathscr{K}.
\]
Finally, we can estimate
\begin{multline*}
    \sup_{x\in \mathscr{K}}\bigl|F(x) - \langle \pi, \VSig{\widehat{x}}{K_{\alpha,\lambda}}_{0,T} \rangle\bigr|
    = \sup_{x\in \mathscr{K}}\bigl|f(0, x) - \langle \pi, \VSig{\widehat{x}}{K_{\alpha,\lambda}}_{0,T} \rangle\bigr| \\
    \le \sup_{x\in \mathscr{K}} e^{-\lambda T}\bigl|\Psi[g - \langle \pi, \Sig{\,\widehat{\cdot}\,}_{\cdot,T} \rangle](0, x)\bigr| 
    \le e^{-\lambda T}\Vert\Psi\Vert \,\Vert g - \langle \pi, \Sig{\,\widehat{\cdot}\,}_{\cdot,T} \rangle \Vert_{\infty} 
    \le \varepsilon.
\end{multline*}
This concludes the proof of \eqref{eq:linear_universality_claim} in case (ii).
\end{proof}

\subsection{Kernel trick}\label{sec:kernel-trick}

Kernel methods provide a flexible framework for learning with structured data by specifying similarity through a positive definite kernel, equivalently an inner product in an associated reproducing kernel Hilbert space (RKHS) \cite{aronszajn1950theory,scholkopf2002learning}. This viewpoint underpins classical algorithms such as support vector machines and Gaussian processes
and allows nonlinear learning problems to be treated with linear methods in feature space.
In the context of time series, the \emph{signature kernel} \cite{Kiraly2019, Salvi2021} can be viewed as a canonical choice when one seeks a positive definite kernel that simultaneously (i) captures sequential order, (ii) behaves well under high-frequency sampling, and (iii) is invariant under monotone reparametrizations (see \cite[Section~5]{Kiraly2019} for a detailed comparison with other popular kernels for sequential data \cite{chan2005probabilistic,berndt1994using,cuturi2011autoregressive,moreno2003kullback}).
However, the (standard) signature kernel does not by itself enforce a specific memory structure, such as a recency bias. Since the Volterra signature $\VSig{x}{K}$ already incorporates the memory profile $K$ at the feature level, we transfer this structure to kernel methods by defining the \emph{Volterra signature kernel} as induced by the inner product. In this way, the similarity between two paths is modulated by $K$, yielding a positive definite kernel that inherits the temporal weighting of events.

\begin{rem}
    We are now facing the embarrassing---but highly intriguing---situation of having two kernels present on the same stage: a temporal (or \emph{memory}) kernel
    $K:\Delta^2 \to \mathcal{L}(\RR^d,\RR^m)$, and an induced kernel on the (Volterra-)path space.
    To keep these two notions from stepping on each other's toes, we denote the latter by $\kappa$.
    Things get even more intriguing if we further---as suggested in the classical signature setting in \cite{Kiraly2016}---lift the Volterra path into yet another RKHS \emph{feature space}. %, say
    %$\hat{z}^\tau_t = \int_0^t \phi(\dot{z}^\tau_s)\,\dd s$
    %for some continuous injective map $\phi:\RR^m \to \cH$ into an RKHS $\cH$ (e.g.\ the feature space associated with an RBF kernel).
    We will find some resolution of this kernel-within-kernel situation in \Cref{rem:static_kernel_lift} below.
\end{rem}
Recall that the classical signature kernel is defined as the inner product of signatures,
\[
\big\langle \mathrm{Sig}(x), \mathrm{Sig}(y) \big\rangle_{\cThilb},
\qquad x,y \in \cC^{0,1}([0,T];\RR^d),
\]
where the inner product on $\cThilb$ was introduced in \cref{def:inner_pro}. We extend this construction to Volterra features by restricting to suitable subspaces of Volterra paths, denoted $\Vone$ (see \cref{def:volterra_path}), on which the Volterra signature and the induced inner product are well-defined.

\begin{defn}\label{def:Volterra_paths_in_Lp}
    For any $1\leq p \leq \infty$ we denote by $\cV^{1,p}([0,T];\RR^m)$ the space consisting of Volterra paths \[
    z_{t}^\tau= \int_0^tK(\tau,s)\dd x_s \quad \text{with} \quad x \in \cC^{0,1}([0,T];\RR^d) \, \text{ and } \, K \in L^{\infty,p}([0,T];\cL(\RR^d;\RR^m)),
    \] where the space $L^{\infty,p}$ was defined in \eqref{eq:Lp_kernel}. Moreover, for any $z \in \cV^{1,p}$ we extend the norm \eqref{eq:norm} by setting \begin{equation}\label{eq:norm_Vp}
        \Vert z \Vert_{1, p}:= \sup_{t \in [0,T]}\left (\int_0^t\Big |\frac{\dd z_u^t}{\dd u}\Big|^p \dd u \right )^{1/p}.
    \end{equation}
\end{defn}

Note that $\cV^{1,1}= \cV^1$ thanks to \cref{lem:equivalence_def_VP}, and clearly for any $1 \leq p \leq q \leq \infty$ we have the inclusion $\cV^{1,q} \subseteq \cV^{1,p}$. We can now define the Volterra signature kernel.
\begin{defn}\label{def:volterra-kernel}
Let $p>1$ and consider two Volterra paths $z, y \in \cV^{1,p}([0,T];\RR^m)$ with full lifts denoted by $\bz$ and $\by$, see   \cref{def:VP_def_z}. The \emph{Volterra signature kernel} is defined as the inner product
\begin{equation}\label{eq:sig-kernel general}
\kappa(z,y)_{s,t}=\la \bz^s_{0,s}, \by^t_{0,t} \rangle_{\cThilb}, \quad (s,t) \in [0,T]^2.
\end{equation}
\end{defn}
In view of \cref{rem:usual_sig}, for two paths $x,w \in \cC^{0,1}([0,T];\RR^d)$ and  $z_t^\tau = \int_0^t\dd x_s$ and $y_t^\tau= \int_0^t \dd w_s$, the definition  \eqref{eq:sig-kernel general} coincides with the classical signature kernel of $x$ and $w$ seen above. The following lemma shows that the generalization to $\cV^{1,p}$ in \eqref{eq:sig-kernel general} is well-defined.

\begin{lem}\label{lem:well-defined-kernel}
    For any $p>1$ and $z, y \in \cV^{1,p}([0,T];\RR^m)$, the Volterra signature kernel~\eqref{eq:sig-kernel general} is well-defined. Specifically, we have \[
    \sup_{(s,t)\in [0,T]^2} |\kappa(z,y)_{s,t}|< \infty.
    \] 
\end{lem}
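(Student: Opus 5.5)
By Cauchy--Schwarz in $\cThilb$ (levelwise, then summed), we have $|\kappa(z,y)_{s,t}| \le \Vert \bz^s_{0,s}\Vert_{\cThilb}\,\Vert \by^t_{0,t}\Vert_{\cThilb}$. It therefore suffices to show that $\sup_{s\in[0,T]}\Vert \bz^s_{0,s}\Vert_{\cThilb}<\infty$ for any $z\in\cV^{1,p}$ with $p>1$, and similarly for $\by$. The plan is to bound each tensor level in Hilbert--Schmidt norm by a summable sequence, uniformly in $s$. This reproduces the argument of \Cref{lem:ROC_examples}, but phrased intrinsically in terms of $\Vert z\Vert_{1,p}$ from \eqref{eq:norm_Vp}.

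First, I would write $z^\tau_t=\int_0^t K(\tau,u)\dd x_u$ with $K\in L^{\infty,p}$ and $x\in\cC^{0,1}$. Alternatively, I can work directly with $\dot z^\tau_u:=\frac{\dd}{\dd u}z^\tau_u$, which satisfies $\sup_\tau\int_0^\tau|\dot z^\tau_u|^p\dd u=\Vert z\Vert_{1,p}^p<\infty$. By \eqref{eq:iterated Volterra path_coordinate}, the level-$n$ component is
\[
\bz^{(n),\tau}_{0,t}=\int_{\Delta^n_{0,t}} \dot z^{r_2}_{r_1}\otimes\cdots\otimes\dot z^{\tau}_{r_n}\,\dd r_1\cdots\dd r_n .
\]
Since the Hilbert--Schmidt norm of an elementary tensor is the product of the Euclidean norms, I get $\Vert\bz^{(n),\tau}_{0,t}\Vert\le \int_{\Delta^n_{0,t}}\prod_{l=1}^n k(r_{l+1},r_l)\dd r$, where $k(\sigma,u):=|\dot z^\sigma_u|$ and $r_{n+1}=\tau$.

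Second, I would apply Hölder with exponents $p$ and $q=p/(p-1)$ on the simplex. This gives
\[
\int_{\Delta^n_{0,t}}\prod_l k \le \Big(\int_{\Delta^n_{0,t}}\prod_l k^p\Big)^{1/p}\mathrm{Vol}(\Delta^n_{0,t})^{1/q}.
\]
For the first factor I integrate iteratively in $r_1$, then $r_2$, and so on. At each step the innermost integral is $\int_0^{r_{l+1}}k(r_{l+1},r_l)^p\dd r_l\le\Vert z\Vert_{1,p}^p$, so the first factor is at most $\Vert z\Vert_{1,p}^{n}$. Using $\mathrm{Vol}(\Delta^n_{0,t})=t^n/n!$ then yields
\[
\Vert\bz^{(n),\tau}_{0,t}\Vert\le \Vert z\Vert_{1,p}^n\,T^{n/q}\,(n!)^{-1/q},
\]
uniformly in $(t,\tau)\in\Delta^2$.

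Third, squaring and summing over $n$ gives $\Vert\bz^s_{0,s}\Vert^2_{\cThilb}\le\sum_n a^{2n}(n!)^{-2/q}<\infty$ with $a=\Vert z\Vert_{1,p}T^{1/q}$. The series converges by the ratio test, since $q<\infty$ exactly because $p>1$. This bound is independent of $s$, and combining it with the Cauchy--Schwarz step concludes the proof.

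The only delicate point is the iterated-integration step: measurability and the order of integration need care because $z$ is only defined almost everywhere. Tonelli handles both, since all integrands are nonnegative. The condition $p>1$ is essential, as it is what makes the factorial gain $(n!)^{-1/q}$ nontrivial.
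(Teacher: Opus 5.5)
Your proposal is correct and follows essentially the paper's route: the paper's proof is Cauchy--Schwarz in $\cThilb$ combined with the levelwise factorial bound of \Cref{lem:ROC_examples}. You simply re-derive that bound intrinsically in terms of $\Vert z\Vert_{1,p}$, which is finite since $|\dot z^\tau_u|\le |K(\tau,u)|\,|\dot x_u|$, rather than in terms of $\Vert K\Vert_{L^{\infty,p}}\Vert \dot x\Vert_\infty$.
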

\begin{proof}
    This is a direct consequence of the Cauchy-Schwarz inequality for the inner product in \Cref{def:inner_pro} and the infinite radius of convergence on $\cV^{1,p}$ of the Volterra signature, see  \Cref{lem:ROC_examples}.
\end{proof}

It was shown in~\cite{Salvi2021} that the classical signature kernel can be characterized as the unique solution to a hyperbolic Goursat PDE, enabling its computation without explicit truncation. We now extend this kernel trick to the Volterra setting.
\begin{thm}\label{thm:volterra-kernel-equation}
Let $p>1$ and consider two Volterra paths $z, y \in \cV^{1,p}([0,T];\RR^m)$. The Volterra signature kernel $\kappa_{s,t}:=\kappa(z,y)_{s,t}$ in \eqref{eq:sig-kernel general} uniquely solves
\begin{equation}\label{eq:volterra-kernel-equation}
\kappa_{s,t}= 1+ \int_{0}^{s}\int_{0}^{t}\kappa_{u,v}\langle \dot z_u^t,\dot{y}_v^s\rangle\dd u\dd v.
\end{equation}
\end{thm}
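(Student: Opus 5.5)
Our approach is to expand $\kappa_{s,t}$ level by level, using the fundamental equation of Proposition~\ref{prop:fundamental_equation} in its ``last-letter'' form \eqref{eq:fundamental_coord}, or equivalently \eqref{eq:fubini_proof}:
\[
\bz^{(n+1),s}_{0,s}=\int_0^s \bz^{(n),u}_{0,u}\otimes \dot z^s_u\,\dd u ,\qquad \by^{(n+1),t}_{0,t}=\int_0^t \by^{(n),v}_{0,v}\otimes \dot y^t_v\,\dd v ,
\]
where $\dot z^s_u=\frac{\dd}{\dd u}z^s_u$. The Hilbert--Schmidt inner product of Definition~\ref{def:inner_pro} factorises on elementary tensors, so $\langle a\otimes b,c\otimes e\rangle=\langle a,c\rangle\langle b,e\rangle$ for $a,c\in(\RR^m)^{\otimes n}$ and $b,e\in\RR^m$. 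Setting $\kappa^{(n)}_{s,t}:=\langle \bz^{(n),s}_{0,s},\by^{(n),t}_{0,t}\rangle$, Fubini then gives the recursion
\[
\kappa^{(n+1)}_{s,t}=\int_0^s\!\!\int_0^t \kappa^{(n)}_{u,v}\,\langle \dot z^s_u,\dot y^t_v\rangle\,\dd v\,\dd u,\qquad \kappa^{(0)}\equiv 1 .
\]
Fubini is justified because $\dot z^s_\cdot\in L^p$ and $\dot y^t_\cdot\in L^p$ uniformly, and $\kappa^{(n)}$ is bounded by the estimates in Lemma~\ref{lem:ROC_examples}.

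We note that the pairing in \eqref{eq:volterra-kernel-equation} as printed reads $\langle\dot z^t_u,\dot y^s_v\rangle$. The recursion above instead yields $\langle \dot z^s_u,\dot y^t_v\rangle$, with superscripts matching $\kappa_{s,t}=\langle\bz^s_{0,s},\by^t_{0,t}\rangle$. I would prove the statement with the derivative of $z$ evaluated at the endpoint $s$ of its own integration variable, and treat the printed indices as a typo or convention. If one insists on the printed form, the same argument goes through after relabelling which kernel endpoint is attached to which signature.

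To obtain existence, we sum the recursion over $n$. The series $\sum_n\kappa^{(n)}_{s,t}$ converges absolutely and uniformly on $[0,T]^2$. By Cauchy--Schwarz and the proof of Lemma~\ref{lem:ROC_examples}, $|\kappa^{(n)}_{s,t}|\le \|\bz^{(n)}\|\,\|\by^{(n)}\|\lesssim C^{2n}/(n!)^{2/q}$. Summation can therefore be interchanged with the double integral by dominated convergence, using $|\langle\dot z^s_u,\dot y^t_v\rangle|\le|\dot z^s_u||\dot y^t_v|$ and Hölder with $\sup_s\|\dot z^s\|_{L^p}<\infty$. This yields \eqref{eq:volterra-kernel-equation} for $\kappa=\sum_n\kappa^{(n)}$.

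Uniqueness is the step needing care, since the kernel $k(s,t;u,v)=\langle \dot z^s_u,\dot y^t_v\rangle$ depends on the outer variables and is only $L^p$, not bounded. I would take the difference $\delta$ of two bounded solutions, so that $\delta_{s,t}=\int_0^s\int_0^t\delta_{u,v}k(s,t;u,v)\,\dd v\,\dd u$, and iterate this $n$ times. The iterated estimate is $|\delta_{s,t}|\le \|\delta\|_\infty \int_{\Delta^n_{0,s}}\prod_l|\dot z^{r_{l+1}}_{r_l}|\,\dd r \cdot\int_{\Delta^n_{0,t}}\prod_l|\dot y^{\rho_{l+1}}_{\rho_l}|\,\dd\rho$. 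This is exactly the same simplex integral bounded in Lemma~\ref{lem:ROC_examples} (with scalar kernel $|\dot z|$), giving $\le\|\delta\|_\infty C^{2n}(T^n/n!)^{2/q}\to0$. Hence $\delta\equiv0$, which is a two-parameter Volterra--Gronwall argument. The main obstacle is verifying this iterated bound, in particular keeping track that after iteration the outer indices chain correctly, $r_{n+1}=s$ and $\rho_{n+1}=t$, which is what makes the product structure factorise into two separate simplex integrals.
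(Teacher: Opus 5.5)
Your proof is correct. You are right about the indices: the paper's own proof derives the pairing $\langle \dot z^s_u,\dot y^t_v\rangle$ and uses it again for uniqueness, so the printed $\langle \dot z^t_u,\dot y^s_v\rangle$ is a typo. Your aside that the printed form would follow ``after relabelling'' is not accurate, since as printed the identity does not close up into an equation for $\kappa$ as defined in \eqref{eq:sig-kernel general}. Nothing in your argument depends on it, however.

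Your existence step is the paper's: the fundamental equation \eqref{eq:vsig_fundamental_z}, orthogonality of the unit, the factorisation $\langle A\otimes a,B\otimes b\rangle=\langle A,B\rangle\langle a,b\rangle$, and Fubini. Running it level by level and then summing makes explicit the interchange of the infinite series with the double integral, which the paper performs without comment.

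For uniqueness the routes differ.
\begin{itemize}
\item The paper applies H\"older once (in $v$, then in $u$) to absorb the dependence of $\langle\dot z^s_u,\dot y^t_v\rangle$ on the outer variables into $\|z\|_{1,p}\|y\|_{1,p}$. This leaves the constant-coefficient inequality $|\delta_{s,t}|^{p'}\le C\int_0^s\int_0^t|\delta_{u,v}|^{p'}\,\dd v\,\dd u$, which iterates to the elementary factor $(st)^n/(n!)^2$.
\item You iterate the equation itself, so you must carry the chained kernels $\dot z^{r_{l+1}}_{r_l}$ and $\dot y^{\rho_{l+1}}_{\rho_l}$. The inequality $|\langle a,b\rangle|\le |a||b|$ then splits the bound into the two simplex integrals already controlled in the proof of \Cref{lem:ROC_examples}.
\end{itemize}
The paper's trick is shorter and avoids the bookkeeping of iterated kernels. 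Yours reuses an existing estimate and makes the parallel with the signature bounds transparent.

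One point is in your favour. You prove uniqueness among bounded solutions, while the paper states it for continuous ones, and $\kappa$ itself need not be continuous for $K\in L^{\infty,p}$. For example, take $d=m=1$, $x_t=t$, $y=z$ and $K(t,u)=1_{\{t>1/2\}}$. Then $\kappa_{s,t}=1$ for $s\le 1/2$, whereas $\kappa_{s,t}\ge 1+st$ for $s,t>1/2$. The paper's H\"older argument works verbatim for bounded solutions too, so this is a matter of stating the right class rather than a defect of either method.
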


\begin{proof} \emph{Existence:} Recall that from the fundamental linear equation for the Volterra signature (see  \eqref{eq:vsig_fundamental_z}), we know
\[
\bz^s_{0,s} \;=\; 1 + \int_0^s \bz^u_{0,u} \otimes  \dd z^s_u, \quad \by^t_{0,t} = 1 + \int_0^t \by^v_{0,v} \otimes \dd y^t_v.
\]
From \Cref{lem:well-defined-kernel}, we also have that the inner product $\langle \bz^\tau_{0,s}, \by^\rho_{0,t} \rangle$ is well-defined. Moreover, using the bilinearity of the inner product and the fact that the unit tensor $1 \in T((\RR^m))$ is orthogonal to all tensors of level $n \ge 1$, we find
\begin{equation}\label{eq:proof-kernel-step1}
\langle \bz^s_{0,s}, \by^t_{0,t} \rangle = 1+ \left\langle \int_0^s \bz^u_{0,u} \otimes \dot{z}_u^s\dd u, \int_0^t \by^v_{0,v} \otimes \dot{y}_v^t\dd v \right\rangle.
\end{equation}
Next we use the property that the inner product commutes with the integral and satisfies the factorization property $\langle A \otimes a, B \otimes b \rangle = \langle A, B \rangle \langle a, b \rangle$ for tensors $A,B$ and vectors $a,b$ (see \Cref{def:inner_pro}). Specifically, we find 
\begin{multline*}
    \langle \bz^s_{0,s}, \bw^t_{0,t} \rangle  = 1+ \left\langle \int_0^s \bz^u_{0,u} \otimes \dot{z}_u^s  \dd u,  \int_0^t \by^v_{0,v} \otimes \dot{y}_v^t\dd v  \right\rangle \\ 
     = 1 + \int_0^t\int_0^s\big \langle \bz^u_{0,u} \otimes \dot{z}_u^s, \by^v_{0,v} \otimes \dot{y}_v^t \big \rangle \dd u \dd v  
     = 1 + \int_0^t\int_0^s\big \langle \bz^u_{0,u},\by^v_{0,v} \big \rangle \big \langle  \dot{z}_u^s,  \dot{y}_v^t \rangle\dd u \dd v.
\end{multline*}
By definition, $\kappa_{u,v}=\langle \bz^u_{0,u}, \by^v_{0,v} \rangle$, and substituting this back into \eqref{eq:proof-kernel-step1} yields the claimed integral equation.

\noindent \emph{Uniqueness:}
Suppose now that there are two continuous solutions to \eqref{eq:volterra-kernel-equation}, and denote by  $\delta:[0,T]^2\to\RR$ their difference. By linearity, $\delta$ satisfies the homogeneous equation
\[
\delta_{s,t}
=
\int_{0}^{s}\int_{0}^{t}  \delta_{u,v}\,\langle \dot z^{\,s}_u, \dot y^{\,t}_v \rangle \,\dd v\,\dd u ,
\qquad (s,t)\in[0,T]^2.
\]
Similarly as in \Cref{lem:ROC_examples}, we apply Hölder's inequality with respect to $p'=\frac{p}{p-1}$ (first in $v$, then in $u$) to obtain
\begin{align*}
|\delta_{s,t}|
&\le \int_{0}^{s}\int_{0}^{t} |\delta_{u,v}| |\dot z^{s}_u| |\dot y^{t}_v| \dd v \dd u \\
&\le \int_{0}^{s} |\dot z^{s}_u|
     \left(\int_{0}^{t}|\delta_{u,v}|^{p'} \dd v\right)^{1/p'}
     \left(\int_{0}^{t}|\dot y^{t}_v|^{p} \dd v\right)^{1/p}
     \dd u \\
&\le \left(\int_{0}^{s}|\dot z^{s}_u|^{p} \dd u\right)^{1/p}
     \left(\int_{0}^{t}|\dot y^{t}_v|^{p} \dd v\right)^{1/p}
     \left(\int_{0}^{s}\int_{0}^{t}|\delta_{u,v}|^{p'} \dd v \dd u\right)^{1/p'} \\
&\le \|z\|_{1,p}\|y\|_{1,p}
     \left(\int_{0}^{s}\int_{0}^{t}|\delta_{u,v}|^{p'} \dd v \dd u\right)^{1/p'},
\end{align*}
where we recall $\Vert \cdot \Vert_{1,p}$ was defined in \eqref{eq:norm_Vp}.

Raising the previous inequality to the power $p'$ and iterating gives, for every $n\ge1$ and all $s,t\in[0,T]$,
\[
\sup_{(u,v)\in[0,s]\times[0,t]}|\delta_{u,v}|
\le
\frac{\Big(\|z\|_{\infty,p}\,\|y\|_{\infty,p}\,s^{1/p'}t^{1/p'}\Big)^{n}}{(n!)^{2/p'}}
\sup_{(u,v)\in[0,s]\times[0,t]}|\delta_{u,v}|.
\]
Letting $n\to\infty$ implies $\sup_{(u,v)\in[0,s]\times[0,t]}|\delta_{u,v}|=0$ for all $s,t$, hence $\delta\equiv 0$.
\end{proof}

Similar to before, depending on the application, one may view the Volterra signature kernel as being induced by the feature map $ \cC^{0,1} \ni x \mapsto \VSig{x}{K}$, see also   \eqref{eq:vsig_feature_c}, and treat the Volterra kernel $K$ as a modeling component. To this end, for any kernel $K \in L^{\infty,p}([0,T];\cL(\RR^d;\RR^m))$ with $p>1$, we define for $(s,t) \in [0,T]$ \begin{equation}\label{eq:sig-kernel-noise}
    \kappa^K(x,w)_{s,t} := \la \VSig{x}{K}_{0,s}^s, \VSig{w}{K}_{0,t}^t \ra_{\cThilb},  \quad x,w \in \cC^{0,1}([0,T];\RR^d),
\end{equation} and the following kernel trick is a direct consequence of \Cref{thm:volterra-kernel-equation}.

\begin{cor}
    Let $p>1$ and fix a kernel $K \in L^{\infty,p}([0,T];\cL(\RR^d;\RR^m))$. Then the kernel $\kappa^K$ in~\eqref{eq:sig-kernel-noise} uniquely solves  \begin{equation}\label{eq:volterra-kernel-equation-noise}
    \kappa^K_{s,t} = 1+ \int_{0}^{s}\int_{0}^{t}  \kappa^K_{u,v}\langle K(s,u )\dot{x}_u, K(t,v)\dot{y}_v\rangle \dd u\dd v, \quad s,t \in [0,T]^2.
    \end{equation}
\end{cor}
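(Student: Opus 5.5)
The corollary is \Cref{thm:volterra-kernel-equation} applied to the two specific Volterra paths generated by the fixed kernel $K$. Given $x,w\in\cC^{0,1}([0,T];\RR^d)$, set
\[
z_t^\tau:=\int_0^t K(\tau,u)\dd x_u, \qquad y_t^\tau:=\int_0^t K(\tau,v)\dd w_v .
\]
Since $K\in L^{\infty,p}$ with $p>1$ and $\dot x,\dot w$ are bounded, \Cref{def:Volterra_paths_in_Lp} gives $z,y\in\cV^{1,p}([0,T];\RR^m)$. The norm is finite because
\[
\Vert z\Vert_{1,p}\le \Vert \dot x\Vert_\infty \Vert K\Vert_{L^{\infty,p}}.
\]

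\emph{Identification of the kernels.} By \Cref{def:VP_def_z} and \Cref{prop:vsig_z}, the full lifts are $\bz=\VSig{x}{K}$ and $\by=\VSig{w}{K}$. Hence, for all $(s,t)$,
\[
\kappa^K(x,w)_{s,t}=\la \bz^s_{0,s},\by^t_{0,t}\ra_{\cThilb}=\kappa(z,y)_{s,t},
\]
and $\kappa^K$ is well defined by \Cref{lem:well-defined-kernel}.

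\emph{Substitution into the Goursat-type equation.} Differentiating in the lower variable gives $\dot z^s_u=K(s,u)\dot x_u$ and $\dot y^t_v=K(t,v)\dot w_v$ for a.e.\ $u,v$ (cf.\ \Cref{rem:backward}). Putting these into \eqref{eq:volterra-kernel-equation} yields \eqref{eq:volterra-kernel-equation-noise}, with $\dot y$ read as $\dot w$.

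One point needs care. The integrand in \eqref{eq:volterra-kernel-equation} is written as $\la \dot z^t_u,\dot y^s_v\ra$, with the superscripts swapped. The existence step of that proof, however, actually produces $\la \dot z^s_u,\dot y^t_v\ra$ with $u\le s$ and $v\le t$. So I would use the form derived in the proof, which is exactly the pairing $\la K(s,u)\dot x_u,K(t,v)\dot w_v\ra$ stated in the corollary.

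\emph{Uniqueness.} This transfers verbatim from the uniqueness part of \Cref{thm:volterra-kernel-equation}. The Hölder iteration there only uses $\Vert z\Vert_{1,p},\Vert y\Vert_{1,p}<\infty$, which holds by the bound above.

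Every step is a direct transfer of an earlier result. The only thing to check carefully is the index matching just described, together with the observation that $\cV^{1,p}$ membership follows from $K\in L^{\infty,p}$ and $\dot x\in L^\infty$.
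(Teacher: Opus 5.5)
Your proposal is correct and follows the paper's route: the paper gives no separate proof and simply calls the corollary a direct consequence of \Cref{thm:volterra-kernel-equation}, applied to $z^\tau_t=\int_0^t K(\tau,u)\dd x_u$ and $y^\tau_t=\int_0^t K(\tau,v)\dd w_v$. You are also right that the pairing must be read as $\la \dot z^s_u,\dot y^t_v\ra$, which is what the theorem's existence proof actually produces (its statement has the superscripts swapped), and that $\dot y$ in the corollary stands for $\dot w$.
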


We conclude this section with a characterization of the Volterra signature kernel \eqref{eq:sig-kernel-noise}, restricted to finite state space kernels considered in \Cref{sec:dyn_exp}, that is 
\begin{equation}\label{eq:exp_kernel_recalled}
K_{A,b}^{\Lambda}(t,s) = \sum_{r=1}^q (\mathbf{1}^{\top} e^{-\Lambda(t-s)}b_r) A_r \in \cL(\RR^d;\RR^m),  
\end{equation} 
where $\Lambda \in \RR^{R\times R}$, $A_r \in \RR^{m\times d}$ and $b_r \in \RR^{R}$ 
(see Definition \ref{def:finite_state_space_kernels}). In \Cref{prop:mean_reverting_prop} we characterized Volterra signatures $\VSig{x}{K_{A,b}^\Lambda}$ with a system of mean-reverting equations in the tensor algebra. In the following result we show that the corresponding signature kernel can be characterized through a system of Goursat-PDEs. To this end, given two signals $x,w\in \cC^{0,1}([0,T];\RR^d)$, let us introduce the coefficient matrix \begin{equation}\label{eq:alpha_gamma_2}
    \gamma(s,t)\in \mathbb{R}^{R \times R},  \qquad \gamma_{ij}(s,t) = \la (B.\dot x_s)_i, (B.\dot w_t)_j \ra, 
\end{equation} where we recall that the notation  $(B.\dot{x}_t)$ was introduced in \eqref{eq:vector_prod_lifts}.
\begin{thm}\label{thm:kernel_pde_ffsk}
           Consider the matrix-valued functions $\bK,\mathbf{\Psi},\mathbf{\Phi}:[0,T]\rightarrow \mathbb{R}^{R\times R}$, defined as solutions to the following PDE system on $[0,T]^2$
\begin{align}
        \frac{\partial ^2 \bK_{s,t}}{\partial t\partial s} & = \eta_{s,t}\gamma(s,t) + \Lambda \bK_{s,t} \Lambda^\top -  \Lambda\mathbf{\Psi}_{s,t}-\mathbf{\Phi}_{s,t}\Lambda \label{eq:PDE1} \\ 
        \frac{\partial \mathbf{\Psi}_{s,t}}{\partial s} & = -\Lambda \mathbf{\Psi}_{s,t} + \gamma(s,t)\eta_{s,t}, \label{eq:PDE2} \\ \frac{\partial  \mathbf{\Phi}_{{s,t}}}{\partial t} &= -\mathbf{\Phi}_{s,t}\Lambda^\top+ \eta_{s,t}\gamma(s,t),\label{eq:PDE_3}
    \end{align} where $\eta_{s,t}:=1+\mathbf{1}^\top \mathbf{K}_{s,t}\mathbf{1}$ and
     with the  boundary conditions \[
    \bK_{0,0}=\bK_{0,t}=\bK_{s,0}= \mathbf{\Psi}_{0,t} = \mathbf{\Phi}_{s,0}=\mathbf{0}, \qquad (s,t) \in [0,T]^2,
    \] where we recall that $\gamma$ is defined by \eqref{eq:alpha_gamma_2}. Then, for two paths $x,w\in \cC^{0,1}([0,T];\RR^d)$, and a kernel $K$ given by \eqref{eq:exp_kernel_recalled}, the kernel $\kappa^K$ in \eqref{eq:sig-kernel-noise} can be expressed as \begin{equation}\label{eq:ffsk_sig_kernel}\kappa^K_{s,t} = \eta_{s,t}.\end{equation} 
\end{thm}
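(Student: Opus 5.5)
We plan to build everything on the finite state space representation of \Cref{prop:mean_reverting_prop}. For the signal $x$ write $\VSig{x}{K}^s_{0,s} = 1+\sum_{i=1}^R \bZ^i_{0,s}$, and for $w$ write $\VSig{w}{K}^t_{0,t} = 1+\sum_{j=1}^R \mathbf{W}^j_{0,t}$, where $\bZ$ and $\mathbf{W}$ solve \eqref{eq:mean_reverting} driven by $B.x$ and $B.w$. Since the unit $1$ is orthogonal to every tensor level $n\ge1$, and $\bZ,\mathbf{W}$ have no level-zero component (they start at $\mathbf{0}$ and are driven only through $\otimes\,\dd(B.x)$), we get $\kappa^K_{s,t} = 1 + \sum_{i,j}\la \bZ^i_{0,s},\mathbf{W}^j_{0,t}\ra_{\cThilb}$. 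We therefore set $\bK_{s,t}:=(\la \bZ^i_{0,s},\mathbf{W}^j_{0,t}\ra)_{i,j}$, so that $\eta_{s,t}=1+\mathbf{1}^\top\bK_{s,t}\mathbf{1}=\kappa^K_{s,t}$. The auxiliary matrices are defined by
\[
\mathbf{\Psi}_{s,t}:=\Big(\big\la \bZ^i_{0,s},\ \eta\text{-part of } \partial_t\mathbf{W}^j_{0,t}\big\ra\Big)_{i,j},\qquad
\mathbf{\Phi}_{s,t}:=\Big(\big\la \partial_s\bZ^i_{0,s},\ \mathbf{W}^j_{0,t}\big\ra\Big)_{i,j},
\]
with the precise choice, including signs and which $\Lambda$-term is split off, fixed by matching \eqref{eq:PDE1}. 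Concretely, $\mathbf{\Psi}_{s,t}$ should be the matrix $\big(\la \bZ^i_{0,s}, (1+\sum_k\mathbf{W}^k_{0,t})\otimes (B.\dot w_t)_j\ra\big)_{i,j}$, and $\mathbf{\Phi}$ is the symmetric counterpart with the roles of $\bZ$ and $\mathbf{W}$ exchanged.

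\emph{Step 1: derive the system by differentiation.} We write
\[
\partial_s\bZ^i=-(\Lambda.\bZ)^i+\Big(1+\sum_k\bZ^k\Big)\otimes(B.\dot x_s)_i,
\]
and similarly for $\mathbf{W}$. Differentiating $\bK_{s,t}$ in $s$ and then in $t$ and using bilinearity gives four groups of terms. The first is $\Lambda\bK\Lambda^\top$, from the two drift parts. The second is the product of the two noise parts, which by the factorization $\la A\otimes a,B\otimes b\ra=\la A,B\ra\la a,b\ra$ equals $\la 1+\sum\bZ,1+\sum\mathbf{W}\ra\,\la (B.\dot x_s)_i,(B.\dot w_t)_j\ra=\eta_{s,t}\gamma_{ij}(s,t)$. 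The remaining two are mixed drift/noise terms, which become $-\Lambda\mathbf{\Psi}$ and $-\mathbf{\Phi}\Lambda^\top$.

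For the equation of $\mathbf{\Psi}$, we differentiate in $s$ and again use $\la A\otimes a, B\otimes b\ra$ together with orthogonality of $1$. This yields $-\Lambda\mathbf{\Psi}+\gamma\eta$, i.e.\ \eqref{eq:PDE2}, and $\mathbf{\Phi}$ is handled symmetrically to give \eqref{eq:PDE_3}. The boundary conditions are immediate from $\bZ_{0,0}=\mathbf{W}_{0,0}=\mathbf{0}$. Note that the statement writes $\mathbf{\Phi}\Lambda$ in \eqref{eq:PDE1} rather than $\mathbf{\Phi}\Lambda^\top$; we will check the transpose convention carefully and adjust the definition of $\mathbf{\Phi}$ (or note the symmetric case) accordingly.

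\emph{Step 2: justification and uniqueness.} The differentiations are legitimate levelwise, since each tensor level is absolutely continuous. Summing over levels is justified by the infinite radius of convergence (\Cref{lem:ROC_examples}, which applies because exponential-type kernels are bounded), so all inner products converge and differentiation commutes with the series by dominated convergence. Finally, to conclude that the functions so constructed are \emph{the} solutions of the system, we show uniqueness. We rewrite the system in integral form: $\mathbf{\Psi}$ and $\mathbf{\Phi}$ are given by variation of constants in terms of $\bK$, and substituting into the integrated form of \eqref{eq:PDE1} gives a linear two-parameter Volterra equation for $\bK$ with bounded coefficients. A Picard iteration estimate of the type used in the uniqueness part of \Cref{thm:volterra-kernel-equation}, with bounds of order $C^n(st)^n/(n!)^2$, then gives uniqueness. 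The main obstacle we expect is the bookkeeping of the mixed terms: identifying exactly which cross inner products constitute $\mathbf{\Psi}$ and $\mathbf{\Phi}$ so that they close into \eqref{eq:PDE2}--\eqref{eq:PDE_3} without generating further unknowns. We will try the definitions above first and verify closure.
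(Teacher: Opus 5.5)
Your proposal is correct and follows the paper's proof essentially step by step: the same lift from Proposition~\ref{prop:mean_reverting_prop}, the same matrix $\bK$ of cross inner products $\langle \bZ^i_{0,s},\mathbf{W}^j_{0,t}\rangle$, the same auxiliary $\mathbf{\Psi}$ (the paper's $\widetilde{\mathbf{\Psi}}$, with entries $\sum_k \psi^{n,k}_{s,t}\beta_{jk}(t)$) and its mirror $\mathbf{\Phi}$, and the same four-term expansion of $\partial_s\partial_t \bK$. Your transpose remark is right, and it is sharper than your hedge suggests: the cross term (in your computation, as in the paper's $T_3$) is $-\mathbf{\Phi}\Lambda^\top$, and since \eqref{eq:PDE_3} with its boundary condition pins down $\mathbf{\Phi}$, this is a typo in \eqref{eq:PDE1} that no redefinition of $\mathbf{\Phi}$ can absorb, so the statement as written holds only for symmetric $\Lambda$; your explicit uniqueness argument for the system also supplies a step the paper leaves implicit when it concludes $\widetilde{\bK}=\bK$.
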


\begin{proof}
Let us denote by $\bZ$ (resp. by  $\bY$) the lift in \Cref{prop:mean_reverting_prop} with respect to the path $x$ (resp. the path $w$). From the latter proposition we know that $\kappa_{s,t}= \langle 1+\sum_{\ell=1}^R\bZ_{s,t}^\ell,1+\sum_{\ell=1}^R\bY_{s,t}^\ell \rangle$, where $\bZ$ and $\bY$ uniquely solve \begin{equation}\label{eq:mean_reverting_2}
    \bZ_{s,s}= \mathbf{0}, \quad \dd \bZ_{s,t} = -\Lambda. \bZ_{s,t}\dd t + \left (1+\sum_{i=1}^R\bZ^i_{s,t} \right)\otimes \dd (B.x_t).
    \end{equation}
    and
\begin{equation}\label{eq:mean_reverting_2_1}
    \bY_{s,s}= \mathbf{0}, \quad \dd \bY_{s,t} = -\Lambda. \bY_{s,t}\dd t + \left (1+\sum_{i=1}^R\bY^i_{s,t} \right)\otimes \dd (B.w_t),
    \end{equation} respectively. Relying on the bi-linearity of the inner product, we write \begin{equation}\label{eq:kernel_ffsk_dec}
    \kappa_{s,t}= 1 + \sum_{i,j=1}^R\la \bZ^i_{0,s}, \bY_{0,t}^j \ra =: 1+ \sum_{i,j=1}^R \widetilde{\bK}_{s,t}^{ij}.
\end{equation} 
In order to get a more explicit expression for $\widetilde{\bK}_{s,t}^{ij}$ above, let us recall (similarly to what we did in the derivation of the integral equation in \cref{thm:volterra-kernel-equation}) that from~\eqref{eq:mean_reverting_2}, integrating
against time and using $\bZ_{0,0}=\mathbf{0}$, we obtain the integral form
\begin{equation}\label{eq:Zintegral}
    \bZ_{0,t}^i
    = -\sum_{n=1}^R \Lambda_{in}\int_0^t \bZ_{0,v}^n \,\dd v
      + \int_0^t \Bigl(1+\sum_{\ell=1}^R \bZ_{0,v}^\ell\Bigr)\otimes \dd(B.\dot{x}_v)_i,
    \qquad 1\le i\le R,
\end{equation} and similarly for $\bY$ relying on~\eqref{eq:mean_reverting_2_1}. 
Inserting the integral form into the definition $\widetilde{\bK}^{ij}_{s,t}
:= \langle \bZ^i_{0,s}, \bY^j_{0,t}\rangle$ and exchanging the inner product with
integration (as in the proof of \cref{thm:volterra-kernel-equation}), we find
\begin{multline} \label{eq:dts_expand}
    \frac{\partial^2 \widetilde{\bK}^{ij}}{\partial t\,\partial s}
    = \partial_s\partial_t\,\Bigl\langle
         -\sum_{n=1}^R \Lambda_{in}\int_0^s \bZ_{0,u}^n\,\dd u
         +\int_0^s \Bigl(1+\textstyle\sum_\ell \bZ_{0,u}^\ell\Bigr)\otimes \dd(B.\dot{x}_u)_i, \\
         -\sum_{p=1}^R \Lambda_{jp}\int_0^t \bY_{0,v}^p\,\dd v
         +\int_0^t \Bigl(1+\textstyle\sum_\ell \bY_{0,v}^\ell\Bigr)\otimes \dd(B.\dot{w}_v)_j
       \Bigr\rangle.
\end{multline}
Expanding by bilinearity of the inner product, the above expression can be decomposed as:
\begin{equation}\label{d1}
\frac{\partial^2 \widetilde{\bK}^{ij}}{\partial t\,\partial s}
=T_{1}+\cdots+T_{4} \, ,
\end{equation}
where the terms $T_{1},\ldots,T_{4}$ are respectively defined by
\begin{eqnarray}
T_{1}&=& \sum_{n,p=1}^R \Lambda_{in}\Lambda_{jp}\,
        \partial_s\partial_t \Bigl\langle
          \int_0^s \bZ_{0,u}^n\,\dd u,\;
          \int_0^t \bY_{0,v}^p\,\dd v
        \Bigr\rangle\\
T_{2}&=& -\sum_{n=1}^R \Lambda_{in}\,
       \partial_s\partial_t\Bigl\langle
         \int_0^s \bZ_{0,u}^n\,\dd u,\;
         \int_0^t \Bigl(1+\textstyle\sum_\ell \bY_{0,v}^\ell\Bigr)\otimes\dd(B.\dot{w}_v)_j
       \Bigr\rangle\\
T_{3}&=& -\sum_{p=1}^R \Lambda_{jp}\,
       \partial_s\partial_t\Bigl\langle
         \int_0^s \Bigl(1+\textstyle\sum_\ell \bZ_{0,u}^\ell\Bigr)\otimes\dd(B.\dot{x}_u)_i,\;
         \int_0^t \bY_{0,v}^p\,\dd v
       \Bigr\rangle\\
T_{4}&=& \partial_s\partial_t\Bigl\langle
         \int_0^s \Big(1+\textstyle\sum_\ell \bZ_{0,u}^\ell\Big)\otimes\dd(B.\dot{x}_u)_i,\;
         \int_0^t \Bigl(1+\textstyle\sum_\ell \bY_{0,v}^\ell\Bigr)\otimes\dd(B.\dot{w}_v)_j
       \Bigr\rangle
\end{eqnarray}
Now the terms $T_{1},\ldots,T_{4}$ can be differentiated in a very similar way. 
Indeed, differentiating first in $t$ and then in $s$, and writing out the $k$-th components of $(B.\dot{w}_t)_j$, we get
\begin{equation}\label{eq:T2_eval}
    T_2
    = -\sum_{n=1}^R\Lambda_{in}\sum_{k=1}^m
        \Big \langle \bZ_{0,s}^n,\;\Big (1+\sum_{p=1}^R\bY_{0,t}^p\Big )\otimes e_k \Big \rangle\,(B.\dot{w}_t)_{jk}
    = -\sum_{n=1}^R\Lambda_{in}\sum_{k=1}^m
        \psi^{n,k}_{s,t}\,\beta_{jk}(t),
\end{equation}
where we used the definitions 
\begin{equation}\label{d2}
\psi^{n,k}_{s,t}:=\Big \langle \bZ_{0,s}^n,\Big (1+\sum_{p=1}^R\bY_{0,t}^p\Big )\otimes e_k \Big \rangle,
\quad\text{and}\quad
\beta_{jk}(t):=(B.\dot{w}_t)_{jk}.
\end{equation}
In the same way (leaving the tedious details to the patient reader), we find
\begin{eqnarray*}
 T_1
    &=& \sum_{n,p=1}^R \Lambda_{in}\Lambda_{jp}\,\langle \bZ_{0,s}^n,\,\bY_{0,t}^p\rangle
    = \sum_{n,p=1}^R \Lambda_{in}\Lambda_{jp}\,\widetilde{\bK}_{s,t}^{np} \\
  T_3
    &=& -\sum_{p=1}^R\Lambda_{jp}\sum_{k=1}^m
        \Big \langle \Big (1+\sum_{n=1}^R\bZ_{0,s}^n \Big )\otimes e_k,\;\bY_{0,t}^p\Big \rangle\,(B.\dot{x}_s)_{ik}
    = -\sum_{p=1}^R\Lambda_{jp}\sum_{k=1}^m
        \varphi^{k,p}_{s,t}\,\alpha_{ik}(s) \\
 T_4
    &=& \Bigl(1+\sum_{p,q}\widetilde{\bK}^{pq}_{s,t}\Bigr)
      \langle (B.\dot{x}_s)_i,(B.\dot{w}_t)_j\rangle
    = \kappa_{s,t}\,\gamma_{ij}(s,t),  
\end{eqnarray*}
where we used the notation $\gamma_{ij}$ in \eqref{eq:alpha_gamma_2}, and \eqref{eq:alpha_gamma_2})
\begin{equation*}
\varphi^{k,p}_{s,t}:= \Big \langle \Big ( 1+ \sum_{n=1}^R\bZ_{0,s}^n\Big )\otimes  e_k,\bY_{0,t}^p\Big \rangle ,
\quad\text{and}\quad
\alpha_{ik}(s):=(B.\dot{x}_s)_{ik},
\end{equation*}
and where for the computation of $T_{4}$ we have resorted to the fact that $\kappa_{s,t}=1+\sum_{p,q}\widetilde{\bK}^{pq}_{s,t}$ (see~\eqref{eq:kernel_ffsk_dec}). Now gathering the terms $T_{1},\ldots,T_{4}$ and plugging those expressions into~\eqref{d1}, we discover that
\begin{multline}\label{eq:PDE1_proof}
\frac{\partial^2 \widetilde{\bK}^{ij}}{\partial t\,\partial s}
    = \kappa_{s,t}\,\gamma_{ij}(s,t) + \sum_{n,p=1}^R
           \Lambda_{in}\Lambda_{jp}\,\widetilde{\bK}_{s,t}^{np}\\
           -\sum_{pn=1}^R\sum_{k=1}^m \Big ( \Lambda_{in}\,\psi^{n,k}_{s,t}\,\beta_{jk}(t)
               + \Lambda_{jn}\,\varphi^{k,n}_{s,t}\,\alpha_{ik}(s)
         \Big ).
\end{multline}
We still have to find a more explicit expression for the terms $\psi^{n,k}$ and $\varphi^{k,n}$ in~\eqref{eq:PDE1_proof}. To this aim, we apply the same techniques as in the above computations. That is  starting from the definition~\eqref{d2} of $\psi^{n,k}$ we get \begin{align}
\psi^{n,k}_{s,t}&= \left \la \sum_{\ell=1}^R-\Lambda_{nl}\int_0^s \bZ_{0,u}^\ell \dd u + \int_0^s (1 + \sum_{\ell=1}^R\bZ_{0,u}^\ell) \otimes \alpha_{n}(u)  \dd u, \Big (1+\sum_{p=1}^R\bY_{0,t}^p \Big ) \otimes e_k \right \rangle \nonumber \\ & =\sum_{\ell=1}^R -\Lambda_{n\ell } \int_0^s \Big \langle \bZ^\ell_{0,u}, \Big (1+\sum_{p=1}^R\bY_{0,t}^p \Big ) \otimes e_k \Big \rangle \dd u + \int_0^s \eta_{u,t}  \langle \alpha_{n}(u),e_k \rangle \dd u \nonumber \\ & = \sum_{\ell=1}^R-\Lambda_{n\ell} \int_0^s \psi^{\ell,k}_{u,t} \dd u + \int_0^s \eta_{u,t}\alpha_{nk}(u) \dd u,
\nonumber
\end{align}
 so that in particular \begin{equation}\label{eq:PDE2_proof}
 \psi^{n,k}_{0,t} = 0, \quad \partial_s \psi^{n,k}_{s,t}= -\sum_{\ell=1}^R \Lambda_{n\ell} \psi^{\ell,k}_{s,t} + \eta_{s,t} \alpha_{nk}(s).
 \end{equation} Similarly one finds \begin{equation}
     \label{eq:PDE_3_proof}\varphi^{k,n}_{s,0}=0, \quad 
 \partial_t \varphi^{k,n}_{s,t}= -\sum_{\ell=1}^R \Lambda_{n\ell}\varphi^{k,\ell}_{s,t}+\eta_{s,t} \beta_{nk}(t) .
 \end{equation}
 Now using \eqref{eq:PDE2_proof} for the $\widetilde{\Psi}_{s,t}^{n,j}=\sum_{k=1}^m
        \psi^{n,k}_{s,t}\,\beta_{jk}(t)$ appearing in \eqref{eq:T2_eval} for $1\leq n,j \leq R$, we find $$\widetilde{\Psi}_{0,t}^{n,j}=0, \qquad \partial_s\widetilde{\Psi}_{s,t}^{n,j} = -\sum_{\ell=1}^R \Lambda_{n\ell} \widetilde{\Psi}_{s,t}^{\ell,k}+\eta_{s,t}\Big (\sum_{k=1}^m \alpha_{nk}(s)\beta_{jk}(t)\Big)= -\sum_{\ell=1}^R \Lambda_{n\ell} \widetilde{\Psi}_{s,t}^{\ell,k}+\eta_{s,t}\gamma_{nj}(s,t).$$ Writing $\widetilde{\mathbf{\Psi}}= (\widetilde{\psi}^{n,j})_{1\leq n,j \leq R}$, the last equations show that $\widetilde{\mathbf{\Psi}}$ solves \eqref{eq:PDE2}. Applying the same treatment for $\widehat{\varphi}^{p,i}_{s,t}:=\sum_{k=1}^m
        \varphi^{k,p}_{s,t}\,\alpha_{ik}(s)$ together with \eqref{eq:PDE_3_proof}, leads to the solution $\widetilde{\mathbf{\Phi}}=(\widetilde{\varphi}^{p,i})_{1\leq p,i \leq R}$ of \eqref{eq:PDE_3}. In particular, we have shown that the triplet  $(\widetilde{\bK},\widetilde{ \mathbf{\Psi}},\widetilde{\mathbf{\Phi}})$ indeed solves the system~\eqref{eq:PDE1}-\eqref{eq:PDE_3}, and thus that  $\widetilde{\bK}=\bK$. Hence comparing \eqref{eq:ffsk_sig_kernel} and \eqref{eq:kernel_ffsk_dec}, our theorem is proved. 
 \end{proof}

We end this section with several remarks.
\begin{rem}
    The system of Goursat PDEs \eqref{eq:PDE1}--\eqref{eq:PDE_3} generalizes the PDE considered for classical signatures in \cite{Salvi2021}. The latter is recovered by choosing $R=q=b_1=1$, $\Lambda=0$, and $A=\mathrm{Id}$, that is, by taking the trivial kernel $K(t,s)\equiv \mathrm{Id}$ in \eqref{eq:exp_kernel_recalled}. More importantly, the numerical schemes and computational ideas developed in \cite{Salvi2021} can be extended to the present setting, yielding efficient algorithms for the computation of $\kappa^K$. We refer the interested reader to the accompanying paper \cite{ii_part} for further details,.
\end{rem}

\begin{rem}
Notice that the auxiliary equations \eqref{eq:PDE2}--\eqref{eq:PDE_3} admit the explicit representations
\[
\mathbf{\Psi}_{s,t}
=
\int_0^s e^{-\Lambda(s-u)}\eta_{u,t}\gamma(u,t)\,\dd u,
\quad
\mathbf{\Phi}_{s,t}
=
\int_0^t \gamma(s,u)\eta_{s,u} e^{-\Lambda^\top(t-u)}\,\dd u,
\qquad (s,t)\in[0,T]^2.
\]
In particular, once a grid of \([0,T]^2\) is fixed, the auxiliary variables can be propagated along the grid by using the above integral representations, which leads to efficient finite-difference schemes, see \cite{ii_part}.
\end{rem}

\begin{rem}\label{rem:static_kernel_lift}
    %As emphasized in Remark~\ref{rem:RKHS_lift}, 
    One of the key advantages of the signature kernel trick introduced in \cite{Salvi2021} is that one may first pre-process the data via a feature map $\RR^d \rightarrow \cH$, where $\cH$ is a potentially infinite-dimensional feature space, without losing tractability of the resulting signature kernel. While referring to \cite[Section 5.1]{ii_part} for details, let us now illustrate how this idea applies to the PDE system in Theorem~\ref{thm:kernel_pde_ffsk}. For simplicity, let $q=1$ in \eqref{eq:alpha_gamma_2} so that  $K(t,s)=k(t,s)A$, where $k(t,s)=\mathbf{1}^\top e^{-\Lambda(t-s)}b$ is a scalar kernel and $A: \RR^d \rightarrow \RR^m$ is a linear map. Now suppose we replace the matrix $A$ by some static feature map $A:\RR^d \to \cH$, where $\cH$ is an RKHS induced by some kernel $\kappa_\cH$. Although one may formally define the Volterra signature associated with the $\cH$-valued paths $X_t=A(x_t)$ and scalar kernel $k$, its direct computation is infeasible, since $\cH$ is in general very high- or infinite-dimensional. However, the situation is different for the corresponding kernel $\kappa_{s,t} = \la \VSig{X}{k},\VSig{W}{k}\ra$. Indeed, arguing as in Proposition~\ref{prop:mean_reverting_prop} and Theorem~\ref{thm:kernel_pde_ffsk}, it is natural to expect that the PDE system \eqref{eq:PDE1}-\eqref{eq:PDE_3} continues to characterize this quantity, with respect to the coefficients
    \[
    \gamma_{ij}(s,t):= b^ib^j\la \dot{X}_s,\dot{W}_t \ra_\cH, \qquad (s,t) \in [0,T]^2, \quad 1\leq i,j \leq R,
    \]
    where we recall that $b=b_1\in \mathbb{R}^R$ arises from the kernel definition \eqref{eq:exp_kernel_recalled}. Assuming efficient evaluation of $\kappa_\cH$, any numerical scheme for the PDE system \eqref{eq:PDE1}-\eqref{eq:PDE_3} can be transferred directly to the lifted setting, without increasing its numerical complexity. We introduce a predictor-corrector finite-difference scheme in the accompanying article \cite{ii_part}; see \cite[Algorithm 11]{ii_part}.
\end{rem}

\section{Applications}\label{sec:learning Volterra sde}
In this section, we illustrate the versatility of the $\mathrm{VSig}$ feature map in several applications with synthetic and real-world data. All our applications can be formulated as supervised learning problems with sequential data $x=(x_t)_{t \in \mathbb{T}}$, where $\mathbb{T}$ denotes the time index set. More precisely, we observe input-output pairs $\{(x^{(i)},y^{(i)})\}_{i=1}^M$, which are related through an unknown function $f$. We begin with a synthetic experiment in Section~\ref{sec:VSDE_ex}, where $f$ is the solution map of a \emph{stochastic Volterra equation} driven by Brownian motion, which is learned by regularized least-squares regression on the Brownian Volterra signature. Moving to real-world data, in Section~\ref{sec:spx} we consider the problem of forecasting future S\&P500 realized volatility from past daily log-prices. Similar to the first example, this is achieved by regression on the Volterra signature of the log-prices. Finally, in Section~\ref{sec:classification} we consider multivariate time-series classification on various UEA datasets \cite{bagnall2018uea}. For this, we employ a support vector machine (SVM) classifier based on the Volterra signature kernel introduced in Section~\ref{sec:kernel-trick}.

As motivated in the introduction, the signature transform $x\mapsto \mathrm{Sig}(x)$, see \cref{rem:usual_sig}, provides a powerful feature map and has been successfully employed for various statistical learning problems with sequential data; see, for instance, \cite{Chevyrev2016} and the references therein. The goal of this section is to illustrate that incorporating a kernel $K$ into the lift, namely considering $x\mapsto \VSig{x}{K}$, is highly beneficial for the learning tasks outlined above. In the sequel, we denote by $\texttt{VSig}_K$ a model built on the feature $\VSig{\cdot}{K}$ for some given kernel $K$, while in the special case $K=\mathrm{Id}$ we write \texttt{Sig}.

For all methods, we split the data into training and test sets, and select the involved hyperparameters by cross-validation on the training data. The detailed implementations of the experiments conducted in this article can be found at \url{https://github.com/lucapelizzari/Volterra_signature_learning}. The code relies on the package \texttt{tensordev}, see also  \url{https://github.com/hagerpa/tensordev}, which supports all Volterra signature algorithms presented in the accompanying article \cite{ii_part}.

%Apart from the regularizer in the regression, and the parameter in the SVM, for our real-world application we also treat the kernel $K$ as a hyperparameter - belonging to the family of finite-state space kernels introduced in Section~\ref{sec:dyn_exp}.

\subsection{Linear Volterra SDE dynamics}\label{sec:VSDE_ex} 
We begin with a synthetic toy example involving non-Markovian data. Let \(k \in L^{\infty,1}(\Delta^2;\mathbb{R})\) be a scalar kernel, $B$ a one-dimensional Brownian signal, and denote by $Y$ the solution to the linear Volterra SDE (see, e.g., \cite{Pr85})
\begin{equation}\label{eq:SDE}
Y_t =Y_0 + \int_0^t (b_0+b_1Y_s)k(t,s)\dd s + \int_0^t(\sigma_0+\sigma_1Y_s)k(t,s)\dd B_s, \quad 0<t \leq T,
\end{equation} for some $Y_0,b_0,b_1,\sigma_1,\sigma_2 \in \RR$. Given i.i.d. samples of the pairs $(B,Y)$, we aim to learn the solution map $B \mapsto f(B)=Y$. To this end, let us further denote by $\bar{B}$ the piecewise linear interpolation of $(t,B_t)$ on a fixed grid \(\mathbb{T} = \{0 = t_0 < \dots < t_N = T\}\) with \(N \in \mathbb{N}\), and by $\bar{Y}$ the corresponding solution to \eqref{eq:SDE} driven by $\bar{B}$. We consider an empirical risk minimization for linear functionals $\Psi_\theta(\bx)= \la \theta, \bx \ra$ as discussed before, and a ridge regression over a subset of grid-points $\cD \subseteq \TT$, that is \begin{equation} \label{eq:SDE_loss}\argmin_{|\theta| \leq L} \frac{1}{|\mathcal{D}|\times M}
\sum_{i=1}^{M}\sum_{t_n \in \mathcal{D}}
\Big(\bar{Y}^{(i)}_{t_n}-\big \langle \theta, \VSig{\bar{B}^{(i)}}{K}_{0,t_n}^{t_n} \big \rangle\Big)^2+ \eta \Vert \theta \Vert_{\ell^2}, \quad \eta \geq 0.
\end{equation} 
We compare three models for $K$ in \eqref{eq:SDE_loss}: \begin{itemize}
    \item \texttt{Sig} $(K=\mathrm{Id})$: Expanding classical SDE solution with iterated integrals of time-augmented Brownian motion is very natural for $k \equiv 1$ in \eqref{eq:SDE} (e.g.\ via stochastic Taylor expansions \cite{Kloeden1991}), and was recently considered for non-explosive kernel $k$ in~\cite{jaber2024pathdependentprocessessignatures}, where the authors show how to explicitly construct infinite expansions for linear equations \eqref{eq:SDE}. 
    \item $\texttt{VSig}_k$ ($K=k$): In a situation where the underlying  kernel $k$ is known, our expansion  in \cref{prop:linear_VCDE} naturally suggests to choose $K=k$ in \eqref{eq:SDE_loss}. While (at least for non-singular kernels) solutions to \eqref{eq:SDE} admit expansions in both the classical and the Volterra signature, incorporating the kernel into the signature is expected to be advantageous when working with low truncation levels. 
    \item $\texttt{VSig}_{k_\lambda }$ $(K=k_\lambda)$: In real-world applications $k$ is often not explicitly known, but it might still be beneficial to incorporate a kernel weighting the history of the path. We illustrate this here for the parametric class $\{k_\lambda(t,s)=e^{-\lambda (t-s)}: \lambda \in \mathbb{R}_+\}$ (see Section~\ref{sec:dyn_exp}), where $\lambda$ is treated as a hyperparameter in the learning problem \eqref{eq:SDE_loss}. This approach will be central in \Cref{sec:spx} for financial time series, so that we already illustrate it here.  
\end{itemize}

\begin{figure}
  \centering
  \includegraphics[width=1\textwidth]{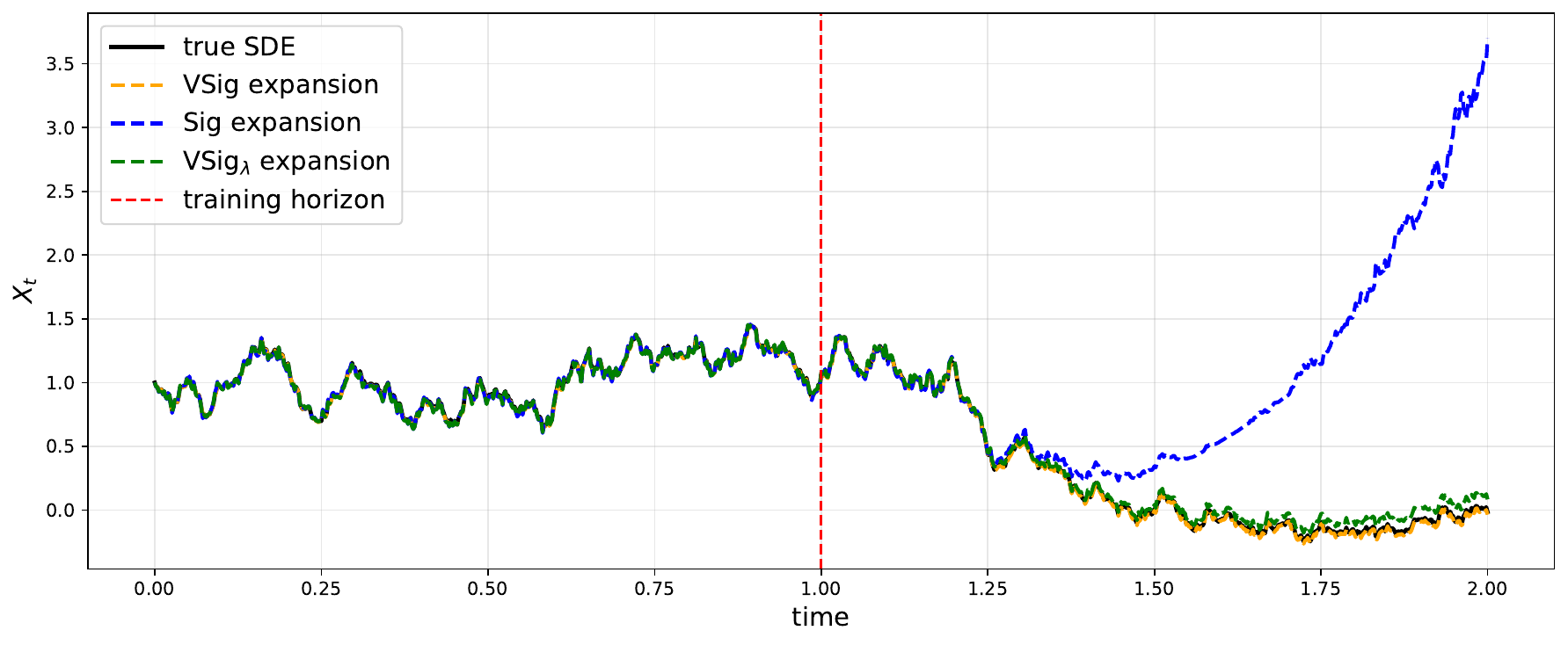}
  \caption{Volterra signature vs.\ classical signature expansions, compared with the fractional SDE solution \eqref{eq:SDE} for one testing sample. The models were trained with $M=900$ training samples and $N=500$ time-steps on $[0,1]$
Parameters: $Y_0=1, b_0=0,b_1=-1,\sigma_0=1,\sigma_1=0.5$, signature truncation $L=6$.}

  \label{fig:Gamma-kernel}. 
\end{figure}
We generate $M=1000$ samples paths  $\bar{B}$ on the grid $\TT$ with $T=2$ and $N=1000$. We choose the underlying kernel to be the fractional kernel $k_\beta(t,s)=\frac{(t-s)^{\beta-1}}{\Gamma(\beta)}$ with $\beta=1.10$, %
and along each sample of $\bar{B}$ we generate a sample of \(\bar{Y}\) using an Euler--Maruyama scheme. For the model  \texttt{Sig} we use the \emph{iisignature} library \cite{reizenstein2018iisignature}, while for the Volterra signatures we implemented algorithms presented in \cite[Section 4.2]{ii_part}. Choosing truncation level $L=6$, the optimization \eqref{eq:SDE_loss} is then easily solved as a linear least-square regression problem, where we consider a $90\%-10\%$ training-testing split, and $\cD = [0,1] \cap \TT$, that is we only train on half of the interval. For the model $\texttt{VSig}_{k_\lambda}$ we additionally perform a grid-search on $[0,10]$ for the parameter $\lambda$.

In Figure~\ref{fig:Gamma-kernel} we plot one testing-sample path of the solution $Y^{(j)}$, as well as the three trained models $\texttt{Sig},\texttt{VSig}_k,\texttt{VSig}_{k_\lambda}$ on the whole interval $[0,2]$. As the plot for one sample suggests -- and Table \ref{tab:MSE_R2_SDE} confirms -- all three expansions on the testing data fit the dynamics $Y$ almost perfectly on the same interval the models were trained. The results are slightly different outside the training interval, where the truncated signature expansion $\texttt{Sig}$ can visibly not  reproduce the dynamics. This does not contradict the explicit expansions \cite{jaber2024pathdependentprocessessignatures} for classical signature, but indicates that deeper levels are necessary. On the other-hand, $\texttt{VSig}_k$  continues to reproduce $Y$, which again reflects the expansion \cref{prop:linear_VCDE} together with the decay of $\mathrm{VSig}$ observed in the proof of \cref{lem:ROC_examples}. Finally, the model $\texttt{VSig}_{k_\lambda}$ also performs reasonably well on the whole interval, and thanks to the flexibility through the parameter $\lambda$, presents a serious competitor to $\texttt{VSig}_k$ even though we do not use the true underlying kernel $k$.

\begin{table}
\centering
\begin{tabular}{lcccc}
\hline
Method & $\mathrm{MSE}\!\mid_{[0,1]}$ & $R^2\!\mid_{[0,1]}$ & $\mathrm{MSE}\!\mid_{[0,2]}$ & $R^2\!\mid_{[0,2]}$ \\
\hline
\texttt{Sig}                 & $2.29 \times 10^{-4}$ & 1.000 & $2.9387$ & -0.923 \\
$\texttt{VSig}_k$            & $6.30 \times 10^{-5}$ & 1.000 & $0.0012$ & 0.999 \\
$\texttt{VSig}_{k_\lambda}$  & $1.38 \times 10^{-4}$ & 1.000 & $0.0124$ & 0.992 \\
\hline
\end{tabular}
\caption{Prediction performance of the three feature maps \texttt{Sig}, $\texttt{VSig}_k$, and $\texttt{VSig}_{k_\lambda}$, evaluated on the intervals $[0,1]$ and $[0,2]$. We report the mean squared error (MSE) and the coefficient of determination $R^2$.}

\label{tab:MSE_R2_SDE}
\end{table}
\subsection{S\&P 500 realized volatility forecasting}\label{sec:spx}
In our second numerical experiment, we apply Volterra signatures to volatility forecasting for the S\&P~500 index. We use daily S\&P~500 log-prices and the median realized volatility ($\sqrt{\text{MedRV}}$) from the Oxford-Man Institute’s realized volatility database \cite{heber2009oxford}, covering the period 2000-01-03 to 2018-06-26.\footnote{The data are publicly available; see, for instance, \url{https://github.com/onnokleen/mfGARCH/raw/v0.1.9/data-raw}.} Given the two time series of daily log-prices $x=\{x_{t_n}:0\le n\le N\}$ and realized volatilities $v=\{v_{t_n}:0\le n\le N\}$, we aim to learn
\begin{equation}\label{eq:forecasting}
\bx_n := (x_{t_0},\ldots,x_{t_n}) \longmapsto f_q(\bx_n)
:= \mathbb{E}\!\left[v_{t_{n+q}} \mid \bx_n\right], \qquad q\in\NN .
\end{equation}
In words, we forecast the realized volatility $q$ trading days ahead, given the past history of daily S\&P~500 log-prices. 
In particular, our predictors are functions of the \emph{price history only} (daily closes), and do not use past realized volatility values as additional inputs.

As already motivated in the introduction, it has been observed by both practitioners and researchers that asset-price volatility is history dependent, and Volterra processes provide a natural framework for modeling such effects. In the sequel, we focus on the following class of finite state space kernels
\begin{equation}\label{eq:kernel_class_SPX}
    K_{\lambda,\alpha,c}(t,s)= \mathbf{1}^\top e^{-\Lambda(t-s)}\begin{pmatrix}
        \alpha_1 \\ \alpha_2
    \end{pmatrix}, \quad \Lambda = \begin{pmatrix}
        \lambda_1 & -c \\ c & \lambda_2
    \end{pmatrix} 
    \qquad \alpha_1,\alpha_2 \in \mathbb{R}, \quad \lambda_1,\lambda_2,c \geq 0,
\end{equation}
which belongs to the general class of finite state-space kernels considered in Section~\ref{sec:dyn_exp} for $R=2$ and $q=1$. For $c=0$, \eqref{eq:kernel_class_SPX} are sum-of-exponentials kernels, which have shown to be effective in volatility modeling, e.g.\ for Markovian approximations of rough volatility models \cite{abi2019multifactor,bayer2023markovian}, and more recently in path-dependent volatility models \cite{guyon2023volatility,gazzani2025pricing}. We propose here a more general class of exponential kernels, with additional flexibility through the coupling/frequency parameter $c$. 

We define the targets by $y^{(i)} := v_{t_{i+q}}$ and for the input $x^{(i)}$, we use the piecewise linear interpolation of the augmented time series $\hat{x}_{t_i} := \big(x_{t_i},\; \sum_{j\le i}|(\delta x)_{t_{j-1},t_j}|,\; t_i\big),$
recalling that time augmentation guarantees linear universality (see Theorem~\ref{thm:lin_univ_exp}). We denote by $\texttt{VSig}_{\lambda,\alpha,c}$ the resulting method based on a Ridge regression (see \eqref{eq:p_SPX_LS} below) using the Volterra signature features $\VSig{\hat{x}^{(i)}}{k_{\lambda,\alpha}}$ and targets $y^{(i)}$. Here, the parameters $(\lambda_1,\lambda_2,c,\alpha_1,\alpha_2)$ are treated as hyperparameters: the decay rates $\lambda_1,\lambda_2$ encode different memory scales, the coupling parameter $c$ allows for interactions between the exponential factors, and the weights $\alpha_1,\alpha_2$ determine their relative contributions.  As before, we also consider the classical signature method \texttt{Sig} for comparison.
\begin{figure}
  \centering
  \includegraphics[width=1\textwidth]{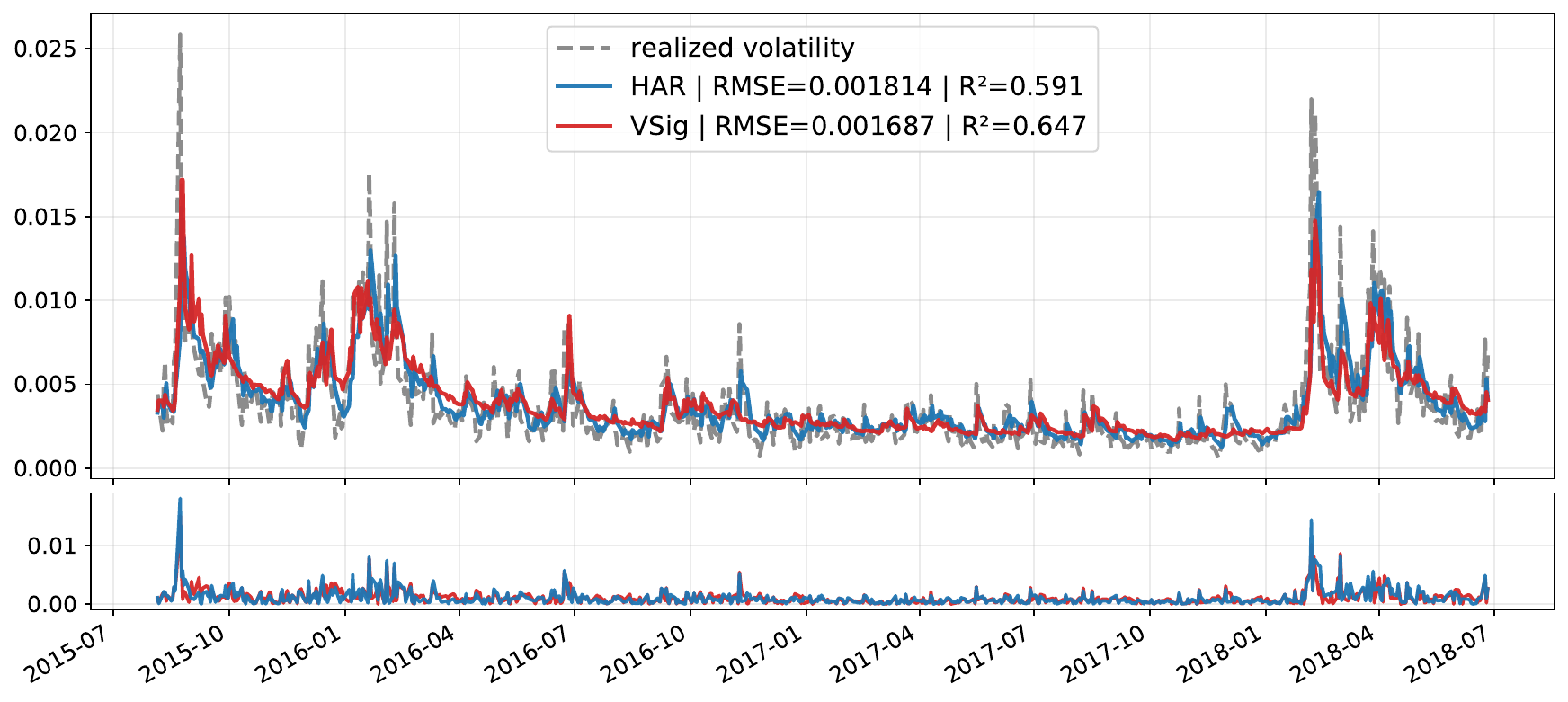}
  \caption{Next-day forecast of realized S\&P~500 volatility using our method \texttt{VSig}, compared with the benchmark \texttt{HAR}, on the test set. The lower subplot shows the absolute forecast errors \ $|\widehat{y}-y|$ for both methods, where $y$ denotes the realized volatility.}

  \label{fig:frequency_band} 
\end{figure}
As a benchmark, we consider the simple but powerful \texttt{HAR} model \cite{corsi2009simple}, i.e.\ a linear model for $v_{t_{n+q}}$ based on historical realized volatility such as $v_{t_n}$, $v_{t_{n-5}}$, and $v_{t_{n-22}}$. We emphasize again that the signature-based models use only daily historical prices as inputs (rather than historical realized volatility, which are computed from intraday price information), and thus rely on a different information set than \texttt{HAR}. Nevertheless, we use \texttt{HAR} forecasts as a standard benchmark for predictive performance.
\begin{rem}
    The model \texttt{VSig} is related to the path-dependent volatility (PDV) model introduced in \cite{guyon2023volatility}, where the authors empirically learn volatility path-dependence via the specification $v_t = \beta_0 + \beta_1 R_{1,t} + \beta_2 \sqrt{R_{2,t}},$ where
 \[
    R_{1,t}= \sum_{t_j\leq t}K_1(t,t_j)r_{t_j}, \qquad R_{2,t}= \sum_{t_j\leq t}
K_2(t,t_j) r_{t_j}^2, \qquad r_{t_j}= \frac{S_{t_j}-S_{t_{j-1}}}{S_{t_{j-1}}}.\] The features $(R_{1,t},R_{2,t})$ can be interpreted as discretized first-level Volterra signature features associated with the kernel $K=\mathrm{diag}(K_1,K_2)$, and already encode history dependence through weighted past returns and squared returns. Our method \texttt{VSig} therefore extends such models by lifting these memory states to higher-order Volterra signatures. This adds flexibility to capture more complex path-dependent interactions, together with our universality results providing theoretical justification for the resulting expressiveness.
\end{rem}

In the sequel, we compare the signature-based methods on sliding past windows $W_n^p := \{t_{n-p},\ldots,t_n\}$ of increasing length $p\in\NN$. More precisely, we consider the Ridge regression problem
\begin{equation}\label{eq:p_SPX_LS}
    \theta^\star=\theta^\star(p,q)
    = \argmin_{|\theta| \leq L}
    \frac{1}{N_{\mathrm{tr}}} \sum_{i=1}^{N_{\mathrm{tr}}} \Big( v_{t_{i+q}}
    - \big\langle \theta,\VSig{\hat{x}|_{W_i^p}}{k_{\alpha,\lambda}}^{(L)} \big\rangle \Big)^2
    + \eta \|\theta\|_{\ell^2}^2,
    \qquad \eta>0,
\end{equation}
where $\hat{x}|_{W_i^p}$ denotes the piecewise linear interpolation of $\hat{x}$ restricted to the past window $W_i^p$, i.e.\ based on the points $\{\hat{x}_{t_{i-p}},\ldots,\hat{x}_{t_i}\}$. Moreover, $L$ is the signature truncation level, which we also treat as a hyperparameter. In our experiments, we consider past-window sizes $p\in\{10,20,\ldots,250\}$ (in trading days) and forecasting horizons $q\in\{1,3,5\}$. We use an $80$--$20\%$ train--test split (i.e.\ $N_{\mathrm{tr}} = 0.8\times N$). The hyperparameters of \texttt{VSig}, as well as the ridge regularization parameter $\eta>0$ and truncation level $L$, are tuned on the last $20\%$ of the training set. For the \texttt{Sig} method, the best results were obtained using truncation level $L=4$, and for $\texttt{VSig}$ already at level $L=3$. The learned kernel parameters (rounded to two digits) for the \texttt{VSig} method are given by $$\lambda_1 = 22.69, \qquad \lambda_2= 0.14, \qquad \alpha_1=0.18, \qquad \alpha_2 = 16.02.$$

In general, \texttt{VSig} outperforms the classical signature model \texttt{Sig} and, for sufficiently long histories (i.e.\ for $p$ large enough), also the benchmark \texttt{HAR}. Given that \texttt{VSig} introduces additional flexibility through kernel hyperparameters, this improvement might not be surprising. Nevertheless, the results indicate that even simple kernel classes can substantially enhance the predictive power of signature features. In particular, the added flexibility appears to be essential for signature-based approaches to match and eventually surpass the \texttt{HAR} benchmark, as we further illustrate below. As a first illustration, Figure~\ref{fig:frequency_band} shows next-day forecasts ($q=1$ and $p=240$) of \texttt{VSig} versus \texttt{HAR} on the test set. Both the coefficient of determination $R^2$ and the root mean-squared error (RMSE) improve under \texttt{VSig}. Moreover, the time-series plot suggests that \texttt{VSig} yields the largest gains during high-volatility periods, consistent with the error subplot.

\begin{figure}
  \centering
  \includegraphics[width=1\textwidth]{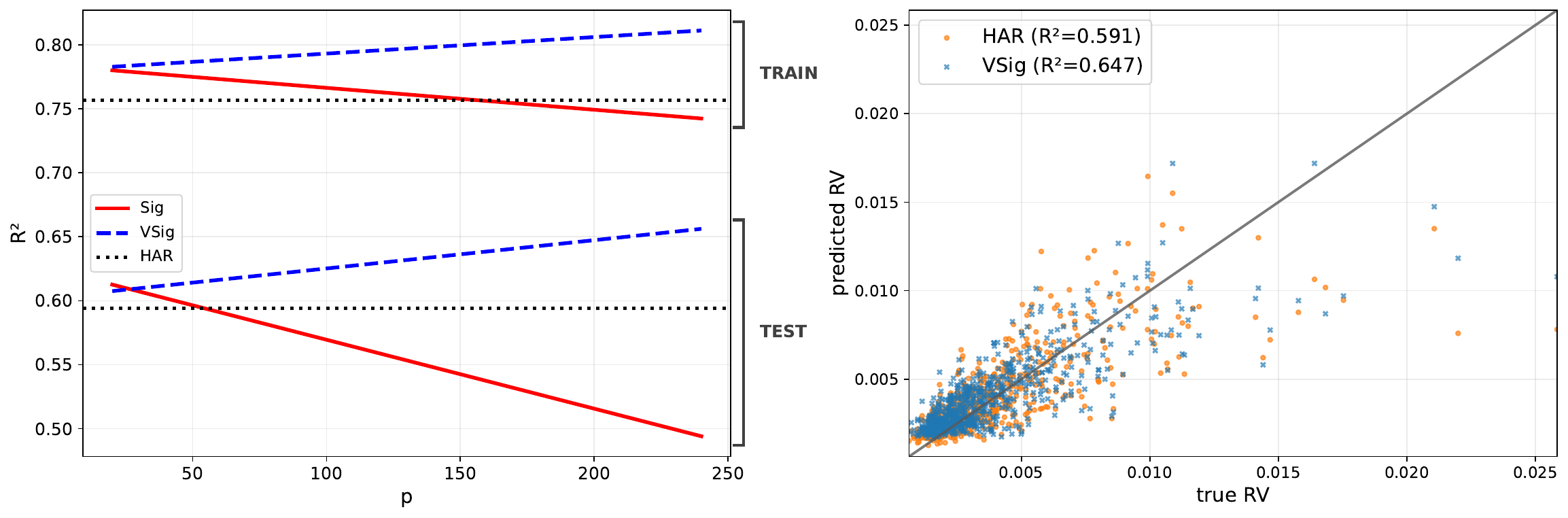}
  \caption{Left: Coefficient of determination $R^2$ as a (linearly interpolated)  function of the past-window size $p$ (days), reported on the training and test sets for the methods \texttt{VSig} and \texttt{Sig}, and the (constant) benchmark \texttt{HAR}. Right: Scatter plots of realized volatility $y$ versus predictions $\widehat{y}$ for our method \texttt{VSig} and the \texttt{HAR} benchmark.}

  \label{fig:r2scatter}
\end{figure}

In the left panel of Figure~\ref{fig:r2scatter}, we compare \texttt{VSig} with the classical signature method \texttt{Sig} by reporting a linear interpolation of the $R^2$ score as a function of the past-window length~$p$. On both the training and the test set, \texttt{Sig} deteriorates as $p$ increases, whereas \texttt{VSig} consistently benefits from incorporating a longer history. This behavior is consistent with the fact that, while past information is relevant for forecasting volatility, its predictive value is not uniform across time: distant observations typically carry less weight than more recent ones. In \texttt{VSig}, the kernel is tuned to encode such time decay and to aggregate past information in a more efficient, memory-aware manner. Some more precise values, also for larger prediction horizons, are reported in Table~\ref{tab:q_by_p_r2_triples}. 

\begin{table}[h]
\centering
\resizebox{\textwidth}{!}{%
\begin{tabular}{ccccccc}
\toprule
 & 20 & 60 & 110 & 150 & 200 & 240 \\
q &  &  &  &  &  &  \\
\midrule
1 & 0.59/\textbf{0.60}/0.60 & 0.58/\textbf{0.62}/0.60 & 0.57/\textbf{0.63}/0.60 & 0.56/\textbf{0.64}/0.60 & 0.52/\textbf{0.65}/0.60 & 0.47/\textbf{0.65}/0.60 \\
3 & 0.34/0.33/0.38 & 0.38/\textbf{0.38}/0.38 & 0.38/\textbf{0.39}/0.38 & 0.39/\textbf{0.40}/0.38 & 0.36/\textbf{0.40}/0.38 & 0.35/\textbf{0.42}/0.38 \\
5 & \textbf{0.22}/0.21/0.25 & \textbf{0.27}/0.24/0.25 & \textbf{0.29}/0.25/0.25 & \textbf{0.31}/0.25/0.25 & \textbf{0.28}/0.26/0.25 & 0.27/\textbf{0.28}/0.25 \\
\bottomrule
\end{tabular}
}
\caption{$R^2$ overview on the test set: (\texttt{Sig}/\texttt{VSig}/\texttt{HAR}) for forecasting horizons $q \in \{1,3,5\}$ and several past-window sizes $p$.}
\label{tab:q_by_p_r2_triples}
\end{table}

\subsection{Multivariate time-series classification}\label{sec:classification}
Finally, we present a first application of the Volterra signature kernel
\eqref{eq:sig-kernel-noise} to support vector machine (SVM) classification
on UEA time-series datasets \cite{bagnall2018uea}. The same benchmark was
considered in \cite[Section~5.1]{Salvi2021} for the classical signature
kernel, including variants with a linear and an RBF static kernel
(cf. Remark~\ref{rem:static_kernel_lift}). We extend this setup to the
Volterra signature kernel \eqref{eq:sig-kernel-noise}, based on the same finite state-space kernels from before, that is \eqref{eq:kernel_class_SPX}.  In the accompanying paper
\cite{ii_part}, we present an efficient and accurate solver (\cite[Algorithm~10]{ii_part}) for the system of
PDEs \eqref{eq:PDE1}--\eqref{eq:PDE_3}, which again is supported in \texttt{tensordev}. %See also \cite[Algorithm~10]{ii_part}.

Table~\ref{tab:classification_vsig_std} reports the baselines from
\cite[Table~1]{Salvi2021}, including the linear kernel, the RBF kernel, the
global alignment kernel (GAK), and the signature-PDE kernel. We add two
Volterra signature variants: \texttt{VSig}, using a linear static kernel, and
\texttt{VSig-RBF}, using an additional RBF static kernel. For both variants,
the parameters in \eqref{eq:kernel_class_SPX} are selected by
hyperparameter optimization on the training data.

In \cite{Salvi2021}, the authors additionally optimize over several data
augmentations, including time augmentation and lead-lag transformations. In
our experiments, we use only time augmentation for the Volterra signature
methods. Overall, the results show that the Volterra signature variants are competitive with both classical signature kernels and standard time-series baselines. The plain Volterra signature kernel \texttt{VSig} improves over \texttt{Sig} on most datasets and achieves the best score among all methods on \texttt{Libras} and \texttt{NATOPS}. This suggests that the additional flexibility of the Volterra kernel can already be beneficial without an additional nonlinear static kernel. The RBF-enhanced variant \texttt{VSig-RBF} gives the best performance on \texttt{ArticularyWordRecognition},\texttt{Cricket}, \texttt{RacketSports}, and \texttt{Heartbeat}. At the same time, standard baselines  remain strongest on datasets such as \texttt{FingerMovements}, \texttt{UWaveGestureLibrary} and \texttt{SelfRegulationSCP1}.
\begin{table}
\begin{tabular}{lccccccc}
\toprule
Datasets/Kernels & Linear & RBF & GAK & \texttt{Sig} & \texttt{VSig} & \texttt{Sig-RBF} & \texttt{VSig-RBF} \\
\midrule
ArticularyWordRecognition & 98.0 & 98.0 & 98.0 & 92.3 & 98.7 & 98.3 & \textbf{99.0} \\
BasicMotions & 87.5 & 97.5 & 97.5 & 97.5 & 97.5 & \textbf{100.0} & 97.5 \\
Cricket & 91.7 & 91.7 & \textbf{97.2} & 86.1 & 90.3 & \textbf{97.2} & \textbf{97.2} \\
Libras & 73.9 & 77.2 & 79.0 & 81.7 & \textbf{87.8} & 81.7 & 86.7 \\
NATOPS & 90.0 & 92.2 & 90.6 & 88.3 & \textbf{93.9} & 93.3 & 92.2 \\
RacketSports & 76.9 & 78.3 & 84.2 & 80.2 & 73.0 & 84.9 & \textbf{90.8} \\
FingerMovements & 57.0 & 60.0 & \textbf{61.0} & 51.0 & 54.0 & 58.0 & 51.0 \\
Heartbeat & 70.2 & 73.2 & 70.2 & 72.2 & 71.7 & 73.6 & \textbf{74.6} \\
SelfRegulationSCP1 & 86.7 & 87.3 & \textbf{92.4} & 75.4 & 80.5 & 88.7 & 89.4 \\
UWaveGestureLibrary & 80.0 & \textbf{87.5} & \textbf{87.5} & 83.4 & 85.6 & 87.0 & 86.6 \\
\bottomrule
\end{tabular}
\caption{Classification accuracies (\%) on UCR/UEA datasets. The best test accuracy in each row is highlighted in bold.}
\label{tab:classification_vsig_std}
\end{table}

\bibliographystyle{plain}
\bibliography{bib}

\end{document}